\newtheorem{theorem}{Theorem}
\newtheorem{corollary}{Corollary}[theorem]
\newtheorem{lemma}[theorem]{Lemma}
\newtheorem{proposition}[theorem]{Proposition}
\theoremstyle{definition}
\newtheorem{assumption}{Assumption}
\title{A Fully Single Loop Algorithm for Bilevel Optimization without Hessian Inverse}
\author {
    % Authors
    Junyi Li,\textsuperscript{\rm 1}
    Bin Gu,\textsuperscript{\rm 2}
    Heng Huang \textsuperscript{\rm 1}\thanks{This work was supported by NSF IIS 1845666, 1852606, 1838627, 1837956, 1956002, IIA 2040588.}
}
\begin{document}

\maketitle

\begin{abstract}
	In this paper, we propose a new Hessian inverse free Fully Single Loop Algorithm (FSLA) for bilevel optimization problems. Classic algorithms for bilevel optimization admit a double loop structure which is computationally expensive. Recently, several single loop algorithms have been proposed with optimizing the inner and outer variable alternatively. However, these algorithms not yet achieve fully single loop. As they overlook the loop needed to evaluate the hyper-gradient for a given inner and outer state. In order to develop a fully single loop algorithm, we first study the structure of the hyper-gradient and identify a general approximation formulation of hyper-gradient computation that encompasses several previous common approaches, \emph{e.g.} back-propagation through time, conjugate gradient, \emph{etc.} Based on this formulation, we introduce a new state variable to maintain the historical hyper-gradient information. Combining our new formulation with the alternative update of the inner and outer variables, we propose an efficient fully single loop algorithm. We theoretically show that the error generated by the new state can be bounded and our algorithm converges with the rate of $O(\epsilon^{-2})$. Finally, we verify the efficacy our algorithm empirically through multiple bilevel optimization based machine learning tasks.
\end{abstract}

\section{Introduction}
In this paper, we study the bilevel optimization problem, which includes two levels of optimization: an outer problem and an inner problem. The outer problem depends on the solution of the inner problem. Many machine learning tasks can be formulated as a bilevel optimization problem, such as hyper-parameter optimization~\cite{lorraine2018stochastic}, meta learning~\cite{franceschi2018bilevel}, Stackelberg game model~\cite{ghadimi2018approximation}, equilibrium model~\cite{grazzi2020iteration}, \emph{etc.} However, Bilevel optimization is challenging to solve. The  gradient-based algorithm~\cite{ghadimi2018approximation} requires a double loop structure. For each inner loop, the inner problem is solved with the given outer state. In the outer loop, the hyper-gradient (the gradient \emph{w.r.t} the outer variable) is evaluated based on the solution of the inner loop and the outer state is updated with a gradient-based optimizer (such as SGD). This simple double loop algorithm is guaranteed to converge under mild assumptions and works well for small scale problems. But when the inner problem is in large scale, the double loop algorithm is very slow and becomes impractical.

In fact, hyper-gradient evaluation is the major bottleneck of bilevel optimization. The solution of the inner problem is usually implicitly defined over the outer state, and naturally its gradient \emph{w.r.t} the outer variable is also implicitly defined. To evaluate hyper-gradient, we need to approximate this implicit gradient via an iterative algorithm (the inner loop). Various algorithms have been proposed for hyper-gradient evaluation~\cite{ferris1991finite, ghadimi2018approximation, grazzi2020iteration,lorraine2018stochastic, liao2018reviving}. These methods were designed from various perspectives and based on different techniques, thus they look quite different on the first sight. However, we can use a general formulation to incorporate all these methods under one framework. Roughly, the hyper-gradient evaluation can be expressed as a finite sum of terms defined over a sequence of inner states and momentum coefficients, and these states and coefficients are chosen differently in distinct algorithms. Our general formulation provides a new perspective to help understand the properties of these classic methods. In particular, we derive a sufficient condition such that the general formulation converges to the exact hyper-gradient.

One benefit of our general formulation is to inspire new algorithms for bilevel optimization. Based on our general formulation, we propose a new fully single loop algorithm named as `FSLA'. Compared to previous single loop algorithms~\cite{guo2021stochastic,chen2021single, ji2020provably, khanduri2021near, huang2021biadam}, our FSLA does not require any inner loop. In literature~\cite{ji2020provably, khanduri2021near}, the existing methods either reuse the last hyper-iteration's inner solution as a warm start of the current iteration, or just alternatively update the inner and outer variables with carefully designed learning rate schedule~\cite{hong2020two}. They also utilize the variance-reduction techniques to control the variance~\cite{khanduri2021near}. However, these methods focus on solving the inner problem and pay little attention to the hyper-gradient evaluation process which also requires a loop. In our new algorithm, we introduce a new state $v_k$ to keep track of the historical hyper-gradient information. During each iteration, we perform one step update. We study the bias caused by $v_k$ and theoretically show that the convergence of our new algorithm is with rate $O(\epsilon^{-2})$. The main contributions of this paper can be summarized as follows:
\begin{enumerate}
	\item We propose a general formulation to unify different existing  hyper-gradient approximation methods under the same framework, and identify the sufficient condition on which it converges to the exact hyper-gradient;
	\item We propose a new fully single loop algorithm for bilevel optimization. The new algorithm avoids the need of time-consuming Hessian inverse by introducing a new state to track the historical hyper-gradient information;
	\item We prove that our new algorithm has fast O($\epsilon^{-2}$) convergence rate for the nonconvex-strongly-convex case, and we validate effectiveness of our algorithm over different bilevel optimization based machine learning tasks.
\end{enumerate}

\noindent\textbf{Organization:} The rest of this paper is
organized as follows: in Section 2, we briefly review the recent development of Bilevel optimization; in Section 3, we introduce the general formulation of hyper-gradient approximation and the sufficient condition of convergence; in Section 4, we formally propose our new fully single loop algorithm, FSLA; in Section 5, we present the convergence result of our algorithm; in Section 6, we perform experiments to validate our proposed methods; in Section 7, we conclude and summarize the paper.

\noindent\textbf{Notations:} We use $\nabla_x$ to denote the full gradient \emph{w.r.t.} the variable $x$, where the subscript is omitted if clear from the context, and $\partial_{x}$ denotes the partial derivative. Higher order derivatives follow similar rules. $||\cdot||$ represents $\ell_2$-norm for vectors and spectral norm for matrices. $[K]$ represents the sequence from 0 to $K$. $\Pi_{i=m}^{n}A_i = A_m\times\dots A_n$ if $m \le n$, and $\Pi_{i=m}^{n}A_i = I$ if $m > n$.

\section{Related Works}
Bilevel optimization dates back to the 1960s when ~\citet{willoughby1979solutions} proposed a regularization method, and then followed by many research works~\cite{ferris1991finite,solodov2007explicit,yamada2011minimizing,sabach2017first}. In machine learning community, similar ideas in the name of implicit differentiation were also used in Hyper-parameter Optimization for a long time~\cite{larsen1996design,chen1999optimal,bengio2000gradient, do2007efficient}. However, implicit differentiation needs to compute an accurate inner problem solution in each update of the outer variable, which leads to high computational cost for large-scale problems. Thus, researchers turned to solve the inner problem with a fix number of steps, and computed the gradient \emph{w.r.t} the outer variables with the `back-propagation through time' technique~\cite{domke2012generic, maclaurin2015gradient, franceschi2017forward, pedregosa2016hyperparameter, shaban2018truncated}. \citet{domke2012generic} considered the case when the inner optimizer is gradient-descent, heavy ball, and LBFGS method, and derived their reversing dynamics with the energy models as example; \citet{maclaurin2015gradient} considered momentum-based SGD method, furthermore, \citet{franceschi2017forward} discussed two modes, a forward mode and a backward mode, and compared the trade-off of these two modes in terms of memory-usage and computation efficiency; \citet{pedregosa2016hyperparameter} studied the influence of inner solution errors to the convergence of bilevel optimization; \citet{shaban2018truncated} proposed to truncate the back propagation path to save computation. 

The back-propagation based methods work well in practice, but the number of inner steps usually relies on trial and error, furthermore, it is still expensive to perform multiple inner steps for modern machine learning models with hundreds of millions of parameters. Recently, it witnessed a surge of interest in using implicit differentiation to derive single loop algorithms. \citet{ghadimi2018approximation} introduced BSA, an accelerated approximate implicit differentiation method with Neumann Series. \citet{hong2020two} proposed TTSA, a single loop algorithm with a two-timescale learning rate schedule.
\citet{ji2020provably} and~\citet{ji2021lower} presented warm start strategy to reduce the number of inner steps needed at each iteration. \citet{khanduri2021near} designed SUSTAIN which applied variance reduction technique~\cite{cutkosky2019momentum, huang2021super} over both the inner and the outer variable. \citet{chen2021single} proposed STABLE, a single loop algorithm accumulating the Hessian matrix, and achieved the same order of sample complexity as the single-level optimization problems. \citet{yang2021provably} proposed two algorithms: MRBO and VRBO. The MRBO method uses double variance reduction trick and resembles SUSTAIN, while VRBO is based on SARAH/SPIDER. \citet{huang2021enhanced} proposed BiO-BreD which is also based on the variance reduction technique with a better dependence over the condition number of inner problem. Meanwhile, there are also works utilizing other strategies like penalty methods~\cite{mehra2019penalty}, and also other formulations like the case where the inner problem has non-unique minimizers~\cite{li2020improved}. 

The bilevel optimization has been widely applied to various machine learning applications. Hyper-parameter optimization~\cite{lorraine2018stochastic, okuno2018hyperparameter, franceschi2018bilevel} uses bilevel optimization extensively. Besides, the idea of bilevel optimization has also been applied to meta learning~\cite{zintgraf2019fast, song2019maml, soh2020meta}, neural architecture search~\cite{liu2018darts, wong2018transfer, xu2019pc}, adversarial learning~\cite{tian2020alphagan, yin2020meta, gao2020adversarialnas}, deep reinforcement learning~\cite{yang2018convergent, tschiatschek2019learner}, \emph{etc.} For a more thorough review of these applications, please refer to the Table~2 of the survey paper by~\citet{liu2021investigating}.

% Even though the wide application of bilevel optimization, there still lacks highly-efficient algorithms to solve it. A straightforward gradient-based solution is a double loop algorithm~\cite{ghadimi2018approximation}. The outer loop deals with the outer variable $\lambda$, while the inner loop has two parts: solve the inner problem and 
% calculate the gradient \emph{w.r.t} the outer variable. The double loop algorithm is complicated to implement and runs very slow in practice (e.g. the inner problem is very large scale in hyper-parameter optimization). Recently, efforts are put to develop single-loop algorithms, \emph{e.g.} \cite{ji2020provably} reuses the last iteration's inner solution as a warm start, this reduce the loop to solve the inner problem, however, this also leads to unbounded noise and requires increasingly large batch-size of samples. In addtion, \cite{khanduri2021near} makes use of variance reduction technique to control the variance. However, prior works overlook the complexity caused by hyper-gradient evaluation, which also leads to a loop, and as shown by the theoretical analysis of several work~\cite{ghadimi2018approximation}, the length of this loop needs to go to infinity to guarantee convergence. So in this paper, we provide a fully single loop algorithm for the bilevel optimization. By which, we don't need any inner loops, both for the inner problem solving and hyper-gradient computation, while at the same time we show our algorithm has convergence guarantee.

\section{A General Formulation of Hyper-Gradient Approximation}
\label{sec:geeral_form}
In general, bilevel optimization has the following form:
\begin{equation}
\label{eq:def}
\begin{split}
\underset{\lambda \in \Lambda}{\min}\ \ f(\lambda) \coloneqq F(\lambda, \omega_{\lambda})\quad  \emph{s.t.}\; \omega_{\lambda} = \underset{\omega}{\arg\min} \ G(\lambda, \omega)
\end{split}
\end{equation}
where $F$, $G$ denote the outer and inner problems, $\lambda$, $\omega$ denote the outer and inner variables. Under mild assumptions, the hyper-gradient $\nabla_{\lambda} f$ can be expressed in Proposition~\ref{lemma: exact-hg}:
\begin{proposition}
\label{lemma: exact-hg}
If for any $\lambda \in \Lambda$, $\omega_{\lambda}$ is unique, and $\partial_{\omega^2} G(\lambda, \omega_\lambda)$ is invertible, we have:
\begin{equation}
\label{eq:hp-grad}
\begin{split}
\nabla_{\lambda} f = \partial_{\lambda} F(\lambda, \omega_{\lambda}) + \nabla_{\lambda} \omega_{\lambda} \times \partial_{\omega}F(\lambda, \omega_{\lambda})
\end{split}
\end{equation}
and $\nabla_{\lambda} \omega_{\lambda} = -\partial_{\omega\lambda}G(\lambda, \omega_{\lambda}) \partial_{\omega^2} G(\lambda, \omega_\lambda)^{-1}$.
\end{proposition}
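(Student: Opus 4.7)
The plan is to obtain the formula by combining the chain rule with the implicit function theorem applied to the first-order optimality condition of the inner problem. First I would use the uniqueness of $\omega_\lambda$ and the invertibility of $\partial_{\omega^2} G(\lambda, \omega_\lambda)$ to invoke the implicit function theorem, which guarantees that $\lambda \mapsto \omega_\lambda$ is continuously differentiable in a neighborhood of every point of $\Lambda$. This is the step that actually requires the two hypotheses in the proposition, and it is the main technical obstacle, since without differentiability of $\omega_\lambda$ the symbol $\nabla_\lambda \omega_\lambda$ would not even be defined.

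Once differentiability is established, the next step is to differentiate the identity $\partial_\omega G(\lambda, \omega_\lambda) = 0$, which holds for every $\lambda \in \Lambda$ by the first-order optimality condition of the inner problem. Applying the chain rule to this identity gives
\begin{equation*}
\partial_{\omega\lambda} G(\lambda, \omega_\lambda) + \nabla_\lambda \omega_\lambda \, \partial_{\omega^2} G(\lambda, \omega_\lambda) = 0.
\end{equation*}
Since $\partial_{\omega^2} G(\lambda, \omega_\lambda)$ is invertible by assumption, right-multiplying by its inverse yields the claimed formula
\begin{equation*}
\nabla_\lambda \omega_\lambda = -\partial_{\omega\lambda} G(\lambda, \omega_\lambda) \, \partial_{\omega^2} G(\lambda, \omega_\lambda)^{-1}.
\end{equation*}

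Finally, I would apply the chain rule to the definition $f(\lambda) = F(\lambda, \omega_\lambda)$, treating $\lambda$ as entering both through the first slot of $F$ and through $\omega_\lambda$ in the second slot. This directly produces
\begin{equation*}
\nabla_\lambda f = \partial_\lambda F(\lambda, \omega_\lambda) + \nabla_\lambda \omega_\lambda \, \partial_\omega F(\lambda, \omega_\lambda),
\end{equation*}
and substituting the expression for $\nabla_\lambda \omega_\lambda$ derived above completes the proof. The only real content is the implicit function theorem step; the rest is mechanical application of the chain rule, and I would only remark briefly on the dimensional/transpose conventions used for the mixed partials so that the matrix product in the formula is well defined.
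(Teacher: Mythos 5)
Your proposal is correct and follows exactly the route the paper indicates: the paper's Appendix~B.2 simply states the result and defers to Lemma~2.1 of \citet{ghadimi2018approximation}, whose proof is precisely your argument of applying the implicit function theorem to the optimality condition $\partial_\omega G(\lambda,\omega_\lambda)=0$ and then the chain rule to $f(\lambda)=F(\lambda,\omega_\lambda)$. No gaps; you in fact supply more detail than the paper does.
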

\noindent The proof of this proposition is a direct application of the implicit function theorem (we include it in Appendix~B.2 for completion). Eq.~(\ref{eq:hp-grad}) is hard to evaluate due to the involved matrix inversion, and we instead evaluate it approximately. Various approximate methods are proposed in the literature. The most well-known ones are: Back Propagation through Time (BP), Neumann series (NS) and conjugate gradient descent (CG). They look quite different on the first sight: BP is derived based on the chain rule, NS and CG are based on different ways of approximating $\partial_{\omega^2} G(\lambda, \omega_\lambda)^{-1}$. However, they share a common structure, as stated in the following lemma. We use $\nabla_{\lambda} f$ to represent $\nabla_{\lambda} f(\lambda)$ when the outer state $\lambda$ is clear from the context. In the remainder of this section: we use $[K]$ to denote the sequence, $k$ refers to $k_{th}$ element of $[K]$, while $s$ refers to $s_{th}$ element of sub-sequence $[k]$.
\begin{lemma}
\label{lemma:approx-hg}
Given a positive integer $K$, a sequence of inner variable states $\{\omega_{k}\}$, vectors $\{p_k\}$ and a sequence of coefficients $\{\beta_k\}$ for $k\in[K]$. A general form of approximate hyper-gradient evaluated at state $\lambda$ is: \[\nabla_{\lambda} f_K = \partial_{\lambda} F(\lambda, \omega_{K}) - \sum_{k=0}^{K-1} \beta_k s_k\] where $s_k$ has two modes:
\begin{equation*}
\begin{split}
s_k = \bigg\{\begin{array}{l}
      \partial_{\omega\lambda} G(\lambda, \omega_k)\prod_{s={k+1}}^{K-1}(I - \beta_s\partial_{\omega^2} G(\lambda,\omega_{s}))p_K\\
      \partial_{\omega\lambda} G(\lambda, \omega_K)\prod_{s={k+1}}^{K-1}(I - \beta_s\partial_{\omega^2} G(\lambda,\omega_{s}))p_k
  \end{array}
\end{split}
\end{equation*}
We call these two modes as backward and forward respectively (in terms of $p_k$).
More specifically, we have:
\begin{enumerate}
    \item BP is in backward mode with $\omega_{k} = \hat{\omega}_k$, $p_k = \partial_{\omega} F(\lambda, \hat{\omega}_k)$ and $\beta_k = \eta_k$ for $k \in [K]$. $\{\hat{\omega}_k\}$ are an inner variable sequence generated by the gradient descent algorithm and $\{\eta_k\}$ is the corresponding learning rate sequence;
    \item NS is in backward mode with $\omega_{k} = \hat{\omega}$, $p_k = \partial_{\omega} F(\lambda, \hat{\omega})$ and $\beta_k = \beta$ for $k \in [K]$. $\hat{\omega}$ is an inner variable state and $\beta$ is some constant;
    \item CG is in the forward mode with $\omega_{k} = \hat{\omega}$ for $k \in [K]$. $\hat{\omega}$ is an inner variable state, $\{p_k\}$ and $\{\beta_k\}$ is chosen adaptively by the conjugate steps.
\end{enumerate}
\end{lemma}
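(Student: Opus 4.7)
The plan is to verify, for each of the three classical methods, that its hyper-gradient approximation coincides with the stated general form under the specified choices of $\{\omega_k\}$, $\{p_k\}$, $\{\beta_k\}$ and mode. For each case the argument is a direct algebraic identification, starting either from the algorithm's defining recursion or from Proposition~\ref{lemma: exact-hg}.

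For BP, I would differentiate the inner gradient-descent recursion $\hat{\omega}_{k+1} = \hat{\omega}_k - \eta_k \partial_{\omega} G(\lambda, \hat{\omega}_k)$ with respect to $\lambda$, obtaining the linear Jacobian recurrence
\[
\nabla_{\lambda} \hat{\omega}_{k+1} = \nabla_{\lambda} \hat{\omega}_k\bigl(I - \eta_k \partial_{\omega^2} G(\lambda, \hat{\omega}_k)\bigr) - \eta_k \partial_{\omega\lambda} G(\lambda, \hat{\omega}_k),\quad \nabla_{\lambda} \hat{\omega}_0 = 0.
\]
Unrolling gives $\nabla_{\lambda} \hat{\omega}_K = -\sum_{k=0}^{K-1} \eta_k\, \partial_{\omega\lambda} G(\lambda, \hat{\omega}_k) \prod_{s=k+1}^{K-1}(I - \eta_s\, \partial_{\omega^2} G(\lambda, \hat{\omega}_s))$, and right-multiplying by $\partial_{\omega} F(\lambda, \hat{\omega}_K)$ (from the chain rule applied to $F(\lambda, \hat{\omega}_K)$) yields exactly the backward-mode expression with the stated parameters; note that only the value $p_K = \partial_{\omega} F(\lambda, \hat{\omega}_K)$ actually enters.

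For NS, I would plug the $K$-term Neumann truncation $\bigl(\partial_{\omega^2} G(\lambda, \hat{\omega})\bigr)^{-1} \approx \beta \sum_{k=0}^{K-1}\bigl(I - \beta\, \partial_{\omega^2} G(\lambda, \hat{\omega})\bigr)^k$ into Proposition~\ref{lemma: exact-hg}, collect the $k$-th summand as $\beta\, \partial_{\omega\lambda} G(\lambda, \hat{\omega})\, (I - \beta M)^k\, \partial_{\omega} F(\lambda, \hat{\omega})$ with $M = \partial_{\omega^2} G(\lambda, \hat{\omega})$, and reindex $k \mapsto K-1-k$ so that $(I - \beta M)^k$ becomes $\prod_{s=k+1}^{K-1}(I - \beta M)$, matching the backward mode with $\omega_k = \hat{\omega}$ and $\beta_k = \beta$. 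For CG, I would observe that the forward-mode sum $\sum_{k=0}^{K-1} \beta_k \prod_{s=k+1}^{K-1}(I - \beta_s M) p_k$ is exactly the final iterate $v_K$ of the Richardson-type recursion $v_0 = 0$, $v_{k+1} = (I - \beta_k M) v_k + \beta_k p_k$, then show that CG applied to $M x = \partial_{\omega} F(\lambda, \hat{\omega})$ can be realized as such a recursion by an adaptive choice of $(\beta_k, p_k)$ extracted from CG's conjugate-direction/step-size updates. Substituting $-\partial_{\omega\lambda} G(\lambda, \hat{\omega})\, v_K$ into the exact hyper-gradient formula then gives the forward mode with $\omega_k = \hat{\omega}$ for all $k$.

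The main obstacle will be the CG identification: the telescoping product $\prod_{s=k+1}^{K-1}(I - \beta_s M)$ mixes all later step sizes into every term, whereas standard CG expresses $x_K = \sum_k \alpha_k d_k$ as a plain sum. Matching the two requires exploiting the commutativity of the factors $(I - \beta_s M)$ (they are all polynomials in $M$) and using the Krylov-subspace representation of CG iterates to construct $(\beta_k, p_k)$ so that $v_K = x_K$. BP and NS, by contrast, reduce to mechanical unrolling and a series reindexing, so the bulk of the technical work sits in the CG case.
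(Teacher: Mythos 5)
Your plan matches the paper's proof in all three cases: BP by differentiating and unrolling the inner gradient-descent recursion, NS by truncating the Neumann series in Proposition~\ref{lemma: exact-hg} and reindexing $k\mapsto K-1-k$, and CG by recognizing the forward-mode sum as the unrolled iterate of the recursion $v_{k+1}=(I-\beta_k M)v_k+\beta_k p_k$. The one place where you overestimate the work is CG: the ``main obstacle'' you anticipate dissolves immediately, because substituting the conjugate-direction recursion $d_k = (b - Ax_k) + \gamma_k d_{k-1}$ into the iterate update $x_{k+1}=x_k+\alpha_k d_k$ gives $x_{k+1}=(I-\alpha_k A)x_k+\alpha_k(b+\gamma_k d_{k-1})$, which \emph{is} your Richardson-type recursion with $\beta_k=\alpha_k$ and $p_k=b+\gamma_k d_{k-1}$; no Krylov-subspace or commutativity argument is needed, and this one-line substitution is exactly how the paper handles it. Everything else is as you describe.
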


\begin{proof}
In the proof, we justify that the three cases mentioned above indeed satisfy the proposed general hyper-gradient computation formulation.

\noindent\textit{Case (1)}: Without loss of generality, assume we run a gradient descent optimizer $K$ steps over the inner problem in the BP method. In other words, it solves:
\begin{equation*}
\begin{split}
\underset{\lambda \in \Lambda}{\min}\ \ f_K(\lambda) &\coloneqq F(\lambda, \hat{\omega}_{K}) \\
\emph{s.t.}\ \hat{\omega}_{k} &= \hat{\omega}_{k-1} - \eta_k \nabla G(\lambda, \hat{\omega}_{k-1}), k \in [K]
\end{split}
\end{equation*}
where $\eta_k$ is the learning rate. By the chain rule, we get the hyper-gradient $\nabla_{\lambda} f_{K}$ of this problem:
\begin{equation*}
\begin{split}
\nabla_{\lambda} f_K =& \partial_{\lambda} F(\lambda, \hat{\omega}_K) - \sum_{k=0}^{K-1} \bigg(\eta_k\partial_{\omega\lambda} G(\lambda, \hat{\omega}_k)\times\\
&\prod_{s=k+1}^{K-1}(I - \eta_k\partial_{\omega^2}G (\lambda, \hat{\omega}_{s}))\bigg)\partial_{\omega} F(\lambda, \hat{\omega}_K)
\end{split}
\end{equation*}
It is easy to verify the claim in the lemma;

\noindent\textit{Case (2)}: The NS method is based on the hyper-gradient expression in Proposition~\ref{lemma: exact-hg}, but it assumes access of an approximate solution $\hat{\omega}$ instead of the optimum $\omega_{\lambda}$ and then approximates $\partial_{\omega^2} G(\lambda, \hat{\omega})^{-1}$ with the first $K$ terms of the Neumann series. More precisely:
\begin{equation}
\label{eq:neumann}
\begin{split}
\partial_{\omega^2} G(\lambda, \hat{\omega})^{-1} &\approx  \beta \sum_{k=0}^{K-1}\bigg(I - \beta\partial_{\omega^2} G(\lambda, \hat{\omega})\bigg)^k
\end{split}
\end{equation}
where $\beta$ is a small constant. Then we replace $\omega_{\lambda}$ with $\hat{\omega}$ in Eq.~(\ref{eq:hp-grad}) and combine with Eq.~(\ref{eq:neumann}). We have:
\begin{equation*}
\begin{split}
\nabla_{\lambda} f_K =& \partial_{\lambda} F(\lambda, \hat{\omega}) - \sum_{k=0}^{K-1}\bigg(\beta\partial_{\omega\lambda}G(\lambda, \hat{\omega})\times\\
&\bigg(I - \beta\partial_{\omega^2} G(\lambda, \hat{\omega})\bigg)^k\bigg) \partial_{\omega} F(\lambda, \hat{\omega})
 \end{split}
\end{equation*}
Substitute $k$ with $K-k-1$, it is straightforward to verify the claim in the lemma;

\noindent\textit{Case (3)}: The CG method uses the fact that $x = \partial_{\omega^2} G(\lambda, \hat{\omega})^{-1}\partial_{\omega}F(\lambda, \hat{\omega})$ is the minimizer of the quadratic optimization problem: $\underset{v}{\arg\min} \frac{1}{2}x^TAx - x^Tb$, where $A = \partial_{\omega^2} G(\lambda, \hat{\omega})$ and $b = \partial_{\omega}F(\lambda, \hat{\omega})$. Solving this quadratic problem with the conjugate gradient descent, we have the following update rule: 
\begin{equation*}
x_{k+1} = x_{k} - \alpha_k (p_k) = (I - \alpha_k A)x_{k} + \alpha_k (b +\gamma_k p_{k-1})
\end{equation*}
Then second equality follows the update rule of linear CG algorithm. $\alpha_k, \gamma_k$ are the learning rate and $p_{k}$ is the conjugate directions. Please refer to section 5 by~\citeauthor{nocedal2006numerical} for more details. Then $x_K$ takes the following form:
\begin{equation}
\label{eq:approx-cg}
\begin{split}
x_{K} =& \sum_{k=0}^{K-1}\alpha_k\prod_{s={k+1}}^{K-1}(I - \alpha_s A)(b + \gamma_k p_{k-1})\\
\end{split}
\end{equation}
Substitute the values of $A$ and $b$ into Eq.~(\ref{eq:approx-cg}) and then combine it with Eq.~(\ref{eq:hp-grad}), where we approximate $\partial_{\omega\lambda} G(\lambda, \omega_{\lambda})$ with $\partial_{\omega\lambda} G(\lambda, \hat{\omega})$.
% \begin{equation*}
% \begin{split}
% \nabla_{\lambda} f_{K} &= \partial_{\lambda} E(\lambda, \hat{\omega}_K) - \sum_{k=0}^{K-1}\bigg(\beta_k\partial_{\omega\lambda} \Phi(\lambda, \hat{\omega}_K)\times\\
% &\prod_{s={k+1}}^{K-1}\bigg(I - \beta_s\partial_{\omega^2} G(\lambda, \hat{\omega}_K)\bigg)\bigg)\partial_{\omega}F(\lambda, \hat{\omega}_K)
% \end{split}
% \end{equation*}
It is straightforward to verify the claim in the lemma. This completes the proof.
\end{proof}

The general formulation in the lemma provides a unified view of the BP, NS and CG methods. 
% The NS method is the most straightforward one: it eva Eq.~\ref{eq:hp-grad} . 
Firstly, we can verify that using the NS method is equivalent to solving the quadratic problem (defined in case (3)) with (constant learning rate) the gradient descent. Since the CG method usually performs better than gradient descent in solving linear systems, we expect the CG method requires smaller $K$ than the NS method to reach a given estimation error. Then we compare NS with BP, their difference lies in the sequence $\{\omega_k\}$ and $\{\beta_k\}$: NS uses a single state $\hat{\omega}$($\beta$) , while BP uses a sequence of states $\{\omega_k\}$($\{\beta_k\}$).

It would be interesting to identify sufficient conditions for $\{\beta_k\}$, $\{\omega_k\}$ and $\{p_k\}$ such that the general formulation converges to the exact hyper-gradient $\nabla_{\lambda} f$. In fact, we have the following lemma:
\begin{lemma}
\label{lemma:conv_cond_main}
Suppose we denote $m_k = \prod_{s=0}^{k}(1 - \mu_G\beta_s)$, $e_{\omega, k} = ||\omega_k - \omega_{\lambda}||$, $e_{p,k} = ||p_k - \partial_{\omega} F(\lambda, \omega_{\lambda})||$ for $k\in [K]$. Then if $\underset{K\to\infty}{\lim} m_K = 0$, $\underset{k\to\infty}{\lim} e_{\omega,K} = 0$, $\underset{K\to\infty}{\lim} m_K\sum_{k=0}^{K-1}\beta_k e_{\omega,k}/m_k$ is finite, in addition, for the backward mode: $\underset{K\to\infty}{\lim} e_{p,K} = 0$; for the forward mode: $\underset{K\to\infty}{\lim}  m_K\sum_{k=0}^{K-1}\beta_k e_{p,k}/m_k$ is finite.
Then we have $\nabla_{\lambda} f_K \to \nabla_{\lambda} f$, when $K\to\infty$, where $\nabla_{\lambda} f_K$ is defined in Lemma~\ref{lemma:approx-hg}.
\end{lemma}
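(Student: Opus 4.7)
My plan is to compare $\nabla_\lambda f_K$ to an \emph{idealized} sum in which every $\omega_k$ is replaced by $\omega_\lambda$ and every $p_k$ by $p^\ast := \partial_\omega F(\lambda,\omega_\lambda)$. Concretely, set $A := \partial_{\omega^2} G(\lambda,\omega_\lambda)$ and define
\begin{equation*}
T_K \;:=\; \partial_{\omega\lambda} G(\lambda,\omega_\lambda) \sum_{k=0}^{K-1}\beta_k \prod_{s=k+1}^{K-1}(I - \beta_s A)\, p^\ast .
\end{equation*}
The elementary telescoping identity
\begin{equation*}
\sum_{k=0}^{K-1}\beta_k \prod_{s=k+1}^{K-1}(I - \beta_s A) \;=\; A^{-1}\Bigl(I - \prod_{s=0}^{K-1}(I - \beta_s A)\Bigr),
\end{equation*}
together with $A \succeq \mu_G I$ and the spectral bound $\bigl\|\prod_{s=0}^{K-1}(I-\beta_s A)\bigr\| \le m_{K-1} \to 0$, shows that $T_K$ converges to the exact matrix--vector term $\partial_{\omega\lambda} G(\lambda,\omega_\lambda) A^{-1} p^\ast$ appearing in Proposition~\ref{lemma: exact-hg}. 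Combined with continuity of $\partial_\lambda F$ at $\omega_\lambda$ and the hypothesis $e_{\omega,K}\to 0$, this disposes of the ``ideal'' part $\partial_\lambda F(\lambda,\omega_K) - T_K \to \nabla_\lambda f$.

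The remaining task is to show $\sum_k \beta_k s_k - T_K \to 0$. I would invoke standard Lipschitz-smoothness of $\partial_{\omega\lambda}G$, $\partial_{\omega^2}G$, and $\partial_\omega F$, and decompose the error into three pieces: (i) replacing the outer factor $\partial_{\omega\lambda} G(\lambda,\omega_k)$ (backward) or $\partial_{\omega\lambda}G(\lambda,\omega_K)$ (forward) by $\partial_{\omega\lambda} G(\lambda,\omega_\lambda)$; (ii) replacing each factor $\partial_{\omega^2} G(\lambda,\omega_s)$ inside the product by $A$; and (iii) replacing $p_K$ (backward) or $p_k$ (forward) by $p^\ast$. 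For piece (ii), I would apply the product-perturbation telescope
\begin{equation*}
\prod_s A_s - \prod_s \bar A_s \;=\; \sum_r \Bigl(\prod_{s<r}A_s\Bigr)(A_r-\bar A_r)\Bigl(\prod_{s>r}\bar A_s\Bigr),
\end{equation*}
combined with the uniform product bound $\bigl\|\prod_{s=k+1}^{K-1}(I - \beta_s\partial_{\omega^2} G(\lambda,\omega_s))\bigr\| \le m_{K-1}/m_k$ (valid provided each $\beta_s$ is small enough for the individual factor to be a contraction) and the Lipschitz estimate $\|A_r-\bar A_r\| \le \beta_r L\, e_{\omega,r}$.

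After re-indexing the double sums that arise, each of (i)--(iii) yields a bound proportional to $m_K\sum_{k=0}^{K-1}\beta_k e_{\omega,k}/m_k$, a single residual factor $e_{\omega,K}$, plus, for (iii), either $m_K\|p_K-p^\ast\|$ (backward mode) or $m_K\sum_{k=0}^{K-1}\beta_k e_{p,k}/m_k$ (forward mode). These are exactly the quantities that the four hypotheses of the lemma force to stay bounded or vanish, so in the limit each of the three contributions tends to $0$ and we conclude $\nabla_\lambda f_K \to \nabla_\lambda f$. The main obstacle I anticipate is piece (ii): the product-perturbation telescope produces a genuine double sum over the substitution index $r$ and the outer summation index $k$, and careful combinatorial bookkeeping is required to collapse it into the single weighted tail $m_K\sum_k \beta_k e_{\omega,k}/m_k$ singled out in the lemma; once that collapse is in hand, the rest is a routine triangle-inequality argument.
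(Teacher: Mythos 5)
Your argument is correct in outline, but it takes a genuinely different route from the paper. The paper never freezes the whole sum at the optimum: it instead observes that $A_K=\sum_{k=0}^{K-1}\beta_k\partial_{\omega\lambda}G(\lambda,\omega_k)\prod_{s=k+1}^{K-1}(I-\beta_s\partial_{\omega^2}G(\lambda,\omega_s))$ obeys the one\-/step recursion $A_{k+1}=A_k(I-\beta_k\partial_{\omega^2}G(\lambda,\omega_k))+\beta_k\partial_{\omega\lambda}G(\lambda,\omega_k)$, identifies $A^*=\partial_{\omega\lambda}G(\lambda,\omega_\lambda)\partial_{\omega^2}G(\lambda,\omega_\lambda)^{-1}$ as the fixed point of the corresponding exact recursion, derives the contraction $\|A_{k+1}-A^*\|\le(1-\mu_G\beta_k)\|A_k-A^*\|+C_1\beta_k e_{\omega,k}$, and unrolls by dividing by $m_k$ and summing (and analogously for $v_K$ in the forward mode). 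That dynamic viewpoint keeps everything a single sum and, importantly, is reused verbatim to motivate the $v_k$ update in the algorithm and the error recursions in the convergence proof. Your static perturbation of the frozen sum $T_K$ is equally valid: the telescoping identity $\sum_k\beta_k\prod_{s>k}(I-\beta_sA)=A^{-1}(I-\prod_s(I-\beta_sA))$ is exact, and your anticipated double-sum collapse in piece (ii) does go through, since exchanging the order of summation and using $\sum_{k=0}^{r-1}\beta_k/m_k=\mu_G^{-1}(1/m_{r-1}-1)\le 1/(\mu_G m_{r-1})$ reduces the double sum to $\tfrac{L}{\mu_G}m_{K-1}\sum_{r}\beta_r e_{\omega,r}/m_r$, which is exactly the hypothesized quantity; what your route buys is a more transparent link to the Neumann-series interpretation of the approximation. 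Two small corrections: in the backward mode your piece (iii) should be bounded by $\bigl\|\sum_k\beta_k\partial_{\omega\lambda}G(\cdot)\prod(\cdot)\bigr\|\,e_{p,K}\le(C_{G,\omega\lambda}/\mu_G)\,e_{p,K}$ rather than $m_K\|p_K-p^*\|$ --- the operator acting on $p_K$ tends to $A^*\ne 0$, so it is $O(1)$, not $O(m_K)$; this is immaterial since $e_{p,K}\to 0$ is assumed. And both your proof and the paper's implicitly require $\beta_s$ small enough that $\|I-\beta_s\partial_{\omega^2}G\|\le 1-\mu_G\beta_s$; you flag this explicitly, the paper does not.
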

We defer the proof of Lemma~\ref{lemma:conv_cond_main} in in Appendix~C, we show some basic ideas here. Firstly, it is straightforward to show $\partial_{\lambda} F(\lambda, \omega_{K}) \to \partial_{\lambda} F(\lambda, \omega_{\lambda})$ by using the smoothness assumption and the condition $\omega_{K} \to \omega_{\lambda}$. For the term $\sum_{k=0}^{K-1} \beta_k s_k$, we take the backward mode as an example (the forward mode follows similar idea). We denote $A_K$ and $A^*$ as:
\begin{equation}
\label{eq:backward_recur}
	A_K = \sum_{k=0}^{K-1}\beta_k\partial_{\omega\lambda} G(\lambda, \omega_{k})\prod_{s={k+1}}^{K-1}\left(I - \beta_s\partial_{\omega^2} G(\lambda,\omega_s)\right)
\end{equation}
and $A^* = \partial_{\omega\lambda}G(\lambda, \omega_{\lambda}) \partial_{\omega^2} G(\lambda, \omega_\lambda)^{-1}$. To show $A_K \to A^*$, we use the recursive relation of $A_k$ and $A^*$:
\begin{equation}
\label{eq:recursive}
\begin{split}
A_{k+1} =& A_k\left(I - \beta_k\partial_{\omega^2} G(\lambda,\omega_k)\right) + \beta_k\partial_{\omega\lambda} (\lambda, \omega_{k})\\
=& (1 -\beta_k)A_k + \beta_k(A_k( I  -\partial_{\omega^2} G(\lambda,\omega_k))\\
&+ \partial_{\omega\lambda} G(\lambda, \omega_{k}))\\
\end{split}
\end{equation}
and $A^{*} = A^{*}(I - \partial_{\omega^2} G(\lambda,\omega_{\lambda})) + \partial_{\omega\lambda} G(\lambda, \omega_{\lambda})$.
Based on the recursive relation above and some linear algebra derivation, we get:
\begin{equation}
\label{eq:progress}
\begin{split}
||A_{k+1} - A^{*}||
\le& (1- \mu_G\beta_k)||A_k - A^*|| + C_{1}\beta_k e_{\omega,k}
\end{split}
\end{equation}
It is straightforward to verify that the conditions in the lemma guarantee the convergence of Eq.~(\ref{eq:progress}). As shown in Eq.~(\ref{eq:progress}), the momentum coefficient $\beta_k$ represents the trade-off between the progress and the induced error in one iteration: Larger $\beta_k$ leads to better contraction factor $\epsilon$, but also bigger bias term $e_{\omega,k}$. In fact, we can get meaningful convergence rate of $\nabla_{\lambda} f_K$ for several special choices of sequences. As shown in Corollary~\ref{corollary:choice1_main} and Corollary~\ref{corollary:choice2_main}:
\begin{corollary}
\label{corollary:choice1_main}
Given a positive integer $K$, inner state $\hat{\omega}_K$ and constant $\beta$. Then we set $\omega_{k} = \hat{\omega}_K$, $p_k = \partial_{\omega} F(\lambda, \hat{\omega}_K)$ and $\beta_k = \beta$ for $k \in [K]$. If $\exists\ \epsilon \in (0,1)$, such that $\beta_k \in (\epsilon/\mu_G, 1/\mu_G)$, then we have:
\begin{equation*}
   ||\nabla_{\lambda} f_K - \nabla_{\lambda} f|| = O(e_{\omega,K})
\end{equation*}
% Further suppose we have a sequence $\{\hat{\omega}_K\}$ which has $\underset{K\to\infty}{\lim} e_{\omega,K} = 0$, then $||\nabla_{\lambda} f_K - \nabla_{\lambda} f||\to 0$ when $K\to\infty$.
\end{corollary}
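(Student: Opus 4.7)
The plan is to specialize the error recursion sketched for Lemma~\ref{lemma:conv_cond_main} (in particular Eq.~(\ref{eq:progress})) to the constant-parameter setting prescribed here. With $\omega_k \equiv \hat\omega_K$, $p_k \equiv \partial_\omega F(\lambda, \hat\omega_K)$ and $\beta_k \equiv \beta$, the approximate hyper-gradient given by Lemma~\ref{lemma:approx-hg} collapses to the Neumann-series form identified in Case~(2), namely
\[
\nabla_\lambda f_K = \partial_\lambda F(\lambda, \hat\omega_K) - A_K\,\partial_\omega F(\lambda, \hat\omega_K),
\]
where $A_K$ is the matrix defined in Eq.~(\ref{eq:backward_recur}) with every $\omega_s$ replaced by $\hat\omega_K$.

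First I would compare this against the exact hyper-gradient from Proposition~\ref{lemma: exact-hg} by adding and subtracting terms evaluated at $\omega_\lambda$, producing three error pieces: (i) $\partial_\lambda F(\lambda, \hat\omega_K) - \partial_\lambda F(\lambda, \omega_\lambda)$, (ii) $(A_K - A^*)\,\partial_\omega F(\lambda, \hat\omega_K)$, and (iii) $A^*\bigl(\partial_\omega F(\lambda, \hat\omega_K) - \partial_\omega F(\lambda, \omega_\lambda)\bigr)$. Pieces (i) and (iii) are $O(e_{\omega,K})$ by smoothness of $F$, combined with the standard bound $||A^*|| = O(1/\mu_G)$ that follows from $\mu_G$-strong convexity of $G$ in $\omega$ and boundedness of $\partial_{\omega\lambda} G$.

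The main work is bounding piece (ii). Instantiating Eq.~(\ref{eq:progress}) with $\beta_k \equiv \beta$ and $e_{\omega,k} \equiv e_{\omega,K}$ gives $||A_{k+1}-A^*|| \le (1-\mu_G\beta)\,||A_k-A^*|| + C_1\beta\, e_{\omega,K}$. The hypothesis $\beta \in (\epsilon/\mu_G, 1/\mu_G)$ forces $1-\mu_G\beta \in (0, 1-\epsilon)$, so unrolling from $A_0 = 0$ and summing the resulting geometric series gives
\[
||A_K - A^*|| \;\le\; (1-\epsilon)^K\, ||A^*|| + \frac{C_1}{\mu_G}\,e_{\omega,K}.
\]
For $K$ sufficiently large---or, equivalently, whenever $\hat\omega_K$ is produced by an inner solver whose residual $e_{\omega,K}$ decays no faster than $(1-\epsilon)^K$, which is the case for standard gradient descent on a strongly convex inner problem---the first term is absorbed into the second, yielding $||A_K - A^*|| = O(e_{\omega,K})$. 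Substituting back, together with the bounds on (i), (iii), and the boundedness of $\partial_\omega F$ at $\hat\omega_K$, gives the claim.

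The main obstacle is really just careful bookkeeping: collecting the Lipschitz and boundedness constants of $F$, $G$ and their derivatives (presumably appearing among the standard bilevel assumptions in Appendix~B) into the constant hidden by the $O(\cdot)$. A subtler but less technical point is justifying that the geometric transient $(1-\epsilon)^K\,||A^*||$ is dominated by $e_{\omega,K}$; this is immediate under either interpretation of the big-$O$ mentioned above but deserves an explicit remark in the write-up.
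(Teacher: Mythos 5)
Your proposal is correct and follows essentially the same route as the paper: the same triangle-inequality decomposition of $\nabla_\lambda f_K - \nabla_\lambda f$ into the three pieces, the same specialization of the recursion $\|A_{k+1}-A^*\| \le (1-\mu_G\beta)\|A_k-A^*\| + C_1\beta\, e_{\omega,K}$ with contraction factor at most $1-\epsilon$, the same geometric-series unrolling, and the same explicit caveat that the transient $(1-\epsilon)^{K}$ term must be dominated by $e_{\omega,K}$ (the paper phrases this as requiring $e_{\omega,k}$ to converge slower than a linear rate). The only discrepancies are cosmetic bookkeeping (the geometric sum yields a constant of order $C_1/(\mu_G\epsilon)$ rather than $C_1/\mu_G$, and the paper keeps $\|A_0-A^*\|$ general rather than setting $A_0=0$), neither of which affects the $O(e_{\omega,K})$ conclusion.
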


\begin{corollary}
\label{corollary:choice2_main}
Given a positive integer $K$, we have sequence $\{\hat{\omega}_k\}$ and $\{\hat{\beta}_k\}$ for $k\in[K]$. We pick $\omega_{k} = \hat{\omega}_k$, $p_k = \partial_{\omega} F(\lambda, \hat{\omega}_K)$ and $\beta_k = \hat{\beta}_k$ for $k \in [K]$. Suppose we have $\beta_k = O(k^{-1})$ and $e_{\omega,k} =O(k^{-0.5})$, then:
\begin{equation*}
   ||\nabla_{\lambda} f_K - \nabla_{\lambda} f|| = O(K^{-{0.5}})
\end{equation*}
% Further suppose we have a sequence $\{\hat{\omega}_K\}$ which has $\underset{K\to\infty}{\lim} e_{\omega,K} = 0$, then $||\nabla_{\lambda} f_K - \nabla_{\lambda} f||\to 0$ when $K\to\infty$.
\end{corollary}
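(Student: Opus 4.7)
The plan is to invoke the one-step recursion already established en route to Lemma~\ref{lemma:conv_cond_main} (namely Eq.~(\ref{eq:progress})) and upgrade the qualitative convergence into the quantitative rate $O(K^{-1/2})$ using the postulated decays $\beta_k = O(k^{-1})$ and $e_{\omega,k} = O(k^{-1/2})$. I would first observe that since $p_k = \partial_\omega F(\lambda,\hat{\omega}_K)$ does not depend on $k$, this is the backward mode of Lemma~\ref{lemma:approx-hg}; by $L$-smoothness of $F$ we immediately get $e_{p,K} \le L\, e_{\omega,K} = O(K^{-1/2})$, so the $p$-side hypothesis of Lemma~\ref{lemma:conv_cond_main} is free of charge, and the remaining work is to sharpen the control of $\|A_K - A^*\|$.

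The central step is to unroll the recursion $a_{k+1} \le (1-\mu_G\beta_k)\,a_k + C_1 \beta_k e_{\omega,k}$, with $a_k := \|A_k - A^*\|$, into the closed form
\[
a_K \;\le\; m_{K-1}\, a_0 \;+\; C_1\, m_{K-1}\sum_{k=0}^{K-1} \frac{\beta_k\, e_{\omega,k}}{m_k}.
\]
Treating $\beta_s = \Theta(1/s)$ (the corollary implicitly uses both upper and lower bounds on $\beta_s$, otherwise $m_K$ need not vanish), I would then apply $\log(1-x)\le -x$ to obtain $m_K = \Theta(K^{-\alpha})$ with $\alpha := \mu_G c$, and symmetrically $1/m_k = \Theta(k^{\alpha})$. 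Provided the leading constant $c$ of the step-size is chosen so that $\alpha > 1/2$, we get $\beta_k e_{\omega,k}/m_k = O(k^{\alpha - 3/2})$, and integral comparison yields
\[
m_{K-1}\sum_{k=1}^{K-1}\frac{\beta_k\, e_{\omega,k}}{m_k} \;=\; O\bigl(K^{-\alpha}\cdot K^{\alpha - 1/2}\bigr) \;=\; O(K^{-1/2}).
\]
The leading term $m_{K-1}a_0 = O(K^{-\alpha})$ decays at least as fast, so altogether $\|A_K - A^*\| = O(K^{-1/2})$.

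Finally I would decompose
\[
\nabla_\lambda f_K - \nabla_\lambda f \;=\; \bigl(\partial_\lambda F(\lambda,\omega_K) - \partial_\lambda F(\lambda,\omega_\lambda)\bigr) - (A_K - A^*)\,p_K - A^*\bigl(p_K - \partial_\omega F(\lambda,\omega_\lambda)\bigr),
\]
and bound each summand: the first and third by $L\,e_{\omega,K} = O(K^{-1/2})$ via smoothness of $F$; the second by $\|A_K - A^*\|\cdot\|p_K\| = O(K^{-1/2})$, using that $\|p_K\|$ is uniformly bounded (since $\partial_\omega F$ is continuous on a neighbourhood of $\omega_\lambda$ and $\omega_K \to \omega_\lambda$). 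Summing yields the claimed rate. The main obstacle I foresee is the exponent-matching in the weighted sum: the bound rests delicately on $\alpha > 1/2$, so careful tracking of the implicit constants is required, and one should note that in the boundary case $\alpha = 1/2$ an unavoidable logarithmic factor appears that must be either absorbed or ruled out by tightening the step-size constant.
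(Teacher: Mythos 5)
Your proposal is correct and follows essentially the same route as the paper: both arguments rest on the recursion $\|A_{k+1}-A^*\| \le (1-\mu_G\beta_k)\|A_k-A^*\| + C_1\beta_k e_{\omega,k}$ from Lemma~\ref{lemma:conv_cond_main} and the same final decomposition of $\nabla_\lambda f_K - \nabla_\lambda f$ into a smoothness term, an $(A_K-A^*)$ term, and an $e_{p,K}$ term; the only difference is that the paper solves the recursion by induction on the ansatz $\|A_k-A^*\|\le \theta_3/(k+1)^{1/2}$, whereas you unroll it and use an integral comparison, which is an equivalent computation. Your observation that the hypothesis $\beta_k = O(k^{-1})$ must implicitly carry a lower bound with exponent $\mu_G\theta_1 > 1/2$ is accurate --- the paper's appendix proof assumes exactly $\theta_1 > 1/(2\mu_G)$, so this is a hidden assumption of the corollary rather than a gap in your argument.
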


In fact, the sequences chosen in  Corollary~\ref{corollary:choice1_main} correspond to that in the NS method, and we show that if we choose $\beta$ properly, the hyper-gradient estimation converges in the rate of $e_{\omega, K}$. The BP method corresponds to Corollary~\ref{corollary:choice2_main}. Suppose we optimize the inner problem with stochastic gradient descent and learning rate $\beta_k = O(1/k)$, we get $e_{\omega,k} = O(1/\sqrt{k})$. This satisfies the condition in the corollary. 
% Then based on the corollary, we have the estimation error converges in the rate of $O(1/\sqrt{K})$. 
To the best of our knowledge, this is first complexity result of the stochastic case. \citeauthor{grazzi2020iteration} considered the linear convergence case for BP method. We introduce some more examples of the application of Lemma~\ref{lemma:conv_cond_main}  in the Appendix~C.

\section{New Fully Single Loop Algorithm (FSLA)}
In this section, we introduce a new single loop algorithm for bilevel optimization. In section~\ref{sec:geeral_form}, we identify a general formulation of hyper-gradient approximation in Lemma~\ref{lemma:approx-hg}: $\nabla_{\lambda} f_K =\partial_{\lambda} F(\lambda, \omega_{K}) - \sum_{k=0}^{K-1} \beta_k s_k$, where $s_k$ has two modes: forward mode and backward mode. For both modes, they can be expressed by a recursive equation. In the backward mode, we write a recursive relation in terms of $A_k$ as defined in Eq.~(\ref{eq:backward_recur}) and Eq.~(\ref{eq:recursive}). Similarly, we define $v_K = \sum_{k=0}^{K-1}\beta_k\prod_{s={k+1}}^{K-1}\left(I - \beta_s\partial_{\omega^2} G(\lambda,\omega_s)\right)p_k$ in the forward mode, and derive a recursive relation as:
\begin{equation}
\label{eq:recursive_forward}
\begin{split}
v_{k+1} =& \left(I - \beta_k\partial_{\omega^2} G(\lambda,\omega_k)\right)v_k + \beta_k p_k\\
\end{split}
\end{equation}
Based on this observation, we can propose a new single loop bilevel optimization algorithm without Hessian Inverse. In previous literature, researchers maintain a inner state $\omega_k$ to avoid the inner loop solving $\omega_{\lambda}$ for each new outer state $\lambda$. On top of this, we maintain a new state $v_k$ and evaluate the hyper-gradient as follows:
\begin{equation}
\label{eq:recur-eq}
\begin{split}
v_{k} &= \beta_k\partial_{\omega} F(\lambda, \omega_k) + (I - \beta_k\partial_{\omega^2} G(\lambda,\omega_k))v_{k-1}\\
\nabla_{\lambda} f_k &= \partial_{\lambda} F(\lambda, \omega_{k}) - \partial_{\omega\lambda} G(\lambda, \omega_{k})v_{k}\\
\end{split}
\end{equation}
Note we set $p_k = \partial_{\omega} F(\lambda, \omega_k)$. Then we alternatively update $\omega_k$, $v_k$ and $\lambda_k$, and achieve  a fully single loop algorithm without Hessian-Inverse. Note that it is also possible to keep track of $A_k$, but then we need to store a matrix, which cost more storage. More formally, we get the following new alternative update rule:
\begin{equation}
\label{eq:single-loop}
\begin{split}
\lambda_{k} &= \lambda_{k-1} - \alpha_{k-1}\nabla_{\lambda} f_{k-1}\\
\omega_{k} &= \omega_{k-1} - \tau_k \partial_{\omega}G(\lambda_{k}, \omega_{k-1})\\
v_{k} &= \beta_k\partial_{\omega} F(\lambda_{k}, \omega_{k}) + (I - \beta_k\partial_{\omega^2} G(\lambda_{k},\omega_{k}))v_{k-1}\\
\nabla_{\lambda} f_k &= \partial_{\lambda} F(\lambda_{k}, \omega_{k}) - \partial_{\omega\lambda} G(\lambda_{k}, \omega_{k})v_{k}
\end{split}
\end{equation}
where $\tau_k$ and $\alpha_k$ are learning rates for inner and outer updates. As a comparison, existing single loop algorithms recompute $\nabla_{\lambda} f$ from scratch for each new $\lambda$, while our algorithm performs one step update over $v_k$. What's more, we can express $\nabla_{\lambda} f_k$ in Eq.~(\ref{eq:single-loop}) as follows:
\begin{equation*}
    \begin{split}
        \nabla_{\lambda} f_{K} (\lambda_K) =& \partial_{\lambda} F(\lambda_{K}, \omega_{K}) - \sum_{k=0}^{K-1} \beta_k       \partial_{\omega\lambda} G(\lambda_{K}, \omega_{K}) \times\\
        &\prod_{s={k+1}}^{K-1}(I - \beta_s\partial_{\omega^2} G(\lambda_{s},\omega_{s}))\partial_{\omega} F(\lambda_{k}, \omega_k)
    \end{split}
\end{equation*}
This almost fits the forward mode of the general formulation in Lemma~\ref{lemma:approx-hg} except that the outer state is a sequence $\{\lambda_k\}$. 
% As a result, we compute $\nabla_{\lambda} f(\lambda_K)$ based on two sequences $\{\lambda_k\}$ and $\{\omega_k\}$. 
In Lemma~\ref{lemma:conv_cond_main}, we show that $\omega_{K} \to \omega_{\lambda}$ is a sufficient condition of $\nabla_{\lambda} f_K(\lambda) \to \nabla_{\lambda} f(\lambda)$. It is reasonable to guess that if $\lambda_{K} \to \lambda$ and $\omega_{K} \to \omega_{\lambda}$, the convergence is also guaranteed. 
% But on the other hand, we also need $\nabla_{\lambda} f_K(\lambda_K) \to \nabla_{\lambda} f(\lambda)$ to guarantee $\lambda_{K} \to \lambda$.
But the analysis is more challenging than Lemma~\ref{lemma:conv_cond_main} as the three terms $\lambda_k$, $\omega_k$ and $\nabla_{\lambda} f_k$ entangle with each other. However, we will show in the next section that $\lambda_K \to \lambda^*$ when $K \to \infty$, where $\lambda^*$ is the optimal point of the outer function $f(\lambda)$.

Finally in Algorithm~\ref{alg:example}, we provide the pseudo code of our fully single loop algorithm. Compared to Eq.~(\ref{eq:single-loop}), we assume access of the stochastic estimate of related values. What's more, we also maintain a momentum of the hyper-gradient $d_k$ with variance reduction correction in Line 10. This term is used to control the stochastic noise. The momentum-based variance reduction technique is recently widely used in the single level stochastic optimization, such as STORM~\cite{cutkosky2019momentum}. 
% Finally, we emphasize that our main focus is to simplify the hyper-gradient estimation, our algorithm is compatible with various existing single loop algorithm as discussed in the Related Work Section.

\begin{algorithm}[tb]
	\caption{Fully Single Loop Bilevel Optimization Algorithm (FSLA)}
	\label{alg:example}
	\begin{algorithmic}[1]
		\STATE {\bfseries Input:} Initial state $\lambda_0 \in \Lambda$, $\omega_0 \in R^n$; the number of hyper-iterations $K$; constants $c_{\tau}$, $c_{\beta}$, $c_{\eta}$, $\delta$
		\FOR{$k\leftarrow0$ {\bfseries to} $K - 1$}
		\STATE $\alpha_k \leftarrow \delta/\sqrt{k}$, $\lambda_{k+1} \leftarrow \lambda_k - \alpha_k d_k$
		\STATE $\tau_{k+1} \leftarrow c_{\tau}\alpha_k$, $\beta_{k+1} \leftarrow c_{\beta}\alpha_k$ and $\eta_{k+1} \leftarrow c_{\eta}\alpha_k$
		\STATE Sample $\xi_{k+1} (\xi_{k+1,1}-\xi_{k+1,5})$
		\STATE $\omega_{k+1} \leftarrow \omega_k - \tau_{k+1} \partial_{\omega} G(\lambda_{k+1}, \omega_k; \xi_{k+1,1})$
		\STATE $v_{k+1} \leftarrow \beta_{k+1}\partial_{\omega} F(\lambda_{k+1}, \omega_{k}; \xi_{k+1,2}) +  (I - \beta_{k+1}\partial_{\omega^2} G(\lambda_{k+1},\omega_k; \xi_{k+1,3}))v_k$
		\STATE $\nabla f_{k+1}(\xi_{k+1}) \leftarrow \partial_{\lambda} F(\lambda_{k+1}, \omega_{k+1}; \xi_{k+1,4}) - \partial_{\omega\lambda} G(\lambda_{k+1}, \omega_{k+1}; \xi_{k+1,5})v_{k+1}$
		\STATE $d_{k+1} \leftarrow \nabla f_{k+1}(\xi_{k+1}) + (1 - \eta_{k+1}) (d_{k} - \nabla f_{k} (\xi_{k+1}))$; 
		\ENDFOR
	\end{algorithmic}
\end{algorithm}

\section{Theoretical Analysis}
In this section, we prove the convergence of our proposed single loop bilevel optimization algorithm. We consider the non-convex-strongly-convex case. We first briefly state some assumptions needed in our theoretical analysis for the convenience of discussion. A formal description of the assumptions is in the Appendix~A.
\begin{assumption}
	\label{outer_assumption} (Outer Function)
	Function $F$ is possibly non-convex, Lipschitz continuous with constant $L_{F,\lambda}$ (\emph{w.r.t $\lambda$}) and $L_{F, \omega}$ (\emph{w.r.t $\omega$}), and has bounded gradient with constant $C_F$;
\end{assumption}

\begin{assumption}
	\label{inner_assumption} (Inner Function)
	Function G is continuously twice differentiable, $\mu_G$-strongly convex w.r.t $\omega$ for any given $\lambda$, Lipschitz continuous with constant $L_{G,\omega}$ (\emph{w.r.t $\omega$}). For higher-order derivatives, we have:
	\begin{itemize}
		\item[a)] $\|\nabla_{\omega\lambda}^2 G(\lambda, \omega)\| \le C_{G,\omega\lambda}$ for some constant $C_{G,\omega\lambda}$
		\item[b)] $\nabla_{\omega\lambda}^2 G(\lambda,\omega)$ and $\nabla_{\omega}^2 G(\lambda,\omega)$ are Lipschitz continuous with constant $L_{G,\omega\lambda}$ and $L_{G,\omega\omega}$ respectively
	\end{itemize}
\end{assumption}
\begin{assumption}
	\label{noise_assumption} (Bounded Variance)
	We have an unbiased stochastic oracle with bounded variance for estimating the related properties (gradient and Hessian), \emph{e.g.} $E[\partial_\lambda F(\lambda, \omega; \xi)] = \partial_\lambda F(\lambda, \omega)$ and $var(\partial_\lambda F(\lambda, \omega; \xi)) \le \sigma^2$
\end{assumption}

We first bound the one iteration progress in the following lemma, which follows the usage of smoothness assumption.
\begin{lemma}
	\label{lemma:one-iter}
	Under Assumptions A, B, C, we have:
	\begin{equation*}
	\begin{split}
	&E[f(\lambda_{k+1})]\\
	\le& f(\lambda_k) - \frac{\alpha_k}{2}||\nabla f(\lambda_k)||^2 + \alpha_k\Gamma_2^2E[||\omega_{k} - \omega_{\lambda_k}||^2]\\
	&+ 4\alpha_k\Gamma_1^2E[||v_{k} - v_{\lambda_{k}}||^2] + \alpha_k E[||\nabla f_k -  d_k||^2]\\
	&- \frac{\alpha_k}{2}(1 - \alpha_k L_f) E[||d_k||^2]\\
	\end{split}
	\end{equation*}
	where $v_{\lambda} = (I - \partial_{\omega} \Phi(\lambda, \omega_\lambda))^{-1} \partial_{\omega}F(\lambda, \omega_{\lambda})$ and $\Gamma_1^2 = C_{G,\omega\lambda}^2$, $\Gamma_2^2= 2L_{F,\lambda}^2 + 4C_{F,\omega}^2L_{G,\omega\lambda}^2/\mu_G^2$ are constants.
\end{lemma}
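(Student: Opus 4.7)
The plan is a standard descent-lemma argument for the outer update $\lambda_{k+1}=\lambda_k-\alpha_k d_k$, with the twist that $d_k$ is only an approximation of the true hyper-gradient $\nabla f(\lambda_k)$. First I would invoke the fact (standard in the bilevel literature and following from Assumptions A--B) that $f$ is $L_f$-smooth for a suitable constant $L_f$. Smoothness applied to the update rule gives
\[
f(\lambda_{k+1}) \le f(\lambda_k) - \alpha_k\langle \nabla f(\lambda_k), d_k\rangle + \tfrac{\alpha_k^2 L_f}{2}\|d_k\|^2.
\]
Rewriting the cross term with the polarization identity $-\langle a,b\rangle = \tfrac12\|a-b\|^2 - \tfrac12\|a\|^2 - \tfrac12\|b\|^2$ (with $a=\nabla f(\lambda_k)$ and $b=d_k$) extracts the term $-\tfrac{\alpha_k}{2}\|\nabla f(\lambda_k)\|^2$ and merges the quadratic $d_k$ pieces into $-\tfrac{\alpha_k}{2}(1-\alpha_k L_f)\|d_k\|^2$, exactly as they appear in the lemma, while leaving a residual $\tfrac{\alpha_k}{2}\|\nabla f(\lambda_k)-d_k\|^2$ to be controlled.

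Next I would split this residual into a deterministic bias and a stochastic momentum error via
\[
\|\nabla f(\lambda_k)-d_k\|^2 \le 2\|\nabla f(\lambda_k)-\nabla f_k\|^2 + 2\|\nabla f_k-d_k\|^2,
\]
where $\nabla f_k$ is the surrogate defined by Eq.~(\ref{eq:single-loop}). The second piece directly produces the $\alpha_k E[\|\nabla f_k-d_k\|^2]$ term after taking expectation. The main technical step is to bound $\|\nabla f(\lambda_k)-\nabla f_k\|^2$ by a combination of $\|\omega_k-\omega_{\lambda_k}\|^2$ and $\|v_k-v_{\lambda_k}\|^2$. I would expand
\begin{align*}
\nabla f(\lambda_k)-\nabla f_k =\;& \bigl[\partial_\lambda F(\lambda_k,\omega_{\lambda_k}) - \partial_\lambda F(\lambda_k,\omega_k)\bigr]\\
&- \bigl[\partial_{\omega\lambda}G(\lambda_k,\omega_{\lambda_k})v_{\lambda_k} - \partial_{\omega\lambda}G(\lambda_k,\omega_k)v_k\bigr],
\end{align*}
and insert $\pm\,\partial_{\omega\lambda}G(\lambda_k,\omega_k)v_{\lambda_k}$ inside the second bracket. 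Three sub-terms emerge: the Lipschitz-$L_{F,\lambda}$ error of $\partial_\lambda F$ in $\omega$; the Lipschitz-$L_{G,\omega\lambda}$ error of $\partial_{\omega\lambda}G$ in $\omega$ multiplied by $\|v_{\lambda_k}\|$; and $\partial_{\omega\lambda}G(\lambda_k,\omega_k)(v_k-v_{\lambda_k})$, which is bounded by $C_{G,\omega\lambda}\|v_k-v_{\lambda_k}\|$. The key auxiliary fact is $\|v_{\lambda_k}\|\le C_{F,\omega}/\mu_G$, which follows because $v_{\lambda_k} = \partial_{\omega^2}G(\lambda_k,\omega_{\lambda_k})^{-1}\partial_\omega F(\lambda_k,\omega_{\lambda_k})$---the fixed point of the algorithm's $v$-recursion---combined with the strong convexity bound on the Hessian inverse and the $C_{F,\omega}$ bound on $\partial_\omega F$ from Assumption A. Squaring the resulting three-term sum and applying $(x+y)^2\le 2x^2+2y^2$ at the appropriate places recovers $\Gamma_1^2=C_{G,\omega\lambda}^2$ and $\Gamma_2^2=2L_{F,\lambda}^2+4C_{F,\omega}^2 L_{G,\omega\lambda}^2/\mu_G^2$ after absorbing the factor of $2$ from the initial split of $\|\nabla f(\lambda_k)-d_k\|^2$.

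The main obstacle is the bias bound in the second paragraph: it requires carefully grouping the two $\|\omega_k-\omega_{\lambda_k}\|$-contributions (from the smoothness of $\partial_\lambda F$ in $\omega$ and the smoothness of $\partial_{\omega\lambda}G$ in $\omega$) before squaring, so that the factor $4$ in front of $\Gamma_1^2$ and the exact constants in $\Gamma_2^2$ come out as stated rather than with looser coefficients. Once that bound is in place, plugging it into the residual and taking total expectation yields the lemma; note that Assumption C enters only implicitly at this stage since the bias argument is entirely pathwise. The stochastic noise itself is encapsulated in the $\|\nabla f_k-d_k\|^2$ term, which will be controlled separately in subsequent lemmas using Assumption C and the momentum/variance-reduction step on line 10 of Algorithm~\ref{alg:example}.
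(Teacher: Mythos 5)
Your proposal is correct and follows essentially the same route as the paper's proof: the descent lemma for the $L_f$-smooth outer objective, the polarization identity to extract $-\tfrac{\alpha_k}{2}\|\nabla f(\lambda_k)\|^2$ and the $-\tfrac{\alpha_k}{2}(1-\alpha_k L_f)\|d_k\|^2$ term, the split of the residual into the bias $\|\nabla f(\lambda_k)-\nabla f_k\|^2$ and the momentum error, and the same three-term decomposition of the bias (inserting $\pm\,\partial_{\omega\lambda}G(\lambda_k,\omega_k)v_{\lambda_k}$) together with the bound $\|v_{\lambda_k}\|\le C_{F,\omega}/\mu_G$ to produce exactly $\Gamma_1^2$ and $\Gamma_2^2$. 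No gaps.
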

The proof is in Appendix~D.2. Lemma~\ref{lemma:one-iter} shows that there are three kinds of errors at each iteration: estimation error of $\omega_{\lambda_k}$ ($||\omega_{k+1} - \omega_{\lambda_k}||^2$), error of $v_{\lambda_k}$ ($||v_{k+1} - v_{\lambda_{k}}||^2$) and error of momentum $d_k$ ($||\nabla f_k -  d_k||^2$). We denote them as $A_k$, $B_k$ and $C_k$ in the remainder of this section. The three errors entangle with each other as shown by Line 7-10 of Algorithm~\ref{alg:example}, \emph{e.g.} the estimation error to $\omega_{\lambda_{k-1}}$ will contribute to the error of momentum $d_k$ as shown in Line 8. In fact, we can bound them with the following inequality (use $A_k$ as an example):
\[
A_k \le \beta A_{k-1} + C_1 B_{k-1} + C_2 C_{k-1} + C_3
\]
with $\beta < 1$ and $C_1$, $C_2$, $C_3$ are terms not relevant to $A_k$, $B_k$ and $C_k$. Please check the Appendix~D.3 for more details. Specially, the momentum term $C_k$ reduces the variance similarly to that in the single level variance-reduction optimizers, by which we mean:
\begin{equation*}
\begin{split}
&E[||d_k - \nabla f_k||^2]\\
\le& (1 - \eta_k)^2E[||d_{k-1} - \nabla f_{k-1}||^2] + 2\eta_k^2\sigma^2\\
&+ 2(1 - \eta_k)^2 \underbrace{E[||\nabla f_k(\xi_k) - \nabla f_{k-1}(\xi_k)||^2]}_{\Pi}\\
\end{split}
\end{equation*}
The part of noise proportional to $(1 - \eta_k^2)\sigma^2$ is absorbed in the term $\Pi$ due to the correction made by $\nabla f_{k-1}(\xi_k)$ in the $d_k$ update rule. However, the key difference is that $\Pi$ not only relies on $E[||d_{k-1}|||^2]$ but also $A_{k-1}$ and $B_{k-1}$. 

Finally, to show the the convergence of Algorithm~\ref{alg:example}, we denote the following potential function:
\begin{equation*}
\begin{split}
\Phi_k =& f(\lambda_k) + D_1A_k + D_2B_k + D_3C_k
\end{split}
\end{equation*}
where $D_1$, $D_2$ and $D_3$ are some constants. Combine Lemma~\ref{lemma:one-iter} with the inequalities for $A_k$, $B_k$ and $C_k$, and we have: $\Phi_{k+1} - \Phi_k \le -\alpha_k/2||\nabla f(\lambda_k)||^2 + C\alpha_k^2\sigma^2$ where $C$ is some constant. With the above inequality, we can get a convergence rate of $O(1/\sqrt{K})$ ($O(1/\epsilon^2)$) by choosing the learning rate with $O(1/\sqrt{k})$. More formally, we have:
\begin{theorem}
	With Assumption A, B, C hold, and take $\beta_k = c_{\beta}\alpha_k$, $\tau_k = c_{\tau}\alpha_k$, $\eta_k = c_{\eta}\alpha_k$, and $\alpha_k = \frac{\delta}{\sqrt{k}}$. We have:
	\begin{equation*}
	\begin{split}
	\frac{1}{K}\sum_{k=0}^{K-1}||\nabla f(\lambda_k)||^2
	\le \frac{2\Phi_{0}}{\delta\sqrt{K}} + \frac{2\delta\bar{C}\sigma^2}{\sqrt{K}}\\
	\end{split}
	\end{equation*}
	where $\bar{C}$, $\delta$, $c_{\beta}$, $c_{\tau}$ and $c_{\eta}$ are some constants.
\end{theorem}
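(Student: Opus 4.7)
The plan is to combine the one-iteration descent of Lemma~\ref{lemma:one-iter} with recursive bounds on the three coupled error quantities
\[
A_k = E\|\omega_k - \omega_{\lambda_k}\|^2,\quad B_k = E\|v_k - v_{\lambda_k}\|^2,\quad C_k = E\|\nabla f_k - d_k\|^2,
\]
and package everything into a Lyapunov function $\Phi_k = f(\lambda_k) + D_1 A_k + D_2 B_k + D_3 C_k$ whose one-step decrement dominates $\tfrac{\alpha_k}{2}\|\nabla f(\lambda_k)\|^2$ up to an $O(\alpha_k^2\sigma^2)$ noise floor; telescoping such a Lyapunov inequality and dividing by $K\alpha = \delta\sqrt{K}$ then yields the stated rate.

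First, Lemma~\ref{lemma:one-iter} already isolates $-\tfrac{\alpha_k}{2}\|\nabla f(\lambda_k)\|^2$ and bounds the remaining positive contribution to $E[f(\lambda_{k+1})]-f(\lambda_k)$ by $\alpha_k(\Gamma_2^2 A_k + 4\Gamma_1^2 B_k + C_k)$, minus the reserve $-\tfrac{\alpha_k}{2}(1-\alpha_k L_f)E\|d_k\|^2$ that will later absorb every $E\|d_k\|^2$ term coming from the three recursions below. Second, I would establish three one-step recursions of the form ``own-error contraction $+$ coupling to the other errors $+$ $O(\alpha_k^2\sigma^2)$'' using Assumptions~A, B, C together with the updates of Algorithm~\ref{alg:example}:
\begin{align*}
A_{k+1} &\le (1 - c_1\alpha_k) A_k + a_2\alpha_k^2 E\|d_k\|^2 + a_3\alpha_k^2\sigma^2,\\
B_{k+1} &\le (1 - c_2\alpha_k) B_k + b_1\alpha_k A_{k+1} + b_2\alpha_k^2 E\|d_k\|^2 + b_3\alpha_k^2\sigma^2,\\
C_{k+1} &\le (1 - \eta_{k+1}) C_k + e_1\alpha_k (A_{k+1}+B_{k+1}) + e_2\alpha_k^2 E\|d_k\|^2 + e_3\alpha_k^2\sigma^2.
\end{align*}
The $A_k$ bound comes from $\mu_G$-strong convexity of $G$ plus the bounded-variance oracle; the $B_k$ bound uses the spectral contraction $\|I-\beta_{k+1}\partial_{\omega^2}G\|\le 1 - \mu_G\beta_{k+1}$ together with Lipschitz continuity of $v_\lambda$ in $(\lambda,\omega)$; the $C_k$ bound exploits the STORM-style correction in Line~10 of Algorithm~\ref{alg:example}, which cancels the $(1-\eta_{k+1})^2\sigma^2$ noise contribution and leaves only smoothness-based couplings to $A$, $B$ and $E\|d_k\|^2$.

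Third, substituting the three recursions into $\Phi_k$, the coefficient of $A_k$ in $E[\Phi_{k+1}-\Phi_k]$ has leading term $\alpha_k(-D_1 c_1 + D_2 b_1 + D_3 e_1 + \Gamma_2^2)$, with analogous expressions for $B_k$ and $C_k$; all contraction and coupling terms are $\Theta(\alpha_k)$ because $\tau_{k+1}$, $\beta_{k+1}$ and $\eta_{k+1}$ are all proportional to $\alpha_k$. I would choose $D_3$, then $D_2$, then $D_1$ large enough in that order to make every such coefficient nonpositive, and then choose $\delta$ small enough that the $O(\alpha_k^2)E\|d_k\|^2$ remainders from all three recursions are dominated by the $-\tfrac{\alpha_k}{2}(1-\alpha_k L_f)E\|d_k\|^2$ reserve of Lemma~\ref{lemma:one-iter}. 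This yields the Lyapunov inequality $E[\Phi_{k+1}]-E[\Phi_k] \le -\tfrac{\alpha_k}{2}\|\nabla f(\lambda_k)\|^2 + \bar C\alpha_k^2\sigma^2$, from which the stated bound follows by telescoping.

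The main obstacle is precisely this coupled tuning step: $B_{k+1}$ depends on $A_{k+1}$ rather than $A_k$, $C_{k+1}$ depends on both $A_{k+1}$ and $B_{k+1}$, and the single $E\|d_k\|^2$ reserve inside Lemma~\ref{lemma:one-iter} has to absorb contributions from all three recursions simultaneously. The signs of the residual coefficients must be controlled for every $k \ge 0$, not just in the limit, so the chained choice $D_3 \to D_2 \to D_1$ followed by a small-enough $\delta$ is where the bulk of the bookkeeping effort sits; once this is done, the remaining telescoping is routine.
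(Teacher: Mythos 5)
Your plan is the paper's own proof: the paper defines exactly the potential $\Phi_k=f(\lambda_k)+\Gamma_2^2A_k+4\Gamma_1^2B_k+C_k/c_\eta$, derives one-step recursions for $A_k$, $B_k$, $C_k$ (its Lemmas on the three error types, together with two auxiliary propositions handling the drift of $\omega_{\lambda_k}$ and $v_{\lambda_k}$), combines them with the descent lemma, kills the residual coefficients of $A_k$, $B_k$, $C_k$ and $E\|d_k\|^2$ by a chained choice of constants, and telescopes. So the route is the same; I flag only one quantitative point where your stated recursions are too optimistic and would not survive the derivation as written.

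You claim the $E\|d_k\|^2$ couplings in the $A$ and $B$ recursions are $O(\alpha_k^2)$ and can therefore be absorbed into the reserve $-\tfrac{\alpha_k}{2}(1-\alpha_kL_f)E\|d_k\|^2$ merely by shrinking $\delta$. They cannot be $O(\alpha_k^2)$: the drift terms $\|\omega_{\lambda_k}-\omega_{\lambda_{k-1}}\|^2\le C_\omega^2\alpha_{k-1}^2\|d_{k-1}\|^2$ and $\|v_{\lambda_k}-v_{\lambda_{k-1}}\|^2\le C_v^2\alpha_{k-1}^2\|d_{k-1}\|^2$ must be merged with the contraction via Young's inequality, and to preserve a $(1-c\,\alpha_k)$ contraction factor you can only afford an inflation $(1+\gamma\alpha_k)$, which forces the cross term to carry a factor $1+\tfrac{1}{\gamma\alpha_k}$ and hence to be $\Theta(\alpha_k)E\|d_k\|^2$ (this is exactly what happens in the paper's Propositions on $\omega$- and $v$-drift, giving coefficients $C_\omega^2\alpha_k/\gamma_\omega$ and $C_v^2\alpha_k/\gamma_v$). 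An $\alpha_k\,E\|d_k\|^2$ term is \emph{not} made small relative to the $\tfrac{\alpha_k}{2}E\|d_k\|^2$ reserve by shrinking $\delta$; it is controlled by choosing the Young parameters $\gamma_\omega,\gamma_v$ large and then $c_\tau,c_\beta$ large enough that the contractions $\mu_G\tau_k$ and $\mu_G\beta_k$ still dominate the $(1+\gamma\alpha_k)$ inflation. This is precisely the role of the explicit constants $\gamma_v=32\Gamma_1^2C_v^2$, $\gamma_\omega=8\Gamma_2^2C_\omega^2$, $c_\beta$, $c_\tau$ in the paper's proof, and it is the one piece of the bookkeeping your proposal as stated would get stuck on. (Your $O(\alpha_k^2)$ claim is correct for the STORM recursion on $C_k$, where no Young step is needed.) With that correction the rest of your argument goes through; note also that the telescoped noise term is really $\bar C\delta^2\sigma^2\sum_k k^{-1}=O(\delta^2\sigma^2\ln K)$, a logarithmic factor present in the appendix version of the theorem but dropped in the main-text statement.
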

\noindent The proof of the theorem is included in Appendix~D.4.
\section{Experiments}
In this section, we perform experiments to empirically verify the effectiveness of our algorithm. We first perform experiments over a quadratic objective with synthetic dataset to validate Lemma~\ref{lemma:conv_cond_main}. Then we perform a common benchmark task in bilevel optimization: data hyper-cleaning. The experiments are run over a machine with Intel Xeon E5-2683 CPU and 4 Nvidia Tesla P40 GPUs.

\subsection{Synthetic Dataset: Quadratic Objective}
In this experiment, we verify Lemma~\ref{lemma:conv_cond_main} over some synthetic data. We consider the bilevel optimization problem where both outer and inner problems are quadratic. To make it simpler, the outer problem does not depend on the outer variable directly.  More precisely, we study:
\begin{equation*}
\begin{split}
\underset{\lambda \in \Lambda}{\min}\ f(\lambda) &\coloneqq ||A_{o}\omega_{\lambda} - b_{o}||^2 \\
s.t.\ \omega_{\lambda} &=\underset{\omega}{\arg\min} ||A_{i, \lambda}\lambda + A_{i, \omega}\omega - b_{i}||^2
\end{split}
\end{equation*}
For this bilevel problem, we can solve the exact minimizer of the inner problem and then evaluate the exact hyper-gradient based on the Proposition~\ref{lemma: exact-hg}. This makes it easier to compute and compare the approximation error of different methods. In experiments, we pick problem dimension 5 and randomly sample 10000 data points. We construct the dataset as follows: first randomly sample  $A_o, A_{i,\lambda}, A_{i, \omega} \in \mathbb{R}^{10^4\times 5} $ and $\lambda, \omega, \omega_{\lambda} \in \mathbb{R}^{5}$ from the Uniform distribution, then we construct $b_o$ and $b_i$ by $A_{o}\omega_{\lambda} + \sigma_o$ and $A_{i, \lambda}\lambda + A_{i, \omega}\omega + \sigma_i$, where $\sigma_i$ and $\sigma_o$ are Gaussian noise with mean zero and variance $0.1$.  We use this simple task to validate our claim in Lemma~\ref{lemma:conv_cond_main}. More precisely, we fix the outer state $\lambda$ and estimate the hyper-gradient with different sequence $\{\omega_k\}, \{\beta_k\}, \{p_k\}$ and then compare their estimation errors. The results are shown in Figure~\ref{fig:toy}.
\begin{figure}[ht]
	\begin{center}
		\includegraphics[width=0.6\columnwidth]{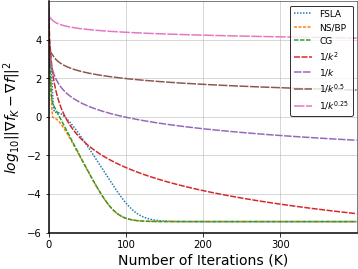}
		\caption{The estimation error of hyper-gradient $||\nabla f_K - \nabla f||^2$ for different sequences $\{\omega_k\}$.}
		\label{fig:toy}
	\end{center}
% \vspace{-0.5cm}
\end{figure}

We perform two sets of experiments. The first set includes our single loop method FSLA, and NS, BP and CG, which are three cases discussed in Lemma~\ref{lemma:approx-hg}. For these methods, we generate the sequence $\{\omega_k\}$ through solving the inner problem with $K$ steps of gradient descent and we use learning rate $\beta_k$ we pick $2\times 10^{-5}$. Note since the second order derivatives of the quadratic objective are constant, BP and NS are the same. As shown by the figure, the hyper-gradient estimation errors of all the four methods converge. Our FSLA takes a bit more number of iterations to converge, however, our algorithm takes less running time. Our method requires $O(1)$ matrix-vector query for every given $K$, while the other methods requires $O(K)$ queries. In the next set of experiments, we compare with some synthetic $\{\omega_k\}$ sequences which have different convergence rate. More precisely, we use sequence $\{\omega + \tilde{\omega}/(k^{\alpha})\}$, where $\tilde{\omega}$ is some random start point, we pick different alpha values (2, 1, 0.5, 0.25). We estimate $\nabla_{\lambda} f$ according to Corollary~\ref{corollary:choice1_main} (same as the NS), the learning rate is chosen as $2\times 10^{-5}$. Corollary~\ref{corollary:choice1_main} bounds the convergence rate of $\nabla_{\lambda} f_K$ with the rate of $\omega_k$, which is well verified through the results shown in Figure~\ref{fig:toy}.

\subsection{Hyper Data-cleaning}
\begin{figure}[ht]
	\begin{center}
		\includegraphics[width=0.48\columnwidth]{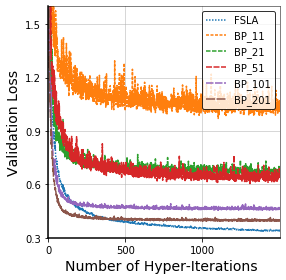}
		\includegraphics[width=0.48\columnwidth]{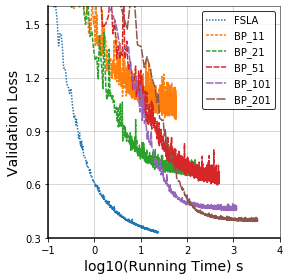}
	\end{center}
	\caption{FSLA \emph{vs} BP plot of Validation Loss w.r.t Number of hyper-iterations (Left) and $log10$(Running Time) (Right). The perturbation rate $\gamma$ is 0.8. The post-fix of legend represents the number of inner iterations $T$.}
	\label{fig:compare-bp}
% \vspace{-0.5cm}
\end{figure}

\begin{figure*}[t]
	\begin{center}
		\includegraphics[width=0.48\columnwidth]{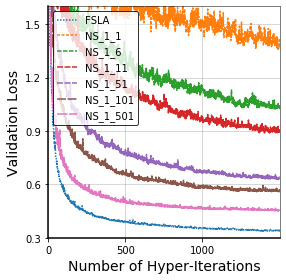}
		\includegraphics[width=0.48\columnwidth]{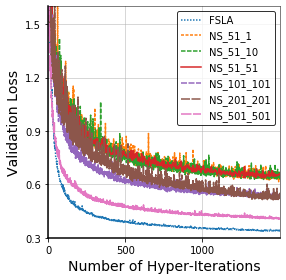}
		\includegraphics[width=0.48\columnwidth]{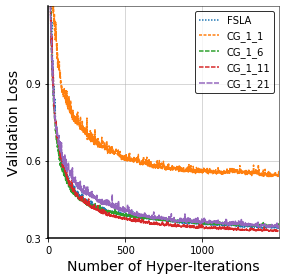}
		\includegraphics[width=0.48\columnwidth]{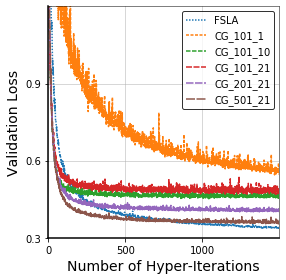}
		\includegraphics[width=0.48\columnwidth]{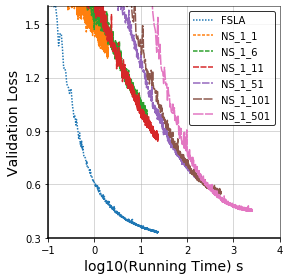}
		\includegraphics[width=0.48\columnwidth]{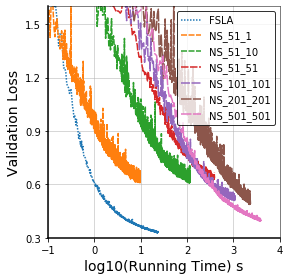}
		\includegraphics[width=0.48\columnwidth]{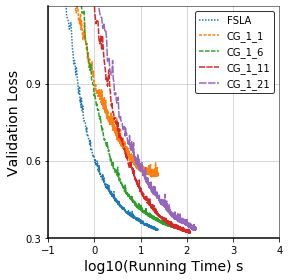}
		\includegraphics[width=0.48\columnwidth]{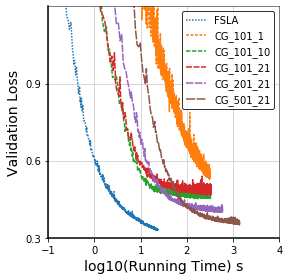}
	\end{center}
	\caption{FSLA \emph{vs} NS and CG plot of Validation Loss w.r.t Number of hyper-iterations (Top) and $log10$(Running Time) (Bottom). The perturbation rate $\gamma$ is 0.8. The post-fix of legend represents the number of inner iterations $T$ and approximate steps $K$.If inner gradient steps equal to 1, we use the warm start trick, otherwise not.}
	\label{fig:compare-ns}
% \vspace{-0.4cm}
\end{figure*}

% \begin{figure*}[t]
% 	\begin{center}
% 	\end{center}
% 	\caption{FSLA \emph{vs} CG plot of Validation Loss w.r.t Number of hyper-iterations (Left) and $log10$(Running Time) (Right). The left two figures uses warm start and the right two figures not. The perturbation rate $\gamma$ is 0.8. The post-fix follows that of Figure~\ref{fig:compare-ns}.}
% 	\label{fig:compare-cg}
% % \vspace{-0.4cm}
% \end{figure*}

In this experiment, we demonstrate the efficiency of our FSLA, especially the effect of tracking hyper-gradient history with $v_k$. More precisely, we compare with three hyper-gradient evaluation methods: BP, NS and CG. For NS and CG methods, we consider both the double loop version and the single loop version. In the single loop version, we update the inner variable with the warm start strategy.

Data cleaning denotes the task of cleaning a noisy dataset. Suppose there is a noisy dataset $D_i$ with $N_i$ samples (the label of some samples are corrupted), the aim of the task is to identify those corrupted data samples. Hyper Data-cleaning approaches this problem by learning a weight per sample. More precisely, we solve the following bilevel optimization problem:
\begin{equation*}
\begin{split}
&\underset{\lambda \in \Lambda}{\min}\ l(\omega_{\lambda}; D_{o})\ \emph{s.t.}\ \omega_{\lambda} = \underset{\omega\in \mathbb{R}^d}{\arg\min}\ \frac{1}{N_i}\sum_{j=1}^{N_i} \sigma(\lambda_j) l(\omega, D_{i,j})
\end{split}
\end{equation*}
In the inner problem, we minimize a weighted average loss $l$ over the training dataset $D_i$, with $\sigma(\lambda)$ the sample-wise weight ($\sigma(\cdot)$ is a normalization function and we use $Sigmoid$ in experiments), suppose the minimizer of the inner problem is $\omega_{\lambda}$. In in the outer problem, we evaluate $\omega_\lambda$ (a function of $\lambda$) over a validation dataset $D_o$. Then the bilevel problem will find $\lambda$ such that $\omega_{\lambda}$ is optimal as evaluated by the validation set.

More specifically, we perform this task over several datasets: MNIST~\cite{lecun2010mnist}, Fashion-MNIST~\cite{xiao2017fashion} and QMNIST~\cite{yadav2019cold}. We construct the datasets as follows: for the training set $D_i$, we choose 5000 images from the training set, and randomly perturb the label of $\gamma$ percentage of images. While for the validation set $D_o$, we randomly select another 5000 images but without any perturbation (all labels are correct). We adopt a 4-layer convolutional neural network in the training.
The experimental results are shown in Figure~\ref{fig:compare-bp} and Figure~\ref{fig:compare-ns}. For all the methods, we solve the inner problem with stochastic gradient descent, while for the outer optimizer, all the methods use the variance reduction for fair comparison. In experiments, we vary the number of inner gradient descent steps $T$ and the number of hyper-gradient approximation steps $K$. The legend in the figures has the form of \textit{method-$T$-$K$}. For other hyper-parameters, we perform grid search for each method and choose the best one (hyper-parameters selection is in Appendix~E).

\begin{figure}[ht]
	\begin{center}
		\includegraphics[width=0.48\columnwidth]{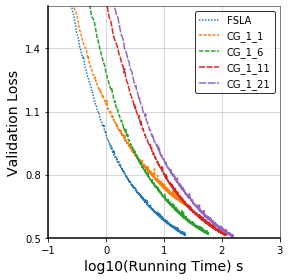}
		\includegraphics[width=0.48\columnwidth]{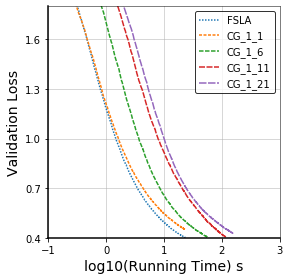}
	\end{center}
	\caption{FSLA \emph{vs} CG plot of Validation Loss w.r.t $log10$(Running Time). The Left plot shows Fashion-MNIST and the right plot shows the QMNIST. $\gamma$ is set 0.8.}
	\label{fig:compare-other}
% \vspace{-0.5cm}
\end{figure}

As shown by the figures, our method converges much faster than the baseline methods. Compared with BP, FSLA surpasses the best BP variant $BP_{201}$ at the 500 iteration, as for the running time, FSLA runs much faster. For NS and CG, $T=1$ in figures represents using the warm start, where the inner variable is updated from the state of last hyper-iteration.
The warm start trick has some kind of acceleration effects. However, for NS, the running time is dominated by the evaluation of Neumann Series. For CG, it converges with around 10 steps, but the extra time is still considerable compared to FSLA. Even with similar computation cost, FLSA still outperforms CG. \textit{CG\_1\_1} and FSLA both perform one step update per hyper-iteration, but CG converges much slower. This is due to failure of reusing the historical information. Finally, Figure~\ref{fig:compare-other} includes results for Fashion-MNIST and QMNIST, where we compare with the best baseline method CG. Our FLSA still outperforms it.

\section{Conclusion}
In this paper, we studied the bilevel optimization problem. More specifically, we first proposed a general formulation of hyper-gradient approximation. This formulation encompasses several important methods in the bilevel optimization. Then inspired by this, we introduced a new fully single loop algorithm, which performs alternative optimization of inner and outer variables. Our algorithm attains convergence rate $O(\epsilon^{-2})$. Moreover, the empirical results also verify the superior performance of our new algorithm.

% Use \bibliography{yourbibfile} instead or the References section will not appear in your paper
% \nobibliography{aaai22}
\bibliography{aaai22}

% \section{Acknowledgments}
% AAAI is especially grateful to Peter Patel Schneider for his work in implementing the original aaai.sty file, liberally using the ideas of other style hackers, including Barbara Beeton. We also acknowledge with thanks the work of George Ferguson for his guide to using the style and BibTeX files --- which has been incorporated into this document --- and Hans Guesgen, who provided several timely modifications, as well as the many others who have, from time to time, sent in suggestions on improvements to the AAAI style. We are especially grateful to Francisco Cruz, Marc Pujol-Gonzalez, and Mico Loretan for the improvements to the Bib\TeX{} and \LaTeX{} files made in 2020.

% The preparation of the \LaTeX{} and Bib\TeX{} files that implement these instructions was supported by Schlumberger Palo Alto Research, AT\&T Bell Laboratories, Morgan Kaufmann Publishers, The Live Oak Press, LLC, and AAAI Press. Bibliography style changes were added by Sunil Issar. \verb+\+pubnote was added by J. Scott Penberthy. George Ferguson added support for printing the AAAI copyright slug. Additional changes to aaai22.sty and aaai22.bst have been made by Francisco Cruz, Marc Pujol-Gonzalez, and Mico Loretan.

% \bigskip
% \noindent Thank you for reading these instructions carefully. We look forward to receiving your electronic files!

\onecolumn
\appendix
\section{Some Mild Assumptions}
\label{appendix:A}
We first state the full assumptions needed in our analysis:
\begin{assumption}
\label{upper_assumption} (Outer function)
Function $F$ has the following properties:
\begin{itemize}
	\item[a)] $F(\lambda,\omega)$ is possibly non-convex, $\partial_{\lambda} F(\lambda,\omega)$ and $\partial_{\omega} F(\lambda,\omega)$ are Lipschitz continuous with constant $L_{F,\lambda}$ and $L_{F,\omega}$ respectively
	\item[b)] $\|\partial_{\lambda} F(\lambda, \omega)\| \le C_{F,\lambda}$ and $\|\partial_{\omega} F(\lambda, \omega)\| \le C_{F,\omega}$ for some constants $C_{F,\lambda}$ and $C_{F,\omega}$
\end{itemize}
\end{assumption}

\begin{assumption}
\label{lower_assumption} (Inner function)
Function G has the following properties:
\begin{itemize}
	\item[a)]  $G(x,y)$ is continuously twice differentiable, and $\mu_G$-strongly convex w.r.t $\omega$ for any given $\lambda$
	\item[b)] $\partial_{\omega} G(\lambda,\omega)$ is Lipschitz continuous with constant $L_{G, \omega}$, $\|\partial_{\omega\lambda}^2 G(\lambda, \omega)\| \le C_{G,\omega\lambda}$ for some constant $C_{G,\omega\lambda}$
	\item[c)] $\partial_{\omega\lambda}^2 G(\lambda,\omega)$ and $\partial_{\omega^2} G(\lambda,\omega)$ are Lipschitz continuous with constant $L_{G,\omega\lambda}$ and $L_{G,\omega\omega}$ respectively
\end{itemize}
\end{assumption}
\noindent For convenience, we assume Assumption A and Assumption B also hold for their stochastic version. Next we make the bounded stochastic noise assumption, \emph{i.e.}:
\begin{assumption}
\label{noise_assumption_append} (Bounded Variance)
We have access to a stochastic unbiased estimate of $\partial_{\omega^2}^2 G(\lambda,\omega)$, $\partial_{\omega\lambda}^2 G(\lambda,\omega)$, $\partial_\omega G(\lambda, \omega)$, $\partial_\omega F(\lambda, \omega)$, $\partial_\lambda F(\lambda, \omega)$ with their variance bounded by $\sigma^2$: 
\begin{itemize}
	\item[a)] $E[\partial_\lambda F(\lambda, \omega; \xi)] = \partial_\lambda F(\lambda, \omega)$ and $var(\partial_\lambda F(\lambda, \omega; \xi)) \le \sigma^2$
	\item[b)] $E[\partial_\omega F(\lambda, \omega; \xi)] = \partial_\omega F(\lambda, \omega)$ and $var(\partial_\omega F(\lambda, \omega; \xi)) \le \sigma^2$
	\item[c)] $E[\partial_\omega G(\lambda, \omega; \xi)] = \partial_\omega G(\lambda, \omega)$ and $var(\partial_\omega F(\lambda, \omega; \xi)) \le \sigma^2$
	\item[d)] $E[\partial_{\omega^2}^2 G(\lambda,\omega; \xi)] = \partial_{\omega^2}^2 G(\lambda,\omega)$ and $var(\partial_{\omega^2}^2 G(\lambda,\omega; \xi)) \le \sigma^2$
	\item[e)] $E[\partial_{\omega\lambda}^2 G(\lambda,\omega; \xi)] = \partial_{\omega\lambda}^2 G(\lambda,\omega)$ and $var(\partial_{\omega\lambda}^2 G(\lambda,\omega; \xi)) \le \sigma^2$ 
\end{itemize}
\end{assumption}

\section{Preliminary Results}
\label{Appendix:B}
\subsection{Some Basic Propositions}
\begin{proposition} (relaxed triangle inequality)
Let $\{x_k\}, k\in{K}$ be $K$ vectors. Then the following are true:
\begin{enumerate}
	\item $||x_i + x_j||^2 \le (1 + a)||x_i||^2 + (1 + \frac{1}{a})||x_j||^2$ for any $a > 0$, and
	\item  $||\sum_{k=1}^K x_k||^2 \le K\sum_{k=1}^{K} ||x_k||^2$
\end{enumerate}
\end{proposition}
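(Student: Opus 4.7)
The plan is to prove each of the two inequalities separately by elementary arguments: a Young-type inequality for part (1), and either Cauchy--Schwarz or convexity of $\|\cdot\|^2$ for part (2). Both are standard, so the write-up will be short and the main obstacle is purely notational rather than conceptual.

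For part (1), I would start by expanding the squared norm via the inner product:
\begin{equation*}
\|x_i + x_j\|^2 = \|x_i\|^2 + 2\langle x_i, x_j\rangle + \|x_j\|^2.
\end{equation*}
The key step is to bound the cross term using the weighted AM--GM (Young) inequality. For any $a > 0$, from the nonnegativity of the squared norm
\begin{equation*}
0 \le \left\|\sqrt{a}\,x_i - \tfrac{1}{\sqrt{a}}\,x_j\right\|^2 = a\|x_i\|^2 - 2\langle x_i,x_j\rangle + \tfrac{1}{a}\|x_j\|^2,
\end{equation*}
so $2\langle x_i,x_j\rangle \le a\|x_i\|^2 + \tfrac{1}{a}\|x_j\|^2$. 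Plugging this into the expansion gives exactly $(1+a)\|x_i\|^2 + (1+\tfrac{1}{a})\|x_j\|^2$.

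For part (2), I would give a one-line derivation using Cauchy--Schwarz applied to the vectors $(1,1,\dots,1)$ and $(\|x_1\|,\dots,\|x_K\|)$. Combined with the ordinary triangle inequality $\|\sum_k x_k\| \le \sum_k \|x_k\|$, one obtains
\begin{equation*}
\left\|\sum_{k=1}^K x_k\right\|^2 \le \left(\sum_{k=1}^K \|x_k\|\right)^2 \le K\sum_{k=1}^K \|x_k\|^2.
\end{equation*}
An equivalent route is to invoke convexity of $t \mapsto \|t\|^2$ (i.e.\ Jensen's inequality for the uniform average), which yields $\|\tfrac{1}{K}\sum_k x_k\|^2 \le \tfrac{1}{K}\sum_k \|x_k\|^2$ and then multiplying both sides by $K^2$.

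There is no real obstacle here; the only thing to be careful about is stating the result over a general inner-product space (or $\mathbb{R}^n$ with the Euclidean norm) so that the inner-product expansion in part (1) is valid, and noting that both inequalities hold with equality in degenerate cases (e.g.\ $x_j = a\,x_i$ for part (1), or all $x_k$ equal for part (2)), which is useful as a sanity check but not needed in the formal proof.
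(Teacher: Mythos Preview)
Your proof is correct and is the standard elementary argument. The paper itself does not prove this proposition at all---it simply refers the reader to Lemmas~3 and~4 of \cite{karimireddy2020scaffold}---so your write-up in fact supplies more than the paper does, and the approach you give (Young's inequality for the cross term in part~(1), triangle inequality plus Cauchy--Schwarz for part~(2)) is exactly what one finds in that reference.
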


\begin{proposition} (separating mean and variance)
Let $\{\Xi_k\}, k\in {K}$ be $K$ random variables. Suppose that $E(\Xi_i) = \xi_i$ and $Var(\Xi_i) \le \sigma^2$, then:
\[
E[||\sum_{i=1}^K \Xi_k||^2] \le ||\sum_{k=1}^K \xi_i|| + K^2\sigma^2
\]
\end{proposition}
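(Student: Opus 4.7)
The plan is to decompose each random vector into its mean plus a zero-mean noise term, expand the squared norm, and then bound the noise contribution using the second part of the preceding relaxed triangle inequality.

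First, I would write $\Xi_k = \xi_k + Z_k$, where $Z_k \coloneqq \Xi_k - \xi_k$ satisfies $E[Z_k] = 0$ and $E[\|Z_k\|^2] = \mathrm{Var}(\Xi_k) \le \sigma^2$ by assumption. Summing, $\sum_{k=1}^{K}\Xi_k = \sum_{k=1}^{K}\xi_k + \sum_{k=1}^{K}Z_k$, and expanding the squared norm gives
\begin{equation*}
E\Bigl[\bigl\|\textstyle\sum_{k=1}^{K}\Xi_k\bigr\|^2\Bigr]
= \bigl\|\textstyle\sum_{k=1}^{K}\xi_k\bigr\|^2
+ 2\Bigl\langle \textstyle\sum_{k=1}^{K}\xi_k,\; E\bigl[\textstyle\sum_{k=1}^{K}Z_k\bigr]\Bigr\rangle
+ E\Bigl[\bigl\|\textstyle\sum_{k=1}^{K}Z_k\bigr\|^2\Bigr].
\end{equation*}
The cross term vanishes because $E[Z_k]=0$ for every $k$ (linearity of expectation), so only the deterministic square and the noise square remain.

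Next I would bound the noise square using Proposition~1(2): $\bigl\|\sum_{k=1}^{K} Z_k\bigr\|^2 \le K\sum_{k=1}^{K}\|Z_k\|^2$. Taking expectations and using the variance bound gives $E\bigl[\|\sum_k Z_k\|^2\bigr] \le K \sum_{k=1}^{K} E[\|Z_k\|^2] \le K \cdot K\sigma^2 = K^2 \sigma^2$. Substituting back yields $E[\|\sum_k \Xi_k\|^2] \le \|\sum_k \xi_k\|^2 + K^2\sigma^2$, which is the stated inequality (reading the right-hand side as the squared norm, consistent with units). There is no real obstacle here; the only subtlety is making sure the cross term is handled cleanly without assuming independence among the $\Xi_k$, which is why the loose $K^2\sigma^2$ bound (rather than the tighter $K\sigma^2$ one available under independence) is appropriate.
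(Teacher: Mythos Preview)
Your argument is correct and is the standard way to prove this bound: center each $\Xi_k$, kill the cross term by linearity of expectation, and then absorb all noise interactions via the crude $\|\sum Z_k\|^2 \le K\sum\|Z_k\|^2$ inequality. You are also right that the right-hand side should read $\|\sum_k \xi_k\|^2$; the missing square in the displayed statement is a typo.

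The paper itself does not give a proof at all: it simply points to Lemma~4 of \cite{karimireddy2020scaffold}. That reference proves exactly this statement with the same decomposition you use, so there is no meaningful difference in approach to discuss.
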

Please refer to Lemma 3 and Lemma 4 in~\cite{karimireddy2020scaffold} for the proof of the two propositions.
\subsection{More Details about Proposition 1}
We first restate Proposition 1 as follows:
\begin{proposition}
If for any $\lambda \in \Lambda$, $\omega_{\lambda}$ is unique, and  $\partial_{\omega^2} G(\lambda, \omega_\lambda)$ is invertible, we have:
\begin{equation*}
\nabla_{\lambda} \omega_{\lambda} = -\partial_{\omega\lambda}G(\lambda, \omega_{\lambda})\partial_{\omega^2} G(\lambda, \omega_\lambda))^{-1}
\end{equation*}
and the hyper-gradient \emph{w.r.t} $\lambda$ has the following form:
\begin{equation*}
\nabla_{\lambda} f = \partial_{\lambda} F(\lambda, \omega_{\lambda}) - \partial_{\omega\lambda}G(\lambda, \omega_{\lambda})\partial_{\omega^2} G(\lambda, \omega_\lambda))^{-1} \partial_{\omega}F(\lambda, \omega_{\lambda})
\end{equation*}
\end{proposition}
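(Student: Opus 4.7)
The plan is to apply the implicit function theorem to the first-order stationarity condition of the inner problem, and then chain-rule through the outer objective. This matches the hint in the main text that the proposition is ``a direct application of the implicit function theorem.''

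First I would use the assumption that $\omega_\lambda$ is the unique minimizer of the smooth inner objective $G(\lambda, \cdot)$ to write down the first-order optimality condition as the identity
\[
\partial_\omega G(\lambda, \omega_\lambda) = 0 \qquad \text{for all } \lambda \in \Lambda .
\]
This turns the implicit dependence of $\omega_\lambda$ on $\lambda$ into a zero-set equation $\Phi(\lambda, \omega) := \partial_\omega G(\lambda, \omega) = 0$. The partial of $\Phi$ in $\omega$ is $\partial_{\omega^2} G(\lambda, \omega_\lambda)$, which is invertible by hypothesis, so the implicit function theorem applies and gives continuous differentiability of the map $\lambda \mapsto \omega_\lambda$ on a neighborhood of every $\lambda \in \Lambda$.

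Next I would differentiate the identity $\partial_\omega G(\lambda, \omega_\lambda) = 0$ in $\lambda$ using the chain rule, obtaining
\[
\partial_{\omega\lambda} G(\lambda, \omega_\lambda) + \partial_{\omega^2} G(\lambda, \omega_\lambda)\, \nabla_\lambda \omega_\lambda = 0 ,
\]
with the ordering/transposition taken to match the paper's convention. Multiplying on the appropriate side by $\partial_{\omega^2} G(\lambda, \omega_\lambda)^{-1}$ yields the stated formula for $\nabla_\lambda \omega_\lambda$. Then, since $f(\lambda) = F(\lambda, \omega_\lambda)$, one more application of the multivariate chain rule gives
\[
\nabla_\lambda f = \partial_\lambda F(\lambda, \omega_\lambda) + \nabla_\lambda \omega_\lambda \cdot \partial_\omega F(\lambda, \omega_\lambda),
\]
and substituting the expression just derived for $\nabla_\lambda \omega_\lambda$ produces the hyper-gradient formula claimed in the proposition.

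The only nontrivial step is invoking the implicit function theorem, which requires the two hypotheses that the statement already assumes: uniqueness of $\omega_\lambda$ (so the implicit function is well-defined and single-valued) and invertibility of $\partial_{\omega^2} G(\lambda, \omega_\lambda)$ (so the theorem applies and yields a smooth implicit map). Once that regularity is in hand, the remaining content is purely mechanical chain-rule and linear-algebra bookkeeping, so I expect no real obstacles beyond carefully tracking the row/column conventions used by the paper for mixed partials.
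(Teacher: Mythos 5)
Your proof is correct and follows exactly the route the paper intends: the paper itself omits the argument (deferring to Lemma~2.1 of Ghadimi and Wang, 2018), which is precisely this computation --- apply the implicit function theorem to the stationarity condition $\partial_\omega G(\lambda, \omega_\lambda) = 0$, differentiate it in $\lambda$ to solve for $\nabla_\lambda \omega_\lambda$, and then chain-rule through $f(\lambda) = F(\lambda, \omega_\lambda)$. Your handling of the two hypotheses (uniqueness to identify the IFT's local solution with the global minimizer, invertibility of $\partial_{\omega^2} G$ to invoke the theorem) is exactly what the cited reference does, so there is nothing to add.
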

\noindent This is a standard result in Bilevel optimization, we omit the proof here. Please refer to Lemma 2.1 in~\cite{ghadimi2018approximation} for more details.

\section{Proof for Lemma 3 and its Application}
\label{Appendix:C}
We first restate Lemma 3 as follows:
\begin{lemma} (Lemma 3 of main text)
\label{lemma:conv_cond}
Suppose we denote $m_k = \prod_{s=0}^{k}(1 - \mu_G\beta_s)$, $e_{\omega, k} = ||\omega_k - \omega_{\lambda}||$, $e_{p,k} = ||p_k - \partial_{\omega} F(\lambda, \omega_{\lambda})||$ for $k\in [K]$. Then if $\underset{K\to\infty}{\lim} m_K = 0$, $\underset{k\to\infty}{\lim} e_{\omega,K} = 0$, $\underset{K\to\infty}{\lim} m_K\sum_{k=0}^{K-1}\beta_k e_{\omega,k}/m_k$ is finite, in addition, for the backward mode: $\underset{K\to\infty}{\lim} e_{p,K} = 0$; for the forward mode: $\underset{K\to\infty}{\lim}  m_K\sum_{k=0}^{K-1}\beta_k e_{p,k}/m_k$ is finite.
Then we have $\nabla_{\lambda} f_K \to \nabla_{\lambda} f$, when $K\to\infty$, where $\nabla_{\lambda} f_K$ is defined in Lemma 2 of the main text.
\end{lemma}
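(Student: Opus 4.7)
The plan is to decompose the error into two pieces and bound each separately. Starting from Lemma~\ref{lemma:approx-hg} and Proposition~\ref{lemma: exact-hg}, I would write
\begin{equation*}
\nabla_\lambda f_K - \nabla_\lambda f = \bigl[\partial_\lambda F(\lambda,\omega_K) - \partial_\lambda F(\lambda,\omega_\lambda)\bigr] - \biggl[\sum_{k=0}^{K-1}\beta_k s_k - A^{*}\partial_\omega F(\lambda,\omega_\lambda)\biggr],
\end{equation*}
with $A^{*} = \partial_{\omega\lambda} G(\lambda,\omega_\lambda)\,\partial_{\omega^2} G(\lambda,\omega_\lambda)^{-1}$. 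The first bracket tends to $0$ by Lipschitz continuity of $\partial_\lambda F$ together with $e_{\omega,K}\to 0$, so the task reduces to showing that the second bracket vanishes in each of the two modes.

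\textbf{Backward mode.} I would write $\sum_k \beta_k s_k = A_K p_K$, with $A_K$ as in Eq.~(\ref{eq:backward_recur}), and use the fixed-point identity $A^{*} = A^{*}(I - \beta_k\partial_{\omega^2} G(\lambda,\omega_\lambda)) + \beta_k\partial_{\omega\lambda} G(\lambda,\omega_\lambda)$, which holds because $A^{*}\partial_{\omega^2} G(\lambda,\omega_\lambda) = \partial_{\omega\lambda} G(\lambda,\omega_\lambda)$. Subtracting this from the recursion in Eq.~(\ref{eq:recursive}) and bounding via strong convexity of $G$ and the Lipschitz-ness of $\partial_{\omega\lambda} G$ and $\partial_{\omega^2} G$ yields the per-step contraction
\begin{equation*}
||A_{k+1} - A^{*}|| \le (1-\mu_G\beta_k)||A_k - A^{*}|| + C_1\beta_k e_{\omega,k},\qquad C_1 = L_{G,\omega\lambda} + ||A^{*}||\,L_{G,\omega\omega}.
\end{equation*}
Dividing by $m_k$ telescopes this to $||A_K - A^{*}|| \le m_{K-1}||A_0 - A^{*}|| + C_1\, m_{K-1}\sum_{k=0}^{K-1}\beta_k e_{\omega,k}/m_k$, which tends to $0$ by the three assumed limits. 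Together with $p_K \to \partial_\omega F(\lambda,\omega_\lambda)$ and boundedness of $A_K$, this gives $A_K p_K \to A^{*}\partial_\omega F(\lambda,\omega_\lambda)$ as required.

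\textbf{Forward mode.} I would define $v_K$ by the recursion of Eq.~(\ref{eq:recursive_forward}) so that $\sum_k \beta_k s_k = \partial_{\omega\lambda} G(\lambda,\omega_K)\,v_K$, and pick the target $v^{*} = \partial_{\omega^2} G(\lambda,\omega_\lambda)^{-1}\,\partial_\omega F(\lambda,\omega_\lambda)$, which is a fixed point of the same recursion evaluated at $\omega_\lambda$ with $p_k$ replaced by $\partial_\omega F(\lambda,\omega_\lambda)$. Subtracting the two recursions gives
\begin{equation*}
||v_{k+1} - v^{*}|| \le (1-\mu_G\beta_k)||v_k - v^{*}|| + \beta_k e_{p,k} + C_2\beta_k e_{\omega,k},\qquad C_2 = L_{G,\omega\omega}||v^{*}||.
\end{equation*}
The same telescoping-by-$m_k$ trick, now leveraging the additional forward-mode summability hypothesis on $e_{p,k}$, delivers $v_K\to v^{*}$. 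Lipschitz continuity of $\partial_{\omega\lambda} G$ combined with $e_{\omega,K}\to 0$ then upgrades this to $\partial_{\omega\lambda} G(\lambda,\omega_K)\,v_K \to A^{*}\partial_\omega F(\lambda,\omega_\lambda)$, closing the forward case.

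\textbf{Main obstacle.} I expect the delicate point is matching the running recursions to the right ``frozen'' fixed-point identities at every $k$: the identity for $A^{*}$ (and similarly for $v^{*}$) carries a $\beta_k$ that must cancel exactly, and getting the remainder cleanly split into an $e_{\omega,k}$ part (via $\partial_{\omega\lambda} G$ and $\partial_{\omega^2} G$) and a $e_{p,k}$ part requires careful use of the triangle inequality. Verifying that $m_{K-1}\sum_k \beta_k e_{\omega,k}/m_k$ genuinely tends to zero under the stated hypothesis can be obtained from a short Stolz--Ces\`aro-style argument, since $m_K\to 0$ and $e_{\omega,k}\to 0$ force the late terms of the sum to dominate. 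The remainder is routine bookkeeping with the relaxed triangle inequality and the constants from Assumption~B.
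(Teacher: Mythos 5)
Your proposal follows essentially the same route as the paper's proof in Appendix~C: the same split of the error into the $\partial_\lambda F$ difference and the product term, the same recursions for $A_k$ and $v_k$ against the fixed points $A^{*}$ and $v^{*}$, the same contraction inequalities with constants $C_1$ and $C_2$, and the same telescoping by $1/m_k$. The only (welcome) addition is your Stolz--Ces\`aro remark justifying that the weighted sum actually vanishes, a point the paper leaves as ``straightforward to verify.''
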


\begin{proof}
Firstly, we prove for the backward mode case. We define:
\begin{equation*}
A_K = \sum_{k=0}^{K-1}\beta_k\partial_{\omega\lambda} G(\lambda, \omega_{k})\prod_{s={k+1}}^{K-1}\left(I - \beta_s\partial_{\omega^2} G(\lambda,\omega_s)\right)
\end{equation*}
and $A^* = \partial_{\omega\lambda}G(\lambda, \omega_{\lambda}) \partial_{\omega^2} G(\lambda, \omega_\lambda)^{-1}$. Then a recursive equation satisfied by $A_k$ is:
\begin{equation}
\label{eq:recursive_appendix}
\begin{split}
A_{k+1} =& A_k\left(I - \beta_k\partial_{\omega^2} G(\lambda,\omega_k)\right) + \beta_k\partial_{\omega\lambda} (\lambda, \omega_{k})\\
=& (1 -\beta_k)A_k + \beta_k(A_k( I  -\partial_{\omega^2} G(\lambda,\omega_k)) + \partial_{\omega\lambda} G(\lambda, \omega_{k}))\\
\end{split}
\end{equation}
then by reorganizing the terms in the definition of $A^*$, we have:
\begin{equation}
\label{eq:exact_appendix}
A^{*} = A^{*}(I - \partial_{\omega^2} G(\lambda,\omega_{\lambda})) + \partial_{\omega\lambda} G(\lambda, \omega_{\lambda})
\end{equation}
By combining Eq.~(\ref{eq:recursive_appendix}) and Eq.~(\ref{eq:exact_appendix}), we have:
\begin{equation*}
\begin{split}
&||A_{k+1} - A^{*}||\\
\le&(1- \beta_k)||A_k - A^*|| + \beta_k||A_k( I  -\partial_{\omega^2} G(\lambda,\omega_k)) + \partial_{\omega\lambda} G(\lambda, \omega_{k})\\
&- A^{*}(I - \partial_{\omega^2} G(\lambda,\omega_{\lambda})) - \partial_{\omega\lambda} G(\lambda, \omega_{\lambda})||\\
\overset{(i)}{\le}& (1- \beta_k)||A_k - A^*|| + \beta_k||A_k( I  -\partial_{\omega^2} G(\lambda,\omega_k)) -A^{*}(I - \partial_{\omega^2} G(\lambda,\omega_{\lambda}))||\\ 
&+ \beta_k||\partial_{\omega\lambda} G(\lambda, \omega_{k}) - \partial_{\omega\lambda} G(\lambda, \omega_{\lambda})||\\
\overset{(ii)}{\le}& (1- \beta_k)||A_k - A^*|| + \beta_k||( I  -\partial_{\omega^2} G(\lambda,\omega_k))||||A_k - A^*||\\
&+ \beta_k||A^*||||\partial_{\omega^2} G(\lambda, \omega_k) - \partial_{\omega^2} G(\lambda, \omega_{\lambda})|| + \beta_k||\partial_{\omega\lambda} G(\lambda, \omega_{k}) - \partial_{\omega\lambda} G(\lambda, \omega_{\lambda})||\\
\overset{(iii)}{\le}& (1- \beta_k + \beta_k||( I  -\partial_{\omega^2} G(\lambda,\omega_k)) ||)||A_k - A^*|| + \beta_k(L_{G, \omega\lambda}+ C_{G,\omega\lambda}L_{G,\omega^2}/\mu_G)e_{\omega,k}\\
\overset{(iv)}{\le}& (1- \mu_G\beta_k)||A_k - A^*|| + C_1\beta_ke_{\omega,k}\\
\end{split}
\end{equation*}
where (i) uses the triangle inequality; (ii) uses the Cauchy-schwarz inequality, (iii) uses the property $||A^*|| \le C_{G,\omega\lambda}/\mu_G$ (The property is straightforward, please refer to Lemma 2.2 in~\cite{grazzi2020iteration} for a proof); in (iv) we denote $C_1 = L_{G, \omega\lambda}+ C_{G,\omega\lambda}L_{G,\omega^2}/\mu_G$ for ease of writing.  In summary:
\begin{equation}
\label{eq:recur}
||A_{k+1} - A^{*}|| \le (1- \mu_G\beta_k)||A_k - A^*|| + C_1\beta_ke_{\omega,k}
\end{equation}
Multiplying both sides by $1/m_{k}$ and sum, we have:
\begin{equation*}
\begin{split}
||A_{K} - A^{*}|| 
&\le m_{K}\left(||A_0 - A^*|| + C_1\sum_{k=0}^{K-1}\beta_ke_{\omega,k}/m_k\right)\\
\end{split}
\end{equation*}
Then since $\nabla_{\lambda} f_K = \partial_{\lambda} F(\lambda, \omega_K) - A_Kp_K$
and $\nabla_{\lambda} f = \partial_{\lambda} F(\lambda, \omega_{\lambda}) - A^*\partial_{\omega} F(\lambda, \omega_{\lambda})$, so we have:
\begin{equation}
\label{eq:A_conv}
\begin{split}
||\nabla_{\lambda} f_K - \nabla_{\lambda} f|| =& ||\partial_{\lambda} F(\lambda, \omega_K) - A_Kp_K - \partial_{\lambda} F(\lambda, \omega_{\lambda}) + A^*\partial_{\omega} F(\lambda, \omega_{\lambda})||\\
\overset{(i)}{\le}& ||\partial_{\lambda} F(\lambda, \omega_K) - \partial_{\lambda} F(\lambda, \omega_{\lambda})|| + ||\partial_{\omega} F(\lambda, \omega_{\lambda})||||A_K -A^*|| + ||A^*||e_{p,K}\\
\overset{(ii)}{\le}& L_{F,\lambda}e_{\omega,K} + \frac{C_{G,\omega\lambda}}{\mu_G}e_{p,K} + L_{F,\omega}||A_K -A^*||\\
\end{split}
\end{equation}
where $(i)$ uses triangle inequality and Cauchy-Schwartz inequality; $(ii)$ uses Assumption~\ref{upper_assumption}.a. It is straightforward to verify $||\nabla_{\lambda} f_K - \nabla_{\lambda} f||\to 0$ as $k \to \infty$. if the conditions in the Lemma is satisfied.

\noindent Next for the forward mode, we take similar approaches. Suppose we define $v_K = \sum_{k=0}^{K-1}\beta_k\prod_{s={k+1}}^{K-1}\left(I -\beta_s\partial_{\omega^2} G(\lambda,\omega_s)\right)p_k$
and $v^* = \partial_{\omega^2} G(\lambda, \omega_\lambda)^{-1}\partial_{\omega}F(\lambda, \omega_{\lambda})$. Then we get the recursive equation for $v_k$ as:
\begin{equation}
\label{eq:recursive_appendix2}
\begin{split}
v_{k+1} =& \left(I - \beta_k\partial_{\omega^2} G(\lambda,\omega_k)\right)v_k + \beta_k p_k = (1 -\beta_k)v_k + \beta_k(( I  -\partial_{\omega^2} G(\lambda,\omega_k))v_k + p_k)\\
\end{split}
\end{equation}
then by reorganizing the terms in the definition of $v^*$, we have:
\begin{equation}
\label{eq:exact_appendix2}
v^{*} = (I - \partial_{\omega^2} G(\lambda,\omega_{\lambda}))v^{*} + \partial_{\omega}F(\lambda, \omega_{\lambda})
\end{equation}
By combine Eq.~(\ref{eq:recursive_appendix2}) and Eq.~(\ref{eq:exact_appendix2}), we have:
\begin{equation*}
\begin{split}
||v_{k+1} - v^{*}|| \le&(1- \beta_k)||v_k - v^*|| + \beta_k||( I  -\partial_{\omega^2} G(\lambda,\omega_k))v_k + p_k - (I - \partial_{\omega^2} G(\lambda,\omega_{\lambda}))v^{*} - \partial_{\omega}F(\lambda, \omega_{\lambda})||\\
\overset{(i)}{\le}& (1- \beta_k)||v_k - v^*|| + \beta_k||( I  -\partial_{\omega^2} G(\lambda,\omega_k))v_k - (I - \partial_{\omega^2} G(\lambda,\omega_{\lambda}))v^{*}|| + \beta_ke_{p,k}\\
\overset{(ii)}{\le}& (1- \beta_k)||v_k - v^*|| + \beta_k||( I  -\partial_{\omega^2} G(\lambda,\omega_k))||||v_k - v^*||\\
&+ \beta_k||v^*||||\partial_{\omega^2} G(\lambda, \omega_k) - \partial_{\omega^2} G(\lambda, \omega_{\lambda})|| + \beta_ke_{p,k}\\
\overset{(iii)}{\le}& (1- \beta_k + \beta_k||( I  -\partial_{\omega^2} G(\lambda,\omega_k)) ||)||v_k - v^*|| + \beta_k C_{F,\lambda}L_{G,\omega^2}/\mu_Ge_{\omega,k} + \beta_ke_{p,k}\\
\overset{(iv)}{\le}& (1- \mu_G\beta_k)||v_k - v^*|| + C_2\beta_ke_{\omega,k} + \beta_ke_{p,k}\\
\end{split}
\end{equation*}
where (i) uses the triangle inequality; (ii) uses the Cauchy-schwarz inequality; (iii) uses the property $||v^*|| \le C_{F,\lambda}/\mu_G$; (iv) denotes $C_2 = C_{F,\lambda}L_{G,\omega^2}/\mu_G$. In summary:
\begin{equation}
\label{eq:recur2}
||v_{k+1} - v^{*}|| \le (1- \mu_G\beta_k)||v_k - v^*|| + C_2\beta_ke_{\omega,k} + \beta_ke_{p,k}
\end{equation}
Multiplying both sides by $1/m_k$ and sum, then we have:
\begin{equation*}
\begin{split}
||v_{K} - v^{*}|| &\le m_{K}\left(||v_0 - v^*|| + C_2\sum_{k=0}^{K-1}\beta_ke_{\omega,k}/m_k +  \sum_{k=0}^{K-1}\beta_ke_{p,k}/m_k\right)
\end{split}
\end{equation*}
Finally, since $\nabla_{\lambda} f_K = \partial_{\lambda} F(\lambda, \omega_K) - \partial_{\omega\lambda} G(\lambda, \omega_K)v_K$
and $\nabla_{\lambda} f = \partial_{\lambda} F(\lambda, \omega_{\lambda}) - \partial_{\omega\lambda} G(\lambda, \omega_{\lambda})v^*$, so we have:
\begin{equation*}
\begin{split}
||\nabla_{\lambda} f_K - \nabla_{\lambda} f|| =& ||\partial_{\lambda} F(\lambda, \omega_K) + \partial_{\omega\lambda} G(\lambda, \omega_K)v_K - \partial_{\lambda} F(\lambda, \omega_{\lambda})) - \partial_{\omega\lambda} G(\lambda, \omega_{\lambda})v^*||\\
\overset{(i)}{\le}& ||\partial_{\lambda} F(\lambda, \omega_K) - \partial_{\lambda} F(\lambda, \omega_{\lambda})|| + ||\partial_{\omega\lambda} G(\lambda, \omega_{\lambda})|| *||v_K -v^*||\\
&+ ||v^*||*||\partial_{\omega\lambda} G(\lambda, \omega_{K}) - \partial_{\omega\lambda} G(\lambda, \omega_{\lambda})||\\
\overset{(ii)}{\le}& \left(L_{F,\lambda}+ \frac{C_{F,\lambda}L_{G, \omega\lambda}}{\mu_G}\right)e_{\omega,k} + C_{G,\omega\lambda}||v_K -v^*||\\
\end{split}
\end{equation*}
where $(i)$ uses triangle inequality and Cauchy-Schwartz inequality and $(ii)$ uses assumption~\ref{upper_assumption}.a. It is straightforward to verify $||\nabla_{\lambda} f_K - \nabla_{\lambda} f||\to 0$ as $k \to \infty$ if the conditions in the lemma are satisfied. This completes the proof.
\end{proof}

\noindent Next we show some corollaries that use specific sequences:
\begin{corollary}
\label{corollary:choice1}
Given a positive integer $K$, inner state $\hat{\omega}_K$ and constant $\beta$. We use backward mode of $s_k$ and set $\omega_{k} = \hat{\omega}_K$, $p_k = \partial_{\omega} F(\lambda, \hat{\omega}_K)$ and $\beta_k = \beta$ for $k \in [K]$. If $\exists\ \epsilon \in (0,1)$, such that $\beta_k \in (\epsilon/\mu_G, 1/\mu_G)$, then we have:
\begin{equation*}
||\nabla_{\lambda} f_K - \nabla_{\lambda} f|| = O(e_{\omega,K})
\end{equation*}
\end{corollary}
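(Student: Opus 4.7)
My plan is to reuse the per-step recursion for $\|A_k - A^*\|$ derived in the proof of Lemma~\ref{lemma:conv_cond_main}, specialized to the frozen-state, constant-step-size regime of this corollary. I would start from the decomposition already established as Eq.~(\ref{eq:A_conv}):
\[
\|\nabla_\lambda f_K - \nabla_\lambda f\| \le L_{F,\lambda}\, e_{\omega,K} + \frac{C_{G,\omega\lambda}}{\mu_G}\, e_{p,K} + L_{F,\omega}\, \|A_K - A^*\|,
\]
so the task reduces to showing that $e_{p,K}$ and $\|A_K - A^*\|$ are both $O(e_{\omega,K})$.

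The bound on $e_{p,K}$ is immediate: since $p_k = \partial_\omega F(\lambda, \hat{\omega}_K)$, Lipschitz continuity of $\partial_\omega F$ (Assumption~\ref{upper_assumption}) gives $e_{p,K} \le L_{F,\omega}\, e_{\omega,K}$. For $\|A_K - A^*\|$ I would exploit the fact that $\omega_k = \hat{\omega}_K$ and $\beta_k = \beta$ do not depend on $k$, so the recursion Eq.~(\ref{eq:recur}) collapses to
\[
\|A_{k+1} - A^*\| \le (1 - \mu_G\beta)\, \|A_k - A^*\| + C_1\, \beta\, e_{\omega,K}.
\]
The hypothesis $\beta \in (\epsilon/\mu_G,\, 1/\mu_G)$ gives a contraction factor $1 - \mu_G\beta \in (0,\, 1-\epsilon)$ strictly bounded away from $1$. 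Unrolling the recursion with the natural Neumann-series initialization $A_0 = 0$ (so that $\|A_0 - A^*\| \le C_{G,\omega\lambda}/\mu_G$) yields
\[
\|A_K - A^*\| \le (1 - \mu_G\beta)^K\, \frac{C_{G,\omega\lambda}}{\mu_G} + \frac{C_1}{\mu_G}\, \bigl(1 - (1 - \mu_G\beta)^K\bigr)\, e_{\omega,K}.
\]
Substituting the two ingredients back into the decomposition produces an inequality of the shape $\|\nabla_\lambda f_K - \nabla_\lambda f\| \le C_{\mathrm{st}}\, e_{\omega,K} + C_{\mathrm{tr}}\, (1-\mu_G\beta)^K$ with explicit constants $C_{\mathrm{st}}, C_{\mathrm{tr}}$ depending only on the Lipschitz parameters and $\mu_G$.

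The only genuinely delicate point is reconciling the residual transient $(1-\mu_G\beta)^K$ with the stated $O(e_{\omega,K})$ scaling, since that term is not literally proportional to $e_{\omega,K}$. In the NS-style regime the corollary targets, $\hat{\omega}_K$ is itself the output of a linearly convergent inner solver, so $e_{\omega,K}$ decays no faster than the contraction rate and the transient is dominated; alternatively, one can read the statement as ``$O(e_{\omega,K})$ up to an additive term that vanishes geometrically in $K$.'' Beyond that, the proof is a mechanical assembly of the triangle and Cauchy--Schwarz inequalities under the standing smoothness assumptions, and I do not anticipate further obstacles.
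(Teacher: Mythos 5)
Your proposal is correct and follows essentially the same route as the paper: it specializes the recursion $\|A_{k+1}-A^*\|\le(1-\mu_G\beta)\|A_k-A^*\|+C_1\beta e_{\omega,K}$ to the constant-$\beta$, frozen-state setting, sums the geometric series, bounds $e_{p,K}\le L_{F,\omega}e_{\omega,K}$, and plugs everything into Eq.~(\ref{eq:A_conv}). Even the caveat you flag about the transient $(1-\mu_G\beta)^K$ is the same one the paper records (``the last equality holds if $e_{\omega,k}$ converges slower than linear rate''), so no gap remains.
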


\begin{corollary}
\label{corollary:choice2}
Given a positive integer $K$, we have sequence $\{\hat{\omega}_k\}$ and $\{\hat{\beta}_k\}$ for $k\in[K]$. We use backward mode of $s_k$ and pick $\omega_{k} = \hat{\omega}_k$, $p_k = \partial_{\omega} F(\lambda, \hat{\omega}_K)$ and $\beta_k = \hat{\beta}_k$ for $k \in [K]$. Suppose we have $\beta_k = O(k^{-1})$ and $e_{\omega,k} =O(k^{-0.5})$, then:
\begin{equation*}
||\nabla_{\lambda} f_K - \nabla_{\lambda} f|| = O(K^{-{0.5}})
\end{equation*}
\end{corollary}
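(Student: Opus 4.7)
The plan is to reduce the corollary to the machinery already assembled inside the proof of Lemma~\ref{lemma:conv_cond}. Specifically, the bound
\[
\|\nabla_{\lambda} f_K - \nabla_{\lambda} f\| \;\le\; L_{F,\lambda}\,e_{\omega,K} + \frac{C_{G,\omega\lambda}}{\mu_G}\,e_{p,K} + L_{F,\omega}\,\|A_K - A^*\|
\]
derived in Eq.~(\ref{eq:A_conv}) holds verbatim here, so I just need to show each of the three summands is $O(K^{-0.5})$.

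The first term is immediate from the hypothesis $e_{\omega,K} = O(K^{-0.5})$. The second term is also easy: since we chose $p_k = \partial_{\omega} F(\lambda,\hat{\omega}_K)$, Assumption~\ref{upper_assumption}.a (Lipschitzness of $\partial_{\omega}F$) gives $e_{p,K} \le L_{F,\omega} e_{\omega,K} = O(K^{-0.5})$. So the whole task reduces to bounding $\|A_K-A^*\|$ via the one‑step recursion Eq.~(\ref{eq:recur}),
\[
\|A_{k+1} - A^*\| \le (1-\mu_G\beta_k)\|A_k - A^*\| + C_1\beta_k e_{\omega,k},
\]
which, after multiplying by $1/m_k$ and telescoping, yields $\|A_K - A^*\| \le m_K\|A_0-A^*\| + C_1 m_K\sum_{k=0}^{K-1}\beta_k e_{\omega,k}/m_k$.

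The main obstacle — and the only nontrivial calculation — is asymptotic control of $m_K$ and of the weighted sum. Writing $\beta_k = c/k$ for large $k$ with $c$ large enough that $a\coloneqq \mu_G c > 1/2$, I will use $\log(1-\mu_G\beta_s) = -\mu_G c/s + O(s^{-2})$ so that $\log m_K = -a\sum_{s=1}^K 1/s + O(1) = -a\log K + O(1)$, giving $m_K = \Theta(K^{-a})$ and similarly $m_k = \Theta(k^{-a})$. Then
\[
\beta_k e_{\omega,k}/m_k \;=\; O\!\bigl(k^{-1}\cdot k^{-1/2}\cdot k^{a}\bigr) \;=\; O\!\bigl(k^{a-3/2}\bigr),
\]
and summing from $k=1$ to $K-1$ with the integral test gives $\sum = O(K^{a-1/2})$ (using $a > 1/2$). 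Multiplying by $m_K = \Theta(K^{-a})$ collapses the exponents to $O(K^{-1/2})$, so $\|A_K - A^*\| = O(K^{-1/2})$. Combining with the first two terms through the Eq.~(\ref{eq:A_conv}) bound yields $\|\nabla_{\lambda} f_K - \nabla_{\lambda} f\| = O(K^{-1/2})$, as claimed. The one subtlety to flag is the implicit requirement that the hidden constant $c$ in $\beta_k = O(k^{-1})$ satisfies $\mu_G c > 1/2$; this is a standard step‑size condition analogous to the one in stochastic approximation and is compatible with the $O(k^{-1/2})$ inner error assumed on $\{\hat\omega_k\}$.
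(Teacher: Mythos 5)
Your proof is correct, and it reaches the bound on $\|A_K-A^*\|$ by a genuinely different route than the paper. Both arguments share the same skeleton: reduce everything through Eq.~(\ref{eq:A_conv}), handle $e_{\omega,K}$ and $e_{p,K}\le L_{F,\omega}e_{\omega,K}$ trivially, and then control $\|A_K-A^*\|$ from the one-step recursion Eq.~(\ref{eq:recur}). The paper closes that last step by induction: it writes $\beta_k=\theta_1/(k+1)$ with $\theta_1>1/(2\mu_G)$, picks $\theta_3=\max\left(\|A_0-A^*\|,\ \tfrac{C_1\theta_1\theta_2}{\mu_G\theta_1-0.5}\right)$, and verifies $\|A_k-A^*\|\le\theta_3/(k+1)^{0.5}$ propagates one step at a time. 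You instead telescope the recursion into $\|A_K-A^*\|\le m_K\bigl(\|A_0-A^*\|+C_1\sum_k\beta_k e_{\omega,k}/m_k\bigr)$ (the form already derived inside the proof of Lemma~\ref{lemma:conv_cond}) and then estimate $m_K=\Theta(K^{-a})$ with $a=\mu_G c$ via logarithms and the integral test. The induction proof is shorter and keeps explicit constants, but it requires guessing the right $\theta_3$; your asymptotic analysis is more transparent about \emph{why} the rate is $K^{-1/2}$ — the exponent $a$ cancels between $m_K$ and the weighted sum precisely when $a>1/2$ — and it generalizes more readily to other decay rates of $e_{\omega,k}$. Notably, the step-size condition you flag, $\mu_G c>1/2$, is exactly the condition $\theta_1>1/(2\mu_G)$ that the paper's induction also needs (and which is not literally implied by the hypothesis $\beta_k=O(k^{-1})$ in the corollary's statement), so you have not introduced any restriction beyond what the paper itself uses; making it explicit is a point in your favor.
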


\begin{corollary}
\label{corollary:choice3}
Given a positive integer $K$, we have sequence $\{\hat{\omega}_k\}$ and $\{\hat{\beta}_k\}$ for $k\in[K]$. We use forward mode of $s_k$ and pick $\omega_{k} = \hat{\omega}_k$, $p_k = \partial_{\omega} F(\lambda, \hat{\omega}_k)$ and $\beta_k = \hat{\beta}_k$ for $k \in [K]$. Suppose we have $\beta_k = O(k^{-1})$ and $e_{\omega,k} =O(k^{-0.5})$, then:
\begin{equation*}
||\nabla_{\lambda} f_K - \nabla_{\lambda} f|| = O(K^{-{0.5}})
\end{equation*}
\end{corollary}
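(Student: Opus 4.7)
The plan is to instantiate Lemma~\ref{lemma:conv_cond_main} (and specifically the forward-mode bound derived in its proof) with the chosen sequences, and then sharpen the asymptotic conclusion from mere convergence to an $O(K^{-1/2})$ rate. First I would observe that, since $p_k = \partial_{\omega} F(\lambda, \hat{\omega}_k)$ and $\partial_{\omega} F$ is $L_{F,\omega}$-Lipschitz in $\omega$ (Assumption~A), we automatically get
\[
e_{p,k} \;=\; \|p_k - \partial_{\omega} F(\lambda, \omega_{\lambda})\| \;\le\; L_{F,\omega}\, e_{\omega,k} \;=\; O(k^{-1/2}).
\]
This reduces the problem to controlling a single error schedule with rate $k^{-1/2}$ driven by step sizes $\beta_k = O(k^{-1})$, since the $e_{\omega,k}$- and $e_{p,k}$-terms enter the forward-mode recursion symmetrically.

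Next I would unroll the forward-mode recursion established in the proof of Lemma~\ref{lemma:conv_cond_main},
\[
\|v_{k+1} - v^{*}\| \;\le\; (1 - \mu_G \beta_k)\|v_k - v^{*}\| + C_2 \beta_k e_{\omega,k} + \beta_k e_{p,k},
\]
to obtain
\[
\|v_K - v^{*}\| \;\le\; m_K\!\left(\|v_0 - v^{*}\| + (C_2 + L_{F,\omega})\sum_{k=0}^{K-1} \frac{\beta_k e_{\omega,k}}{m_k}\right),
\]
where $m_k = \prod_{s=0}^{k}(1-\mu_G \beta_s)$. Writing $\beta_k = c_\beta/k$ and choosing $c_\beta$ large enough that $\mu_G c_\beta > 1/2$ (which is implicit in the asymptotic statement $\beta_k = O(k^{-1})$ together with the desired rate), I would estimate $m_K = \Theta(K^{-\mu_G c_\beta})$ by comparing $\sum_s \ln(1-\mu_G\beta_s)$ with the harmonic sum, and then use the standard asymptotic $\sum_{k=1}^{K-1} k^{\mu_G c_\beta - 3/2} = \Theta(K^{\mu_G c_\beta - 1/2})$ to conclude
\[
m_K \sum_{k=0}^{K-1} \frac{\beta_k e_{\omega,k}}{m_k} \;=\; \Theta\!\left(K^{-\mu_G c_\beta} \cdot K^{\mu_G c_\beta - 1/2}\right) \;=\; O(K^{-1/2}).
\]
Therefore $\|v_K - v^{*}\| = O(K^{-1/2})$.

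Finally I would plug this into the bound established in the forward-mode portion of the proof of Lemma~\ref{lemma:conv_cond_main},
\[
\|\nabla_{\lambda} f_K - \nabla_{\lambda} f\| \;\le\; \left(L_{F,\lambda} + \frac{C_{F,\lambda} L_{G,\omega\lambda}}{\mu_G}\right) e_{\omega,K} + C_{G,\omega\lambda}\,\|v_K - v^{*}\|,
\]
and since $e_{\omega,K} = O(K^{-1/2})$ by hypothesis and $\|v_K - v^{*}\| = O(K^{-1/2})$ by the computation above, the claim follows.

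The main technical obstacle is the asymptotic estimation of $m_K$ and of the weighted sum $\sum_k \beta_k e_{\omega,k}/m_k$: one must verify that the exponential decay produced by $(1-\mu_G\beta_k)$ with $\beta_k \propto 1/k$ precisely cancels the growth of the partial sums, yielding the $O(K^{-1/2})$ rate, and this requires the implicit condition $\mu_G c_\beta > 1/2$ on the leading constant of $\beta_k$. Everything else reduces to Lipschitz estimates (for $e_{p,k}$) and to the recursions already worked out in Lemma~\ref{lemma:conv_cond_main}.
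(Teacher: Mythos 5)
Your proposal is correct, and it follows the same skeleton as the paper's argument: reduce $e_{p,k}$ to $e_{\omega,k}$ via the Lipschitzness of $\partial_\omega F$, feed everything into the forward-mode recursion $\|v_{k+1}-v^*\|\le(1-\mu_G\beta_k)\|v_k-v^*\|+C_2\beta_k e_{\omega,k}+\beta_k e_{p,k}$ from the proof of Lemma~\ref{lemma:conv_cond_main}, and finish with the same final inequality relating $\|\nabla_\lambda f_K-\nabla_\lambda f\|$ to $e_{\omega,K}$ and $\|v_K-v^*\|$. The only place you diverge is in how the $O(K^{-1/2})$ rate is extracted from the recursion: the paper proves $\|v_k-v^*\|\le\theta_3/(k+1)^{1/2}$ by induction, with an explicit constant $\theta_3=\max(\|v_0-v^*\|,\,C\theta_1\theta_2/(\mu_G\theta_1-0.5))$, whereas you unroll the recursion and estimate $m_K=\Theta(K^{-\mu_G c_\beta})$ and $\sum_k k^{\mu_G c_\beta-3/2}=\Theta(K^{\mu_G c_\beta-1/2})$ asymptotically. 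Both routes hinge on the identical hidden requirement $\mu_G c_\beta>1/2$ (the paper writes it as $\theta_1>1/(2\mu_G)$ in the Corollary~2 proof), which you correctly flag as implicit in the statement; your version makes the cancellation of the two power laws more transparent, while the paper's induction avoids having to justify the product-to-power asymptotics for $m_K$ (which, strictly, needs $\mu_G\beta_s<1$ for small $s$ — a technicality both treatments gloss over).
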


\begin{proof}
We first prove Corollary~\ref{corollary:choice1}. Based on Lemma~\ref{lemma:conv_cond}, Take corresponding values from Corollary~\ref{corollary:choice1}, we have:
\begin{equation*}
\begin{split}
||A_{K} - A^{*}|| &\le m_{K}\left(||A_0 - A^*|| + C_1\sum_{k=0}^{K-1}\beta_ke_{\omega,k}/m_k\right) \le (1-\epsilon)^{K+1}||A_0 - A^*|| + \frac{C_1e_{\omega,K}}{\mu_G}\sum_{k=0}^{K-1}(1-\epsilon)^{K-k}\\
&\le (1-\epsilon)^{K+1}||A_0 - A^*|| + \frac{C_1(1-\epsilon)(1 -(1-\epsilon)^K)}{\mu_G\epsilon}e_{\omega,K}\\
&\le (1-\epsilon)^{K+1}||A_0 - A^*|| + \frac{C_1(1-\epsilon)}{\mu_G\epsilon}e_{\omega,K} = O(e_{\omega,K})
\end{split}
\end{equation*}
The last equality holds if $e_{\omega,k}$ converges slower than linear rate. Then we combine with Eq.~(\ref{eq:A_conv}), and notice that $e_{p,K} = ||\partial_{\omega} F(\lambda, \hat{\omega}_K) - \partial_{\omega} F(\lambda, \omega_{\lambda})|| \le L_{F,\omega}e_{\omega, K}$. It is straightforward to verify $||\nabla_{\lambda} f_K - \nabla_{\lambda} f|| = O(e_{\omega,K})$. This completes the proof for Corollary~\ref{corollary:choice1}.

Next we use induction to prove Corollary~\ref{corollary:choice2}. By the condition in the corollary, we have $\beta_k = \frac{\theta_1}{k+1}$ and $e_{\omega,k} = \frac{\theta_2}{(k+1)^{0.5}}$ for some constants $\theta_1 > 1/2\mu_G$ and $\theta_2$. Now we pick $\theta_3 = max(||A_0 - A^*||, \frac{C_1\theta_1\theta_2}{\mu_G\theta_1 - 0.5})$ with $\theta_3$ some constant. Then the base of induction satisfies naturally, next suppose we have $||A_k - A^*||  \le \frac{\theta_3}{(k+1)^{0.5}}$. By Eq.~(\ref{eq:recur}):
\begin{equation*}
    \begin{split}
        ||A_{k+1} - A^{*}|| \le& (1- \frac{\mu_G\theta_1}{k+1})\frac{\theta_3}{(k+1)^{0.5}} + \frac{C_1\theta_1\theta_2}{(k+1)^{1.5}}
        \le (1- \frac{\mu_G\theta_1}{k+1})\frac{\theta_3}{(k+1)^{0.5}} + \frac{C_1\theta_1\theta_2}{(k+1)^{1.5}}\\
        \le& (1- \frac{\mu_G\theta_1}{k+1})\frac{\theta_3}{(k+1)^{0.5}} + \frac{\theta_3(\mu_G\theta_1 - 0.5)}{(k+1)^{1.5}}
        \le \frac{\theta_3}{(k+1)^{0.5}} + \frac{-0.5\theta_3}{(k+1)^{1.5}}\\
        \le& \frac{\theta_3}{(k+1)^{0.5}}\left(1 - \frac{1}{2(k+1)} \right) \le \frac{\theta_3}{(k+1)^{0.5}}\times \frac{(k+1)^{0.5}}{(k+2)^{0.5}} \le \frac{\theta_3}{(k+2)^{0.5}}
    \end{split}
\end{equation*}
To get  the second to last inequality, it is equivalent to get 
$\left(1 - \frac{1}{2(k+1)} \right)^2 \le 1 -\frac{1}{k+2}$, which is straightforward to verify. Then we combine with Eq.~(\ref{eq:A_conv}), and notice that $e_{p,K} = ||\partial_{\omega} F(\lambda, \hat{\omega}_K) - \partial_{\omega} F(\lambda, \omega_{\lambda})|| \le L_{F,\omega}e_{\omega, K}$. It is straightforward to verify $||\nabla_{\lambda} f_K - \nabla_{\lambda} f|| = O(e_{\omega,K}) = O(K^{-0.5})$. This finishes the proof for Corollary~\ref{corollary:choice2}. As for Corollary~\ref{corollary:choice3}, we follow similar induction steps as in the proof for Corollary~\ref{corollary:choice2}, but we instead use the recursive relation in Eq.~(\ref{eq:recur2})
\end{proof}
\noindent It is also possible to consider other sequences, for example, when $e_{\omega,k}$ converges in a slower rate such as $O(K^{0.25})$, but these cases are not as realistic and inspiring as the three corollaries here.

\section{Proof for Theorem 5 and the Lemmas}
\label{Appendix:D}
\subsection{Preliminary Results and Notations}
We first restate some definitions: $\omega_{\lambda_k} = \underset{\omega}{\arg\min} \ G(\lambda_k, \omega)$; $v_{\lambda_k} = \partial_{\omega^2} G(\lambda_k, \omega_{\lambda_k})^{-1}\partial_{\omega} F(\lambda_k, \omega_{\lambda_k})$; $\nabla f_{k} = \partial_{\lambda} F(\lambda_k, \omega_{k}) - \partial_{\omega\lambda} G(\lambda_k, \omega_{k})v_{k}$, $\nabla f_{k} (\xi_k)$ denotes its stochastic estimate as in Algorithm 1. The expectation in the subsequent sections is in terms of the randomness of $\epsilon_k$ as defined in the Algorithm~1 of the main text.
Next, we show some properties for $\omega_{\lambda}$ and $v_{\lambda}$ in the following proposition:
\begin{proposition}
\label{prop:5}
With Assumption~\ref{upper_assumption} and~\ref{lower_assumption} hold , we have:
\begin{equation*}
\begin{split}
||\omega_{\lambda_1} - \omega_{\lambda_2}|| \le C_{\omega}||\lambda_1 - \lambda_2||,
||v_{\lambda_1} - v_{\lambda_2}|| \le C_{v}||\lambda_1 - \lambda_2||   
\end{split}
\end{equation*}
Where $C_{\omega} = \frac{C_{G,\omega\lambda}}{\mu_G}$ and $C_{v} = \left(\frac{C_{F,\omega}L_{G, \omega\omega}}{\mu_G^2} + \frac{L_{F,\omega}}{\mu_G}\right)(1 + C_{\omega})$. Furthermore, we have $||v_{\lambda}|| \le \frac{C_{F,\omega}}{\mu_G}$, and we denote it as $M = \frac{C_{F,\omega}}{\mu_G}$.
\end{proposition}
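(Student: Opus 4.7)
The plan is to prove the three claims in the order they appear. For the Lipschitz bound on $\omega_\lambda$, I would use the first-order optimality condition $\partial_\omega G(\lambda_i, \omega_{\lambda_i}) = 0$ for $i=1,2$ combined with the $\mu_G$-strong convexity of $G$ in $\omega$. Concretely, I would start from
\[
\mu_G \|\omega_{\lambda_1}-\omega_{\lambda_2}\|^2 \le \langle \partial_\omega G(\lambda_1,\omega_{\lambda_1}) - \partial_\omega G(\lambda_1,\omega_{\lambda_2}), \omega_{\lambda_1}-\omega_{\lambda_2}\rangle,
\]
then substitute $\partial_\omega G(\lambda_1,\omega_{\lambda_1}) = 0 = \partial_\omega G(\lambda_2,\omega_{\lambda_2})$ so that the right-hand side becomes $\langle \partial_\omega G(\lambda_2,\omega_{\lambda_2}) - \partial_\omega G(\lambda_1,\omega_{\lambda_2}), \omega_{\lambda_1}-\omega_{\lambda_2}\rangle$. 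Applying Cauchy--Schwarz and bounding the first factor by $C_{G,\omega\lambda}\|\lambda_1-\lambda_2\|$ via Assumption~B.b, then dividing through by $\|\omega_{\lambda_1}-\omega_{\lambda_2}\|$, yields exactly $C_\omega = C_{G,\omega\lambda}/\mu_G$.

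For the bound $\|v_\lambda\| \le M$, I would just take norms in the definition $v_\lambda = \partial_{\omega^2} G(\lambda,\omega_\lambda)^{-1}\partial_\omega F(\lambda,\omega_\lambda)$, use $\|\partial_{\omega^2} G(\lambda,\omega_\lambda)^{-1}\| \le 1/\mu_G$ (from strong convexity) and $\|\partial_\omega F(\lambda,\omega_\lambda)\| \le C_{F,\omega}$ (from Assumption~A.b).

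The main step is the Lipschitz bound on $v_\lambda$. I would add and subtract $\partial_{\omega^2} G(\lambda_2,\omega_{\lambda_2})^{-1}\partial_\omega F(\lambda_1,\omega_{\lambda_1})$ to decompose $v_{\lambda_1}-v_{\lambda_2}$ into two pieces. The first piece uses the identity $A^{-1}-B^{-1} = A^{-1}(B-A)B^{-1}$ to produce a factor $\|\partial_{\omega^2} G(\lambda_2,\omega_{\lambda_2}) - \partial_{\omega^2} G(\lambda_1,\omega_{\lambda_1})\|$, which I bound by joint Lipschitzness $L_{G,\omega\omega}(\|\lambda_1-\lambda_2\|+\|\omega_{\lambda_1}-\omega_{\lambda_2}\|)$ and then by $L_{G,\omega\omega}(1+C_\omega)\|\lambda_1-\lambda_2\|$ using the first claim; together with two $1/\mu_G$ factors and $\|\partial_\omega F(\lambda_1,\omega_{\lambda_1})\|\le C_{F,\omega}$, this contributes $\tfrac{C_{F,\omega}L_{G,\omega\omega}(1+C_\omega)}{\mu_G^2}\|\lambda_1-\lambda_2\|$. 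The second piece uses $L_{F,\omega}$-smoothness of $\partial_\omega F$ and the same reduction through $C_\omega$, contributing $\tfrac{L_{F,\omega}(1+C_\omega)}{\mu_G}\|\lambda_1-\lambda_2\|$. Summing matches $C_v$ as stated.

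The only mildly nontrivial step is the inverse-difference identity and making sure the two arguments of $\partial_{\omega^2} G$ are handled jointly; everything else is a direct application of Assumptions~A and~B together with the first bound. I would keep the write-up compact by invoking the Lipschitz constants directly from Assumption~B.c and folding $\|\omega_{\lambda_1}-\omega_{\lambda_2}\| \le C_\omega\|\lambda_1-\lambda_2\|$ into a single $(1+C_\omega)$ factor at the end.
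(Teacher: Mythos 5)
Your proposal is correct and follows essentially the same route as the paper: the same add-and-subtract decomposition of $v_{\lambda_1}-v_{\lambda_2}$, the same identity $A^{-1}-B^{-1}=A^{-1}(B-A)B^{-1}$, and the same folding of $\|\omega_{\lambda_1}-\omega_{\lambda_2}\|\le C_\omega\|\lambda_1-\lambda_2\|$ into a $(1+C_\omega)$ factor, yielding exactly the stated $C_v$ and $M$. The only difference is that you prove the Lipschitzness of $\omega_\lambda$ from scratch via strong convexity and the first-order optimality conditions (correctly), whereas the paper delegates that step to Lemma~2.2 of \citet{ghadimi2018approximation}.
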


\begin{proof}
The Lipschitzness of $\omega_{\lambda}$ is a rephrase of Lemma 2.2, case b)~\cite{ghadimi2018approximation}. Please refer to the paper for the proof. Then based on the definition of $v_{\lambda} = \partial_{\omega^2} G(\lambda, \omega_{\lambda})^{-1}\partial_{\omega} F(\lambda, \omega_{\lambda})$, we get its bound based on Assumption~\ref{upper_assumption}.b and~\ref{lower_assumption}.a; as for its Lipschitzness, we have:
\begin{equation*}
\begin{split}
||v_{\lambda_1} - v_{\lambda_2}|| =& ||\partial_{\omega^2} G(\lambda_1, \omega_{\lambda_1})^{-1}\partial_{\omega} F(\lambda_1, \omega_{\lambda_1}) - \partial_{\omega^2} G(\lambda_2, \omega_{\lambda_2})^{-1}\partial_{\omega} F(\lambda_2, \omega_{\lambda_2})||\\
\le& ||\partial_{\omega} F(\lambda_1, \omega_{\lambda_1})||||\partial_{\omega^2} G(\lambda_1, \omega_{\lambda_1})^{-1} - \partial_{\omega^2} G(\lambda_2, \omega_{\lambda_2})^{-1}||\\ 
&+ ||\partial_{\omega^2} G(\lambda_2, \omega_{\lambda_2})^{-1}||||\partial_{\omega} F(\lambda_1, \omega_{\lambda_1}) - \partial_{\omega} F(\lambda_2, \omega_{\lambda_2})||\\
\le& \left(\frac{C_{F,\omega}L_{G,\omega\omega}}{\mu_G^2} + \frac{L_{F,\omega}}{\mu_G}\right)(||\lambda_1 - \lambda_2|| + ||\omega_{\lambda_1} - \omega_{\lambda_2}||)\\
\le& \left(\frac{C_{F,\omega}L_{G,\omega\omega}}{\mu_G^2} + \frac{L_{F,\omega}}{\mu_G}\right)(1 + C_{\omega})||\lambda_1 - \lambda_2||
\end{split}   
\end{equation*}
To bound the term $||\partial_{\omega^2} G(\lambda_1, \omega_{\lambda_1})^{-1} - \partial_{\omega^2} G(\lambda_2, \omega_{\lambda_2})^{-1}||$, we use the fact that $||H_2^{-1} - H_1^{-1}|| = ||H_1^{-1}(H_2 - H_1)H_2^{-1}|| \le ||H_1^{-1}||||H_2 - H_1||||H_2^{-1}||$. Then it is straightforward to get the bound by Assumption~\ref{upper_assumption} and~\ref{lower_assumption}.
This completes the proof.
\end{proof}

Next we show two useful bounds for $||\omega_k - \omega_{\lambda_k}||^2$ and $||v_{k} - v_{\lambda_{k}}||^2$:
\begin{proposition}
\label{prop:6}
With Assumption~\ref{upper_assumption} and~\ref{lower_assumption}  hold, we have:
\begin{equation*}
\begin{split}
E[||\omega_k - \omega_{\lambda_k}||^2] \le& (1 + \gamma_{\omega}\alpha_{k-1})E[||\omega_{k} - \omega_{\lambda_{k-1}}||^2] + \frac{C_{\omega}^2\alpha_{k-1}}{\gamma_{\omega}}(1 + \gamma_{\omega}\alpha_{k-1}) E[||d_{k-1}||^2]
\end{split}
\end{equation*}
where $\gamma_{\omega}$ is some constant.
\end{proposition}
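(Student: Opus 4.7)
The plan is to bound $\|\omega_k - \omega_{\lambda_k}\|^2$ by splitting off the change in the implicit minimizer caused by the outer update from $\lambda_{k-1}$ to $\lambda_k$. Specifically, I would insert $\omega_{\lambda_{k-1}}$ into the norm and apply the relaxed triangle inequality from Proposition~3 with a parameter $a > 0$ to be chosen:
\begin{equation*}
\|\omega_k - \omega_{\lambda_k}\|^2 \le (1+a)\|\omega_k - \omega_{\lambda_{k-1}}\|^2 + \left(1+\tfrac{1}{a}\right)\|\omega_{\lambda_{k-1}} - \omega_{\lambda_k}\|^2.
\end{equation*}

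Next I would estimate the second term using the Lipschitz property of $\lambda \mapsto \omega_\lambda$ from Proposition~\ref{prop:5}, namely $\|\omega_{\lambda_{k-1}} - \omega_{\lambda_k}\| \le C_\omega \|\lambda_{k-1} - \lambda_k\|$, together with the outer update rule $\lambda_k = \lambda_{k-1} - \alpha_{k-1} d_{k-1}$, which gives $\|\lambda_{k-1} - \lambda_k\| = \alpha_{k-1}\|d_{k-1}\|$. Substituting yields
\begin{equation*}
\|\omega_k - \omega_{\lambda_k}\|^2 \le (1+a)\|\omega_k - \omega_{\lambda_{k-1}}\|^2 + \left(1+\tfrac{1}{a}\right) C_\omega^2 \alpha_{k-1}^2 \|d_{k-1}\|^2.
\end{equation*}

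To obtain the desired form, I would choose $a = \gamma_\omega \alpha_{k-1}$ with $\gamma_\omega$ an arbitrary positive constant. Then $1 + 1/a = (1 + \gamma_\omega \alpha_{k-1})/(\gamma_\omega \alpha_{k-1})$, so the coefficient of $\|d_{k-1}\|^2$ becomes $C_\omega^2 \alpha_{k-1}(1+\gamma_\omega \alpha_{k-1})/\gamma_\omega$, matching the statement exactly. Taking expectations over the randomness of the stochastic oracle on both sides finishes the argument.

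There is no real obstacle here; the result is essentially a one-line application of Young's inequality combined with the Lipschitz bound on $\omega_\lambda$ and the outer update rule. The only delicate choices are (i) inserting $\omega_{\lambda_{k-1}}$ (rather than some other intermediate point) so that the Lipschitz constant $C_\omega$ is directly usable, and (ii) scaling the Young's inequality parameter with $\alpha_{k-1}$ so that the factor $(1+\gamma_\omega\alpha_{k-1})$ remains close to $1$ as $\alpha_{k-1} \to 0$, which will later be essential for telescoping this recursive bound in the convergence proof.
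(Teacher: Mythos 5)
Your proposal is correct and follows essentially the same route as the paper: the relaxed triangle inequality with parameter $a=\gamma_\omega\alpha_{k-1}$ applied after inserting $\omega_{\lambda_{k-1}}$, then the Lipschitz bound $\|\omega_{\lambda_{k-1}}-\omega_{\lambda_k}\|\le C_\omega\|\lambda_{k-1}-\lambda_k\|$ from Proposition~\ref{prop:5} combined with the update rule $\lambda_k=\lambda_{k-1}-\alpha_{k-1}d_{k-1}$. The algebra for the coefficient of $E[\|d_{k-1}\|^2]$ also matches the paper exactly.
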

\begin{proof}
Firstly, we have:
\begin{equation*}
\begin{split}
E[||\omega_k - \omega_{\lambda_k}||^2]
\overset{(i)}{\le}& (1 + \gamma_{\omega}\alpha_{k-1})E[||\omega_{k} - \omega_{\lambda_{k-1}}||^2] + (1 + \frac{1}{\gamma_{\omega}\alpha_{k-1}}) E[||\omega_{\lambda_{k-1}} - \omega_{\lambda_k}||^2]\\
\overset{(ii)}{\le}& (1 + \gamma_{\omega}\alpha_{k-1})E[||\omega_{k} - \omega_{\lambda_{k-1}}||^2] + (1 + \frac{1}{\gamma_{\omega}\alpha_{k-1}})C_{\omega}^2 E[||\lambda_{k-1} - \lambda_k||^2]\\
\le& (1 + \gamma_{\omega}\alpha_{k-1})E[||\omega_{k} - \omega_{\lambda_{k-1}}||^2] + (1 + \frac{1}{\gamma_{\omega}\alpha_{k-1}})C_{\omega}^2 \alpha_{k-1}^2E[||d_{k-1}||^2]\\
\le& (1 + \gamma_{\omega}\alpha_{k-1})E[||\omega_{k} - \omega_{\lambda_{k-1}}||^2] + \frac{C_{\omega}^2\alpha_{k-1}}{\gamma_{\omega}}(1 + \gamma_{\omega}\alpha_{k-1}) E[||d_{k-1}||^2]\\
\end{split}
\end{equation*}
where $(i)$ uses Proposition A.1 and we take $\alpha$ as $\gamma_{\omega}\alpha_{k-1}$. $(ii)$ is based on Proposition C.1. This completes the proof.
\end{proof}

\begin{proposition}
\label{prop:7}
With Assumption~\ref{upper_assumption} and~\ref{lower_assumption} hold, we have:
\begin{equation*}
\begin{split}
E[||v_{k} - v_{\lambda_{k}}||^2] \le& (1 + \gamma_v\alpha_{k-1})E[||v_{k} - v_{\lambda_{k-1}}||^2] + \frac{C_{v}^2 \alpha_{k-1}}{\gamma_v}(1 + \gamma_v\alpha_{k-1})E[||d_{k-1}||^2]
\end{split}
\end{equation*}
where $\gamma_v$ is some constant.
\end{proposition}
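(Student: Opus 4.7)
The plan is to mirror the proof of Proposition~\ref{prop:6} almost verbatim, replacing the role of $\omega$ with $v$ and the Lipschitz constant $C_\omega$ with $C_v$ from Proposition~\ref{prop:5}. The key structural ingredients are already in place: the relaxed triangle inequality (Proposition~A.1), the Lipschitz bound $\|v_{\lambda_1}-v_{\lambda_2}\|\le C_v\|\lambda_1-\lambda_2\|$ established in Proposition~\ref{prop:5}, and the outer update rule $\lambda_k=\lambda_{k-1}-\alpha_{k-1}d_{k-1}$ from Algorithm~1.

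First I would split the error at step $k$ by inserting $\pm v_{\lambda_{k-1}}$ and applying the relaxed triangle inequality with parameter $\gamma_v\alpha_{k-1}>0$:
\begin{equation*}
E[\|v_k - v_{\lambda_k}\|^2] \le (1+\gamma_v\alpha_{k-1})E[\|v_k - v_{\lambda_{k-1}}\|^2] + \Bigl(1+\tfrac{1}{\gamma_v\alpha_{k-1}}\Bigr)E[\|v_{\lambda_{k-1}} - v_{\lambda_k}\|^2].
\end{equation*}
Next I would control the drift term $\|v_{\lambda_{k-1}}-v_{\lambda_k}\|^2$ by the Lipschitzness of $\lambda\mapsto v_\lambda$ from Proposition~\ref{prop:5}, giving $\|v_{\lambda_{k-1}}-v_{\lambda_k}\|^2\le C_v^2\|\lambda_{k-1}-\lambda_k\|^2$, and then substitute $\lambda_k-\lambda_{k-1}=-\alpha_{k-1}d_{k-1}$ to get $\|v_{\lambda_{k-1}}-v_{\lambda_k}\|^2\le C_v^2\alpha_{k-1}^2\|d_{k-1}\|^2$.

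Putting these together and simplifying $\bigl(1+\tfrac{1}{\gamma_v\alpha_{k-1}}\bigr)C_v^2\alpha_{k-1}^2 = \tfrac{C_v^2\alpha_{k-1}}{\gamma_v}(1+\gamma_v\alpha_{k-1})$ yields exactly the claimed bound. There is no real obstacle here since every step is a direct analog of Proposition~\ref{prop:6}; the only thing one must be careful about is that $C_v$ (not $C_\omega$) is the correct Lipschitz constant for $v_\lambda$, which is already supplied by Proposition~\ref{prop:5}. The parameter $\gamma_v$ is a free constant introduced through the relaxed triangle inequality and will later be tuned in the global convergence analysis.
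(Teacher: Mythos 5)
Your proposal is correct and follows exactly the same route as the paper's proof: the relaxed triangle inequality with parameter $\gamma_v\alpha_{k-1}$, the Lipschitz bound $\|v_{\lambda_{k-1}}-v_{\lambda_k}\|\le C_v\|\lambda_{k-1}-\lambda_k\|$ from Proposition~\ref{prop:5}, substitution of the update $\lambda_k-\lambda_{k-1}=-\alpha_{k-1}d_{k-1}$, and the exact algebraic identity $\bigl(1+\tfrac{1}{\gamma_v\alpha_{k-1}}\bigr)C_v^2\alpha_{k-1}^2=\tfrac{C_v^2\alpha_{k-1}}{\gamma_v}(1+\gamma_v\alpha_{k-1})$. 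No gaps.
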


\begin{proof}
Firstly, we have:
\begin{equation}
\begin{split}
E[||v_{k} - v_{\lambda_{k}}||^2]
\overset{(i)}{\le}& (1+\gamma_v\alpha_{k-1})E[||v_{k} - v_{\lambda_{k-1}}||^2] + (1 + \frac{1}{\gamma_v\alpha_{k-1}})E[||v_{\lambda_{k-1}} - v_{\lambda_{k}}||^2]\\
\overset{(ii)}{\le}& (1 +\gamma_v\alpha_{k-1})E[||v_{k} - v_{\lambda_{k-1}}||^2] + (1 + \frac{1}{\gamma_v\alpha_{k-1}})C_{v}^2E[||\lambda_{k-1} - \lambda_{k}||^2]\\
\le& (1 +\gamma_v\alpha_{k-1})E[||v_{k} - v_{\lambda_{k-1}}||^2] + (1 + \frac{1}{\gamma_v\alpha_{k-1}})C_{v}^2\alpha_{k-1}^2E[||d_{k-1}||^2]\\
\le& (1 + \gamma_v\alpha_{k-1})E[||v_{k} - v_{\lambda_{k-1}}||^2] + \frac{C_{v}^2 \alpha_{k-1}}{\gamma_v}(1 + \gamma_v\alpha_{k-1})E[||d_{k-1}||^2]\\
\end{split}
\end{equation}
where $(i)$ uses Proposition A.1 and we take $\alpha$ as $\gamma_{v}\alpha_{k-1}$. $(ii)$ is based on Proposition C.1. This completes the proof.
\end{proof}

\subsection{Proof for Iteration Progress Lemma 4}
\begin{lemma} (Lemma 4 in the main text)
With all the assumptions hold, we have:
\begin{equation*}
\begin{split}
E[f(\lambda_{k+1})] \le& f(\lambda_k) - \frac{\alpha_k}{2}||\nabla f(\lambda_k)||^2 + \alpha_k\Gamma_2^2E[||\omega_{k} - \omega_{\lambda_k}||^2] + 4\alpha_k\Gamma_1^2E[||v_{k} - v_{\lambda_{k}}||^2]\\
&+ \alpha_k E[||\nabla f_{k} -  d_k||^2] - \frac{\alpha_k}{2}(1 - \alpha_k L_f) E[||d_k||^2]\\
\end{split}
\end{equation*}
where $\Gamma_1^2 = C_{G,\omega\lambda}^2$, $\Gamma_2^2= 2L_{F,\lambda}^2 + 4L_{G,\omega\lambda}^2M^2$, and $M$ is denoted in Proposition~\ref{prop:5}.
\label{lemma:13}
\end{lemma}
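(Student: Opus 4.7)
The plan is to prove this as a standard descent-style lemma, but carefully tracking the contribution of the two surrogate errors $\omega_k - \omega_{\lambda_k}$ and $v_k - v_{\lambda_k}$ that enter through the biased surrogate $\nabla f_k$ in place of the true hyper-gradient $\nabla f(\lambda_k)$. First I would establish (or invoke a standard result that) $f(\lambda) = F(\lambda,\omega_\lambda)$ is $L_f$-smooth with some $L_f$ depending on the constants in Assumptions A and B; this follows from Proposition~\ref{prop:5} together with Assumption~\ref{lower_assumption} via a routine chain-rule calculation on the closed form of $\nabla f$ in Proposition~\ref{lemma: exact-hg}. Then, from $\lambda_{k+1} = \lambda_k - \alpha_k d_k$, smoothness gives
\begin{equation*}
f(\lambda_{k+1}) \le f(\lambda_k) - \alpha_k\langle \nabla f(\lambda_k), d_k\rangle + \tfrac{\alpha_k^2 L_f}{2}\|d_k\|^2.
\end{equation*}

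Next I would rewrite the inner product via the polarization identity
$-\langle \nabla f(\lambda_k), d_k\rangle = \tfrac{1}{2}\bigl(\|\nabla f(\lambda_k)-d_k\|^2 - \|\nabla f(\lambda_k)\|^2 - \|d_k\|^2\bigr)$,
which produces the desired $-\tfrac{\alpha_k}{2}\|\nabla f(\lambda_k)\|^2$ and $-\tfrac{\alpha_k}{2}(1-\alpha_k L_f)\|d_k\|^2$ terms, and leaves us to control $\|\nabla f(\lambda_k)-d_k\|^2$. Using the relaxed triangle inequality (Proposition~B.1 with $a=1$), I would split
\begin{equation*}
\|\nabla f(\lambda_k)-d_k\|^2 \le 2\|\nabla f(\lambda_k)-\nabla f_k\|^2 + 2\|\nabla f_k - d_k\|^2,
\end{equation*}
where the second piece is exactly the momentum error term in the lemma statement.

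The main technical step is then to bound $\|\nabla f(\lambda_k)-\nabla f_k\|^2$ by $\tfrac{1}{2}\Gamma_2^2\|\omega_k-\omega_{\lambda_k}\|^2 + 2\Gamma_1^2\|v_k-v_{\lambda_k}\|^2$. Using the definitions
$\nabla f(\lambda_k) = \partial_\lambda F(\lambda_k,\omega_{\lambda_k}) - \partial_{\omega\lambda} G(\lambda_k,\omega_{\lambda_k})v_{\lambda_k}$ and $\nabla f_k = \partial_\lambda F(\lambda_k,\omega_k) - \partial_{\omega\lambda} G(\lambda_k,\omega_k)v_k$, I would add and subtract $\partial_{\omega\lambda} G(\lambda_k,\omega_k)v_{\lambda_k}$ to decompose the difference into three pieces: a $\partial_\lambda F$ Lipschitz piece (giving $L_{F,\lambda}\|\omega_k-\omega_{\lambda_k}\|$), a $\partial_{\omega\lambda} G$ Lipschitz piece multiplied by $\|v_{\lambda_k}\|\le M$ (giving $L_{G,\omega\lambda}M\|\omega_k-\omega_{\lambda_k}\|$), and a bounded-Hessian piece on $v_k-v_{\lambda_k}$ (giving $C_{G,\omega\lambda}\|v_k-v_{\lambda_k}\|$). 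Applying Proposition~B.1 once more to square and separate the $\omega$-error and $v$-error exactly reproduces $\Gamma_2^2 = 2L_{F,\lambda}^2 + 4L_{G,\omega\lambda}^2 M^2$ and $\Gamma_1^2 = C_{G,\omega\lambda}^2$.

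The main obstacle I expect is bookkeeping the constants so that the coefficients on $\|\omega_k-\omega_{\lambda_k}\|^2$ and $\|v_k-v_{\lambda_k}\|^2$ come out exactly as $\alpha_k\Gamma_2^2$ and $4\alpha_k\Gamma_1^2$; this forces a specific choice in the two relaxed-triangle splits (one factor of $2$ from splitting $\nabla f(\lambda_k)-d_k$, another factor of $2$ from splitting the $\omega$-error from the $v$-error inside $\nabla f(\lambda_k)-\nabla f_k$). Taking expectations at the end (the stochasticity only enters through $d_k$ and the $\omega_k,v_k$ iterates, not through $\lambda_k$ which is fixed at this step) and collecting terms yields the stated inequality.
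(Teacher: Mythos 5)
Your proposal is correct and follows essentially the same route as the paper's proof: descent lemma for the $L_f$-smooth $f$, polarization of $-\langle\nabla f(\lambda_k),d_k\rangle$, a factor-$2$ split of $\|\nabla f(\lambda_k)-d_k\|^2$ into the surrogate bias and the momentum error, and the same add-and-subtract of $\partial_{\omega\lambda}G(\lambda_k,\omega_k)v_{\lambda_k}$ with $\|v_{\lambda_k}\|\le M$ to produce $\Gamma_1^2$ and $\Gamma_2^2$. The only blemish is the intermediate claim that $\|\nabla f(\lambda_k)-\nabla f_k\|^2\le\tfrac12\Gamma_2^2\|\omega_k-\omega_{\lambda_k}\|^2+2\Gamma_1^2\|v_k-v_{\lambda_k}\|^2$, which your own three-term split does not give (it yields $\Gamma_2^2$ and $4\Gamma_1^2$, exactly the paper's constants); since your final bookkeeping paragraph already targets $\alpha_k\Gamma_2^2$ and $4\alpha_k\Gamma_1^2$, this is a self-correcting factor-of-two slip rather than a gap.
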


\begin{proof}
Since $f(\lambda)$ is smooth, we have: $f(\lambda_{k+1}) \le f(\lambda_k) + \langle \nabla f(\lambda_k), \lambda_{k+1} - \lambda_k \rangle + \frac{L_f}{2} ||\lambda_{k+1} - \lambda_k||^2$ (The proof of $f(\lambda)$ is smooth can be found in Lemma 2.2 of \cite{ghadimi2018approximation}). Combine with the update rule $\lambda_{k+1} = \lambda_k -  \alpha_k d_k$, we have:
\begin{equation}
\label{eq:proof1}
\begin{split}
E[f(\lambda_{k+1})] \le& f(\lambda_k) -  \alpha_k E[\langle \nabla f(\lambda_k), d_k \rangle] + \frac{L_f\alpha_k^2}{2}  E[||d_k||^2] \\
\le& f(\lambda_k) - \frac{\alpha_k}{2}||\nabla f(\lambda_k)||^2 - \frac{\alpha_k}{2}E[||d_k||^2] + \frac{\alpha_k}{2}E[||\nabla f(\lambda_k) - d_k||^2] + \frac{L_f\alpha_k^2}{2} E[||d_k||^2]\\
\le& f(\lambda_k) - \frac{\alpha_k}{2}||\nabla f(\lambda_k)||^2 + \frac{\alpha_k}{2}E[||\nabla f(\lambda_k) - d_k||^2] - \frac{\alpha_k}{2}(1 - \alpha_k L_f)  E[||d_k||^2]\\
\le& f(\lambda_k) - \frac{\alpha_k}{2}||\nabla f(\lambda_k)||^2 + \alpha_k \underbrace{E[||\nabla f(\lambda_k) - \nabla f_{k}||^2}_{\Pi} + \alpha_k E[||\nabla f_{k} -  d_k||^2]\\
&-\frac{\alpha_k}{2}(1 - \alpha_k L_f) E[||d_k||^2]\\
\end{split}
\end{equation}
In the last inequality, we use the relaxed triangle inequality. Next we bound the term $\Pi$. By the definition of $\nabla f_{k}$ and $\nabla f(\lambda_k)$, we have:
\begin{equation}
\label{eq:proof2}
\begin{split}
&E[||\nabla f_{k} - \nabla f(\lambda_k)||^2]\\
=&E[||\partial_{\lambda} F(\lambda_k, \omega_{k}) - \partial_{\lambda} F(\lambda_k, \omega_{\lambda_k}) - \left(\partial_{\omega\lambda} G(\lambda_k, \omega_{k})  v_{k} - \partial_{\omega\lambda} G(\lambda_k, \omega_{\lambda_{k}})  v_{\lambda_{k}}\right)||^2]\\
\overset{(i)}{\le}&2E[||\partial_{\lambda} F(\lambda_k, \omega_{k}) - \partial_{\lambda} F(\lambda_k, \omega_{\lambda_k})||^2] + 2E[||\partial_{\omega\lambda} G(\lambda_k, \omega_{k})  v_{k} - \partial_{\omega\lambda} G(\lambda_k, \omega_{\lambda_{k}})  v_{\lambda_{k}}||^2]\\
\overset{(ii)}{\le}&2L_{F,\lambda}^2E[||\omega_{k} - \omega_{\lambda_k}||^2] +  4E[||\partial_{\omega\lambda} G(\lambda_k, \omega_{k})(v_{k} - v_{\lambda_{k}})||^2]\\
&+ 4E[||(\partial_{\omega\lambda} G(\lambda_k, \omega_{k}) - \partial_{\omega\lambda} G(\lambda_k, \omega_{\lambda_{k}}))v_{\lambda_{k}}||^2]\\
\overset{(iii)}{\le}&2L_{F,\lambda}^2E[||\omega_{k} - \omega_{\lambda_k}||^2] +  4E[||\partial_{\omega\lambda} G(\lambda_k, \omega_{k})||^2*||v_{k} - v_{\lambda_{k}}||^2]\\
&+ 4E[||\partial_{\omega\lambda} G(\lambda_k, \omega_{k}) - \partial_{\omega\lambda} G(\lambda_k, \omega_{\lambda_{k}}) ||^2*||v_{\lambda_{k}}||^2]\\
\overset{(iv)}{\le}& (2L_{F,\lambda}^2 + 4M^2L_{G,\omega\lambda}^2)E[||\omega_{k} - \omega_{\lambda_k}||^2] + 4C_{G,\omega\lambda}^2E[||v_{k} - v_{\lambda_{k}}||^2]\\
\le& \Gamma_2^2E[||\omega_{k} - \omega_{\lambda_k}||^2] + 4\Gamma_1^2E[||v_{k} - v_{\lambda_{k}}||^2]
\end{split}
\end{equation}
$(i)$ and $(ii)$ uses the relaxed triangle inequality, $(iii)$ is by the smoothness Assumption~\ref{upper_assumption}.a for the first term and the Cauchy-Schwartz inequality for the second and the third term; $(iv)$ uses the Assumption~\ref{lower_assumption}.b,~\ref{lower_assumption}.c and Proposition~\ref{prop:5}. Combine Eq.~(\ref{eq:proof1}) and Eq.~(\ref{eq:proof2}), we have:
\begin{equation*}
\begin{split}
E[f(\lambda_{k+1})] \le& f(\lambda_k) - \frac{\alpha_k}{2}||\nabla f(\lambda_k)||^2 + \alpha_k\Gamma_2^2E[||\omega_{k} - \omega_{\lambda_k}||^2]\\
&+ 4\alpha_k\Gamma_1^2E[||v_{k} - v_{\lambda_{k}}||^2] + \alpha_k E[||\nabla f_{k} -  d_k||^2] - \frac{\alpha_k}{2}(1 - \alpha_k L_f) E[||d_k||^2]\\
\end{split}
\end{equation*}
This completes the proof.
\end{proof}

\subsection{Bounds for the Three Types of Errors}
As shown by Lemma~\ref{lemma:13} of Section D.2, we have three types of errors: $||\omega_{k} - \omega_{\lambda_k}||^2$, $||v_{k} - v_{\lambda_{k}}||^2$ and $||\nabla f_{k} -  d_k||^2$. We bound these errors in the subsequent three lemmas.

\begin{lemma}
With Assumption~\ref{upper_assumption},~\ref{lower_assumption} and~\ref{noise_assumption_append} hold, and for $0 < \eta_k < 1$, we have:
\begin{equation*}
\begin{split}
E[||d_k - \nabla f_{k}||^2] \le& (1 - \eta_k)E[||d_{k-1} - \nabla f_{k-1}||^2]
+  C_{d,d}\alpha_{k-1}^2E[||d_{k-1}||^2] + C_{d,\omega}\alpha_{k-1}^2E[||\omega_{k-1} - \omega_{\lambda_{k-1}}||^2]\\ &+ C_{d,v}\alpha_{k-1}^2E[||v_{k-1} - v_{\lambda_{k-1}}||^2] + C_{d,n}\alpha_{k-1}^2\sigma^2
\end{split}
\end{equation*}
where $C_{d,d} = 2\Gamma_2^2, C_{d,\omega} = 2(c_{\tau}^2\Gamma_2^2\Gamma_3^2 + 4c_{\beta}^2\Gamma_1^2\Gamma_4^2), C_{d,v} = 32c_{\beta}^2\Gamma_1^2\Gamma_3^2, C_{d,n} = 2(c_{\tau}^2\Gamma_2^2 + 4c_{\beta}^2(1 + M^2)\Gamma_1^2 + c_{\eta}^2(1+M^2))$, $c_{\beta}$, $c_{\tau}$ and $c_{\eta}$ are some constants.
\label{lemma:14}
\end{lemma}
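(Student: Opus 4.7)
My plan is to mimic the STORM-style variance-reduction analysis. Rewriting Line~9 of Algorithm~\ref{alg:example}, I have
\[
d_k - \nabla f_k = (1-\eta_k)(d_{k-1} - \nabla f_{k-1}) + Z_k,
\]
where $Z_k := \bigl(\nabla f_k(\xi_k) - \nabla f_{k-1}(\xi_k)\bigr) - (\nabla f_k - \nabla f_{k-1}) + \eta_k\bigl(\nabla f_{k-1}(\xi_k) - \nabla f_{k-1}\bigr)$. Conditional on the history and on the freshly computed $\lambda_k,\omega_k,v_k$, the independence of the fresh sampling indices $\xi_{k,4},\xi_{k,5}$ from the state makes $Z_k$ have zero conditional mean, so the two summands are orthogonal in $L^2$. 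Taking expectation and using $(1-\eta_k)^2 \le 1-\eta_k$, the task reduces to bounding $E\|Z_k\|^2$.

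A relaxed triangle inequality splits $Z_k$ into a ``difference'' term and an $\eta_k$-scaled ``pure noise'' term. The latter contributes at most $O(\eta_k^2\sigma^2(1+M^2))$ after invoking Assumption~\ref{noise_assumption_append} together with the bound $\|v_{k-1}\|^2 \le 2\|v_{k-1}-v_{\lambda_{k-1}}\|^2 + 2M^2$ inside the definition of $\nabla f_{k-1}(\xi_k)$. For the difference term, the variance inequality $E\|X - E[X]\|^2 \le E\|X\|^2$ collapses it to $2E\|\nabla f_k(\xi_k) - \nabla f_{k-1}(\xi_k)\|^2$. Applying the smoothness of $\partial_\lambda F$ and $\partial_{\omega\lambda}G$ in the same fashion as Eq.~(\ref{eq:proof2}) but to the stochastic quantities at two consecutive iterates yields a bound of the form $\Gamma_2^2(\|\lambda_k-\lambda_{k-1}\|^2 + \|\omega_k-\omega_{k-1}\|^2) + 4\Gamma_1^2\|v_k-v_{k-1}\|^2$ plus noise pieces scaled by $\sigma^2$.

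Each increment is then translated into the algorithm state. From the updates, $\lambda_k-\lambda_{k-1} = -\alpha_{k-1}d_{k-1}$, $\omega_k-\omega_{k-1} = -\tau_k\partial_\omega G(\lambda_k,\omega_{k-1};\xi_{k,1})$, and $v_k-v_{k-1} = \beta_k[\partial_\omega F(\lambda_k,\omega_{k-1};\xi_{k,2}) - \partial_{\omega^2}G(\lambda_k,\omega_{k-1};\xi_{k,3})v_{k-1}]$. The $\omega$-increment is bounded by adding and subtracting $\partial_\omega G(\lambda_k,\omega_{\lambda_{k-1}};\xi_{k,1})$ and invoking $\partial_\omega G(\lambda_{k-1},\omega_{\lambda_{k-1}})=0$, Lipschitzness of $\partial_\omega G$, and Proposition~\ref{prop:5} to ferry $\omega_{\lambda_k}$ to $\omega_{\lambda_{k-1}}$; this produces an $E\|\omega_{k-1}-\omega_{\lambda_{k-1}}\|^2$ piece, an $\alpha_{k-1}^2 E\|d_{k-1}\|^2$ piece, and a $\sigma^2$ piece. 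The $v$-increment is bounded by Cauchy--Schwartz and again by $\|v_{k-1}\|^2 \le 2\|v_{k-1}-v_{\lambda_{k-1}}\|^2 + 2M^2$, producing $\|v_{k-1}-v_{\lambda_{k-1}}\|^2$, a constant $M^2$, and noise contributions. Because $\tau_k = c_\tau\alpha_{k-1}$, $\beta_k = c_\beta\alpha_{k-1}$, $\eta_k = c_\eta\alpha_{k-1}$, every term acquires the $\alpha_{k-1}^2$ prefactor required by the lemma, and regrouping produces the stated coefficients.

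The main obstacle will be the bookkeeping of constants. The splits must line up exactly with $2\Gamma_2^2$, $2(c_\tau^2\Gamma_2^2\Gamma_3^2 + 4c_\beta^2\Gamma_1^2\Gamma_4^2)$, $32c_\beta^2\Gamma_1^2\Gamma_3^2$, and the composite $C_{d,n}$; in particular, the factor $32$ in $C_{d,v}$ tracks the product of the factor $2$ from $E\|X-E[X]\|^2 \le E\|X\|^2$, the factor $2$ from splitting $Z_k$, the factor $4\Gamma_1^2$ inherited from Eq.~(\ref{eq:proof2}), and a further factor $2$ from expanding $v_k-v_{k-1}$. A second subtlety is that the one-step drift bounds are naturally centered at $\omega_{\lambda_{k-1}}$ and $v_{\lambda_{k-1}}$ rather than at $\omega_{\lambda_k}$ and $v_{\lambda_k}$, so I will pick up cross terms $\|\omega_{\lambda_{k-1}}-\omega_{\lambda_k}\|^2$ and $\|v_{\lambda_{k-1}}-v_{\lambda_k}\|^2$; the Lipschitz estimates from Proposition~\ref{prop:5} convert both into further $O(\alpha_{k-1}^2)E\|d_{k-1}\|^2$ contributions that fold neatly back into the $C_{d,d}$ slot, leaving a bound in precisely the form stated.
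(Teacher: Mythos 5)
Your overall architecture matches the paper's: the same zero-mean decomposition of $d_k-\nabla f_k$ with the conditional-independence/orthogonality step, the same split into an $\eta_k$-scaled noise piece and a consecutive-iterate difference piece, the same reduction of the latter to $E\|\lambda_k-\lambda_{k-1}\|^2$, $E\|\omega_k-\omega_{k-1}\|^2$, $E\|v_k-v_{k-1}\|^2$, and the same translation of the $\omega$-increment through the optimality condition $\partial_\omega G(\lambda,\omega_\lambda)=0$. However, there is a genuine gap in your treatment of the $v$-increment. You bound $\|v_k-v_{k-1}\|^2=\beta_k^2\|\partial_\omega F(\lambda_k,\omega_{k-1};\xi_{k,2})-\partial_{\omega^2}G(\lambda_k,\omega_{k-1};\xi_{k,3})v_{k-1}\|^2$ by Cauchy--Schwarz together with $\|v_{k-1}\|^2\le 2\|v_{k-1}-v_{\lambda_{k-1}}\|^2+2M^2$, and you explicitly concede this leaves ``a constant $M^2$'' term. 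That term is $O(\beta_k^2(C_{F,\omega}^2+L_{G,\omega}^2M^2))=O(\alpha_{k-1}^2)$ but is \emph{not} proportional to $\sigma^2$, and the lemma's right-hand side has no slot for it: the only state-independent term is $C_{d,n}\alpha_{k-1}^2\sigma^2$, which vanishes when $\sigma=0$. So the bound you would obtain is strictly weaker than the one stated (in the noiseless case your recursion no longer drives $E\|d_k-\nabla f_k\|^2$ to zero). The missing idea is the fixed-point cancellation for $v$, exactly analogous to the one you do use for $\omega$: since $v_{\lambda_k}=\partial_{\omega^2}G(\lambda_k,\omega_{\lambda_k})^{-1}\partial_\omega F(\lambda_k,\omega_{\lambda_k})$, the residual $\partial_\omega F(\lambda_k,\omega_{\lambda_k})-\partial_{\omega^2}G(\lambda_k,\omega_{\lambda_k})v_{\lambda_k}$ is identically zero, and subtracting it before applying Lipschitzness (Assumption~\ref{lower_assumption}) yields $E\|v_k-v_{k-1}\|^2\le\beta_k^2\bigl(\Gamma_4^2E\|\omega_{k-1}-\omega_{\lambda_k}\|^2+4\Gamma_3^2E\|v_{k-1}-v_{\lambda_k}\|^2+(1+M^2)\sigma^2\bigr)$ with no deterministic constant --- this is where the $C_{d,v}=32c_\beta^2\Gamma_1^2\Gamma_3^2$ and the $\Gamma_4^2$ contribution to $C_{d,\omega}$ actually come from, and your factor-of-$32$ bookkeeping cannot be completed without it.

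A secondary, more cosmetic deviation: wherever you invoke $\|v_{k-1}\|^2\le 2\|v_{k-1}-v_{\lambda_{k-1}}\|^2+2M^2$ inside a variance bound (e.g.\ for the $\eta_k^2$ noise piece), you generate extra $\sigma^2$-weighted $\|v-v_\lambda\|^2$ terms that would have to be re-absorbed and would perturb the stated constants. The paper instead proves the uniform bound $\|v_k\|\le M$ once by induction, using $\|v_{k+1}\|\le\beta_{k+1}C_{F,\omega}+(1-\beta_{k+1}\mu_G)M=M$; adopting that gives the clean $(1+M^2)\sigma^2$ factors appearing in $C_{d,n}$.
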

\begin{proof}
By the definition of $d_k$, we have:
\begin{equation}
\label{eq:proof3}
\begin{split}
&E[||d_k - \nabla f_{k}||^2] = E[||\nabla f_{k} (\xi_k) + (1 - \eta_k) (d_{k-1} - \nabla f_{k-1} (\xi_k)) - \nabla f_{k}||^2]\\
=&E[||(1 - \eta_k) (d_{k-1} - \nabla f_{k-1}) + \nabla f_{k} (\xi_k) - \nabla f_{k} + (1 - \eta_k)* (\nabla f_{k-1} - \nabla f_{k-1} (\xi_k))||^2]\\
\overset{(i)}{=}&(1 - \eta_k)^2E[||d_{k-1} - \nabla f_{k-1}||^2] + E[||\eta_k(\nabla f_{k} (\xi_k) - \nabla f_{k})\\
&+ (1 - \eta_k)(\nabla f_{k}(\xi_k) - \nabla f_{k-1} (\xi_k)-  (\nabla f_k - \nabla f_{k-1}))||^2]\\
\overset{(ii)}{\le}&(1 - \eta_k)^2E[||d_{k-1} - \nabla f_{k-1}||^2] + 2\eta_k^2E[||\nabla f_{k} (\xi_k) - \nabla f_{k}||^2]\\
&+ 2(1 - \eta_k)^2 E[||\nabla f_{k}(\xi_k) - \nabla f_{k-1} (\xi_k)-  (\nabla f_k - \nabla f_{k-1})||^2]\\
\overset{(iii)}{\le}& (1 - \eta_k)^2E[||d_{k-1} - \nabla f_{k-1}||^2] + 2(1+M^2)\eta_k^2\sigma^2 + 2(1 - \eta_k)^2 \underbrace{E[||\nabla f_{k}(\xi_k) - \nabla f_{k-1} (\xi_k)||^2]}_{\Pi}\\
\end{split}
\end{equation}
$(i)$ is because $\xi_k$ is independent of $d_{k-1}$, $(ii)$ is because of the relaxed triangle inequality, and $(iii)$ uses the inequality $E[||X||^2] \ge E[||X - E[X]||^2]$. For the term $\Pi$:
\begin{equation}
\label{eq:eq9}
\begin{split}
&E[||\nabla f_{k}(\xi_k) - \nabla f_{k-1} (\xi_k)||^2]\\
=&  E[||\partial_{\lambda} F(\lambda_k, \omega_k; \xi_{k,4}) - \partial_{\omega\lambda} G(\lambda_k, \omega_k; \xi_{k,5})v_k - \partial_{\lambda} F(\lambda_{k-1}, \omega_{k-1}; \xi_{k,4}) + \partial_{\omega\lambda} G(\lambda_{k-1}, \omega_{k-1}; \xi_{k,5})v_{k-1}||^2]\\
\overset{(i)}{\le}& 2E[||\partial_{\lambda} F(\lambda_k, \omega_{k}; \xi_{k,4})- \partial_{\lambda} F(\lambda_{k-1}, \omega_{k-1}; \xi_{k,4})||^2] + 2E[||\partial_{\omega\lambda} G(\lambda_k, \omega_{k}; \xi_{k,5})v_{k} - \partial_{\omega\lambda} G(\lambda_{k-1}, \omega_{k-1}; \xi_{k,5})v_{k-1}||^2]\\
\overset{(ii)}{\le}& 2E[||\partial_{\lambda} F(\lambda_k, \omega_{k}; \xi_{k,4})- \partial_{\lambda} F(\lambda_{k-1}, \omega_{k-1}; \xi_{k,4})||^2]\\
&+ 4M^2E[||\partial_{\omega\lambda} G(\lambda_k, \omega_{k}; \xi_{k,5}) - \partial_{\omega\lambda} G(\lambda_{k-1}, \omega_{k-1}; \xi_{k,5})||^2] + 4C_{G,\omega\lambda}^2E[||v_{k} - v_{k-1}||^2]\\
\le& (2L_{F, \lambda}^2 + 4M^2L_{G,\omega\lambda}^2)(E[||\lambda_k - \lambda_{k-1}||^2] + E[||\omega_{k} - \omega_{k-1}||^2]) + 4C_{G,\omega\lambda}^2E[||v_{k} - v_{k-1}||^2]\\
\le& \alpha_{k-1}^2\Gamma_2^2E[||d_{k-1}||^2] + \underbrace{\Gamma_2^2E[||\omega_{k} - \omega_{k-1}||^2]}_{\Pi_1} + 4\Gamma_1^2\underbrace{E[||v_{k} - v_{k-1}||^2]}_{\Pi_2}\\
\end{split}
\end{equation}
$(i)$ is by the relaxed triangle inequality, $(ii)$ uses Cauchy-Schwartz inequality, Assumption~\ref{lower_assumption}.b and~\ref{lower_assumption}.c, we also use $||v_k|| \le M$, this can be proved by induction. Assume $||v_0||\le M$, then by the triangle inequality and Cauchy-Schwartz inequality we have: $||v_{k+1}|| \le \beta_{k+1}||\partial_{\omega} F(\lambda_{k+1}, \omega_{k}; \xi_{k+1,2})|| + ||(I - \beta_{k+1}\partial_{\omega^2} G(\lambda_{k+1},\omega_k; \xi_{k+1,3}))||||v_k|| \le \beta_{k+1}C_{F,\omega} + (1 -\beta_{k+1}\mu_G)M = M$.

\noindent Next we bound the term $\Pi_1$ and $\Pi_2$ separately. For the term $\Pi_1$, by separating mean and variance, we firstly have:
\[
E[||\omega_{k} - \omega_{k-1}||^2] = \tau_{k}^2 E[||\partial_{\omega}G(\lambda_k, \omega_{k-1}; \xi_{k,1})||^2] \le \tau_{k}^2E[||\partial_{\omega}G(\lambda_k, \omega_{k-1})||^2] + \tau_{k}^2\sigma^2\\
\]
As for  the term $E[||\partial_{\omega}G(\lambda_k, \omega_k; \xi_{k,1})||^2]$, we have:
\begin{equation*}
\begin{split}
E[||\partial_{\omega}G(\lambda_k, \omega_{k-1})||^2]
\overset{(i)}{=}& E[||\partial_{\omega}G(\lambda_k, \omega_{k-1}) - \partial_{\omega}G(\lambda_k, \omega_{\lambda_k})||^2] \overset{(ii)}{\le} L_{G,\omega}^2E[||\omega_{k-1} - \omega_{\lambda_k}||^2]\\
\end{split}
\end{equation*}
$(i)$ uses the definition of $\omega_{\lambda_k}$; $(ii)$ uses Assumption~\ref{lower_assumption}.b. Next for the term $\Pi_2$, firstly by separating mean and variance, we have:
\begin{equation*}
\begin{split}
E[||v_{k} - v_{k-1}||^2] =& \beta_{k}^2E[||\partial_{\omega} F(\lambda_k, \omega_{k-1}; \xi_{k,2}) - \partial_{\omega^2} G(\lambda_k, \omega_{k-1}; \xi_{k,3}) * v_{k-1}||^2]\\
\le& \beta_{k}^2E[||\partial_{\omega} F(\lambda_k, \omega_{k-1}) - \partial_{\omega^2} G(\lambda_k, \omega_{k-1}) * v_{k-1}||^2] + \beta_{k}^2(1 + M^2)\sigma^2
\end{split}
\end{equation*}
As for the term $E[||\partial_{\omega} F(\lambda_k, \omega_{k-1}) - \partial_{\omega^2} G(\lambda_k, \omega_{k-1}) * v_{k-1}||^2]$, we have:
\begin{equation*}
\begin{split}
&E[||\partial_{\omega} F(\lambda_k, \omega_{k-1}) - \partial_{\omega^2} G(\lambda_k, \omega_{k-1}) * v_{k-1}||^2] \\
\overset{(i)}{\le}& E[||\partial_{\omega} F(\lambda_k, \omega_{k-1}) - \partial_{\omega^2} G(\lambda_k, \omega_{k-1}) * v_{k-1} - (\partial_{\omega} F(\lambda_k, \omega_{\lambda_k}) - \partial_{\omega^2} G(\lambda_k, \omega_{\lambda_k}) * v_{\lambda_k})||^2] \\
\overset{(ii)}{\le}& 2E[||\partial_{\omega} F(\lambda_k, \omega_{k-1}) - \partial_{\omega} F(\lambda_k, \omega_{\lambda_k}) ||^2] +  2E[||\partial_{\omega^2} G(\lambda_k, \omega_{k-1}) * v_{k-1} - \partial_{\omega^2} G(\lambda_k, \omega_{\lambda_k}) * v_{\lambda_k}||^2]\\
&\\
\overset{(iii)}{\le}& 2L_{F,\omega}^2E[|| \omega_{k-1} - \omega_{\lambda_k}||^2] +  4M^2E[||\partial_{\omega^2} G(\lambda_k, \omega_{k-1}) - \partial_{\omega^2} G(\lambda_k, \omega_{\lambda_k})||^2] + 4L_{G,\omega}^2E[|v_{k-1} - v_{\lambda_k}|||^2]\\
&\\
\overset{(iv)}{\le}& 2(L_{F,\omega}^2 + 2M^2L_{G,\omega\omega}^2)E[|| \omega_{k-1} - \omega_{\lambda_k}||^2]  + 4L_{G,\omega}^2E[|v_{k-1} - v_{\lambda_k}|||^2] \\
\le& \Gamma_4^2E[|| \omega_{k-1} - \omega_{\lambda_k}||^2]  + 4\Gamma_3^2E[|v_{k-1} - v_{\lambda_k}|||^2] \\
\end{split}
\end{equation*}
where $(i)$ uses the definition of $v_{\lambda_k}$; $(ii)$ uses generalized triangle inequality; $(iii)$ uses Assumption~\ref{lower_assumption}.b; $(iv)$ uses the Assumption~\ref{lower_assumption}.c. Put everything back into Eq.~(\ref{eq:eq9}), and re-organize the terms, we get:
\begin{equation}
\label{eq:eq34}
\begin{split}
&E[||\nabla f_{k}(\xi_k) - \nabla f_{k-1} (\xi_k)||^2]\\
\le& \alpha_{k-1}^2\Gamma_2^2E[||d_{k-1}||^2] + (\tau_{k}^2\Gamma_2^2\Gamma_3^2 + 4\beta_{k}^2\Gamma_1^2\Gamma_4^2)E[||\omega_{k-1} - \omega_{\lambda_k}||^2] + 16\beta_{k}^2\Gamma_1^2\Gamma_3^2E[|v_{k-1} - v_{\lambda_k}|||^2]\\
&+ (\tau_{k}^2\Gamma_2^2 + 4\beta_{k}^2(1 + M^2)\Gamma_1^2)\sigma^2
\end{split}
\end{equation}
Finally, we put Eq.~(\ref{eq:eq34}) back into Eq.~(\ref{eq:proof3}) and get:
\begin{equation*}
\begin{split}
E[||d_k - \nabla f_{k}||^2]\le& (1 - \eta_k)^2E[||d_{k-1} - \nabla f_{k-1}||^2] + 2(1 - \eta_k)^2\alpha_{k-1}^2\Gamma_2^2E[||d_{k-1}||^2]\\
&+ 2(1 - \eta_k)^2(\tau_k^2\Gamma_2^2\Gamma_3^2 + 4\beta_k^2\Gamma_1^2\Gamma_4^2)E[||\omega_{k-1} - \omega_{\lambda_k}||^2] + 32(1 - \eta_k)^2\beta_k^2\Gamma_1^2\Gamma_3^2E[|v_{k-1} - v_{\lambda_k}|||^2]\\
&+ 2(1 - \eta_k)^2(\tau_k^2\Gamma_2^2 + 4\beta_k^2(1 + M^2)\Gamma_1^2)\sigma^2 + 2(1+M^2)\eta_k^2\sigma^2
\end{split}
\end{equation*}
Suppose we have $\tau_k = c_{\tau}\alpha_{k-1}$, $\beta_k=c_{\beta}\alpha_{k-1}$ and $\eta_k=c_{\eta}\alpha_{k-1}$, then:
\begin{equation*}
\begin{split}
&E[||d_k - \nabla f_{k}||^2]\\
\le& (1 - \eta_k)^2E[||d_{k-1} - \nabla f_{k-1}||^2] + 2(1 - \eta_k)^2\alpha_{k-1}^2\Gamma_2^2E[||d_{k-1}||^2]\\
&+ 2(1 - \eta_k)^2\alpha_{k-1}^2(c_{\tau}^2\Gamma_2^2\Gamma_3^2 + 4c_{\beta}^2\Gamma_1^2\Gamma_4^2)E[||\omega_{k-1} - \omega_{\lambda_k}||^2] + 32(1 - \eta_k)^2\alpha_{k-1}^2c_{\beta}^2\Gamma_1^2\Gamma_3^2E[|v_{k-1} - v_{\lambda_k}|||^2]\\
&+ 2(1 - \eta_k)^2\alpha_{k-1}^2(c_{\tau}^2\Gamma_2^2 + 4c_{\beta}^2(1 + M^2)\Gamma_1^2)\sigma^2 + 2(1+M^2)c_{\eta}^2\alpha_{k-1}^2\sigma^2\\
\le& (1 - \eta_k)E[||d_{k-1} - \nabla f_{k-1}||^2] + 2(1 - \eta_k)^2\alpha_{k-1}^2\Gamma_2^2E[||d_{k-1}||^2]\\
&+ 2\alpha_{k-1}^2(c_{\tau}^2\Gamma_2^2\Gamma_3^2 + 4c_{\beta}^2\Gamma_1^2\Gamma_4^2)E[||\omega_{k-1} - \omega_{\lambda_k}||^2] + 32\alpha_{k-1}^2c_{\beta}^2\Gamma_1^2\Gamma_3^2E[|v_{k-1} - v_{\lambda_k}|||^2]\\
&+ 2\alpha_{k-1}^2(c_{\tau}^2\Gamma_2^2 + 4c_{\beta}^2(1 + M^2)\Gamma_1^2)\sigma^2 + 2(1+M^2)c_{\eta}^2\alpha_{k-1}^2\sigma^2\\
\end{split}
\end{equation*}
we use the fact that $(1 - \eta_k)^2 < (1 - \eta_k) < 1$ in the last inequality. Then combine with Proposition~\ref{prop:6} and Proposition~\ref{prop:7}, we get:
\begin{equation*}
\begin{split}
&E[||d_k - \nabla f_{k}||^2]\\
\le& (1 - \eta_k)E[||d_{k-1} - \nabla f_{k-1}||^2] + 2\alpha_{k-1}^2\Gamma_2^2E[||d_{k-1}||^2] + 2\alpha_{k-1}^2(c_{\tau}^2\Gamma_2^2\Gamma_3^2 + 4c_{\beta}^2\Gamma_1^2\Gamma_4^2)(1 + \gamma_{\omega}\alpha_{k-1})E[||\omega_{k-1} - \omega_{\lambda_{k-1}}||^2]\\
&+ 32\alpha_{k-1}^2c_{\beta}^2\Gamma_1^2\Gamma_3^2(1 + \gamma_v\alpha_{k-1})E[|v_{k-1} - v_{\lambda_{k-1}}|||^2]+ 2\alpha_{k-1}^2(c_{\tau}^2\Gamma_2^2 + 4c_{\beta}^2(1 + M^2)\Gamma_1^2 + c_{\eta}^2(1+ M^2))\sigma^2\\
\le& (1 - \eta_k)E[||d_{k-1} - \nabla f_{k-1}||^2] + 2\Gamma_2^2\alpha_{k-1}^2E[||d_{k-1}||^2]+ 2(c_{\tau}^2\Gamma_2^2\Gamma_3^2 + 4c_{\beta}^2\Gamma_1^2\Gamma_4^2)\alpha_{k-1}^2E[||\omega_{k-1} - \omega_{\lambda_{k-1}}||^2]\\
&+ 32c_{\beta}^2\Gamma_1^2\Gamma_3^2\alpha_{k-1}^2E[|v_{k-1} - v_{\lambda_{k-1}}|||^2] + 2\left(c_{\tau}^2\Gamma_2^2 + 4c_{\beta}^2(1 + M^2)\Gamma_1^2 + c_{\eta}^2(1+M^2)\right)\alpha_{k-1}^2\sigma^2\\
\end{split}
\end{equation*}
In the first inequality, we omit the higher order terms of $\alpha_{k-1}$ for $E[||d_{k-1}||^2]$; In the second inequality,  notice $1 + \gamma_{\omega}\alpha_{k-1}$ and $1 + \gamma_v\alpha_{k-1}$ is $O(1)$. Then we get the inequality shown in the lemma. This completes the proof.
\end{proof}

\begin{lemma}
With Assumption~\ref{upper_assumption},~\ref{lower_assumption},~\ref{noise_assumption_append} hold and $0 < \beta_k < 1/\mu_G$, we have:
% \begin{equation*}
% \begin{split}
% E[||v_{k} - v_{\lambda_{k}}||^2] \le& (1 + \alpha_{k-1})^2(1 + \gamma_v\alpha_{k-1})(1 - \beta_k\mu_G)E[||v_{k-1} - v_{\lambda_{k-1}}||^2] \\
% &+ \left((1 + \frac{1}{\alpha_{k-1}})c_{\beta}^2L_{F,\omega}^2 + (2 + 
% \alpha_{k-1} + \frac{1}{\alpha_{k-1}})c_{\beta}^2M^2L_{G,\omega\omega}^2 \right)\alpha_{k-1}^2E[||\omega_{k-1} - \omega_{\lambda_{k-1}}||^2]\\
% &+ C_v^2(1 + \alpha_{k-1})^2(1 + \frac{1}{\gamma_v\alpha_{k-1}}) \alpha_{k-1}^2E[||d_{k-1}||^2] + c_{\beta}^2(1 + 2M^2)\alpha_{k-1}^2\sigma^2\\
% \end{split}
% \end{equation*}
\begin{equation*}
\begin{split}
E[||v_{k} - v_{\lambda_{k}}||^2] \le& (1 + \gamma_v\alpha_{k-1})^3(1 - \beta_k\mu_G)E[||v_{k-1} - v_{\lambda_{k-1}}||^2] \\
&+ C_{v,\omega}\alpha_{k-1}E[||\omega_{k-1} - \omega_{\lambda_{k-1}}||^2] + C_{v,d}\alpha_{k-1}E[||d_{k-1}||^2] + C_{v,n}\alpha_{k-1}^2\sigma^2\\
\end{split}
\end{equation*}
Where $C_{v,\omega} = c_{\beta}^2\left(L_{F,\omega}^2 + M^2L_{G,\omega\omega}^2 \right)/\gamma_v$, $C_{v,d} = C_v^2/\gamma_v$, $C_{v,n}= c_{\beta}^2(1 + M^2)$  ,$\gamma_v$, $c_{\beta}$ are some constants.
\label{lemma:15}
\end{lemma}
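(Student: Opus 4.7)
My plan is to mirror the structure of the proof of Lemma 14, but with a critical modification: instead of comparing $v_k$ directly to $\nabla f_{k-1}$ or some related quantity, I need to extract the explicit contraction factor $(1-\beta_k\mu_G)$ from the $v_k$ update. First, I would apply Proposition 7 (which is stated earlier in the Appendix as a general two-step decomposition) to reduce the task to bounding $E[\|v_k - v_{\lambda_{k-1}}\|^2]$. This step absorbs the shift from $v_{\lambda_k}$ to $v_{\lambda_{k-1}}$, pays the $C_{v,d}\alpha_{k-1}E[\|d_{k-1}\|^2]$ price (with constant $C_v^2/\gamma_v$ as claimed), and introduces one of the three $(1+\gamma_v\alpha_{k-1})$ factors.

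Next I would exploit the fact that $v_{\lambda_{k-1}}$ is a fixed point of the deterministic update at $(\lambda_{k-1},\omega_{\lambda_{k-1}})$, i.e.,
\begin{equation*}
v_{\lambda_{k-1}} = \beta_k\,\partial_\omega F(\lambda_{k-1},\omega_{\lambda_{k-1}}) + \bigl(I - \beta_k\partial_{\omega^2}G(\lambda_{k-1},\omega_{\lambda_{k-1}})\bigr) v_{\lambda_{k-1}},
\end{equation*}
and subtract this from the Algorithm-1 update for $v_k$. After rearranging, this yields
\begin{equation*}
v_k - v_{\lambda_{k-1}} = \bigl(I - \beta_k\partial_{\omega^2}G(\lambda_k,\omega_{k-1};\xi_{k,3})\bigr)(v_{k-1} - v_{\lambda_{k-1}}) + \beta_k\Delta_{F,k} - \beta_k\Delta_{G,k} v_{\lambda_{k-1}},
\end{equation*}
where $\Delta_{F,k}$ and $\Delta_{G,k}$ collect the Lipschitz-controlled first-order differences of $\partial_\omega F$ and $\partial_{\omega^2}G$ between $(\lambda_k,\omega_{k-1})$ and $(\lambda_{k-1},\omega_{\lambda_{k-1}})$, together with their mean-zero stochastic parts. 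Then I take the expected squared norm, separating mean from variance via Proposition A.2: the deterministic mean obeys the strong-convexity bound $\|I - \beta_k\partial_{\omega^2}G\| \le 1 - \beta_k\mu_G$, and the variance contribution of the stochastic gradient and Hessian samples is $O(\beta_k^2(1+M^2)\sigma^2)$, where $\|v_{\lambda_{k-1}}\|\le M$ by Proposition 5 and $\|v_{k-1}\|\le M$ by the induction shown in the proof of Lemma 14.

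To produce the exact stated factor $(1+\gamma_v\alpha_{k-1})^3(1-\beta_k\mu_G)$ on $E[\|v_{k-1}-v_{\lambda_{k-1}}\|^2]$, I apply the relaxed triangle inequality with parameter $\gamma_v\alpha_{k-1}$ twice inside this step (once to detach the contractive term from the drift term, once to split the two Lipschitz differences inside $\Delta_{F,k}$ and $\Delta_{G,k}v_{\lambda_{k-1}}$ into an $\omega_{k-1}-\omega_{\lambda_{k-1}}$ piece and a $\lambda_k-\lambda_{k-1}$ piece), which together with Proposition 7 accumulates the cube. The contraction bound $(1-\beta_k\mu_G)^2 \le (1-\beta_k\mu_G)$ disposes of the squared factor. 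The drift piece, multiplied by $(1+1/(\gamma_v\alpha_{k-1}))\beta_k^2 = O(\alpha_{k-1})$ because $\beta_k = c_\beta\alpha_{k-1}$, produces the $C_{v,\omega}\alpha_{k-1}E[\|\omega_{k-1}-\omega_{\lambda_{k-1}}\|^2]$ term (with the claimed constant $c_\beta^2(L_{F,\omega}^2+M^2L_{G,\omega\omega}^2)/\gamma_v$) and absorbs the $\|\lambda_k-\lambda_{k-1}\|^2 = \alpha_{k-1}^2\|d_{k-1}\|^2$ term into higher-order terms that combine cleanly with the Proposition-7 contribution to give $C_{v,d}\alpha_{k-1}$.

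The main obstacle is the careful bookkeeping of constants rather than any conceptual subtlety: one must choose the relaxed-triangle parameter as $\gamma_v\alpha_{k-1}$ (and not, say, $\beta_k\mu_G$) in order to keep the contraction factor $(1-\beta_k\mu_G)$ intact, while ensuring that the $1/(\gamma_v\alpha_{k-1})$ blowup on the drift side is tamed by the $\beta_k^2$ prefactor to yield only an $O(\alpha_{k-1})$ coefficient. A secondary subtlety is that the stochastic Hessian $\partial_{\omega^2}G(\lambda_k,\omega_{k-1};\xi_{k,3})$ multiplies the random vector $v_{k-1}$, so to separate their variance I rely crucially on the deterministic bound $\|v_{k-1}\|\le M$ (established by induction in the proof of Lemma 14), which converts the variance of the Hessian estimate into an additive $O(M^2\beta_k^2\sigma^2)$ term and hence into the $C_{v,n}\alpha_{k-1}^2\sigma^2$ noise term.
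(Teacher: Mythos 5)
Your proposal is correct and follows essentially the same route as the paper's proof: the fixed-point characterization of $v_\lambda$, the strong-convexity contraction $\|I-\beta_k\partial_{\omega^2}G\|\le 1-\beta_k\mu_G$, two relaxed-triangle splits with parameter $\gamma_v\alpha_{k-1}$ plus one Proposition-7-type shift to accumulate $(1+\gamma_v\alpha_{k-1})^3$, and mean--variance separation using the inductive bound $\|v_{k-1}\|\le M$. The only (immaterial) difference is ordering: the paper anchors the fixed point at $(\lambda_k,\omega_{\lambda_k})$ and shifts to $\lambda_{k-1}$ via Propositions 6--7 at the end, whereas you shift to $v_{\lambda_{k-1}}$ first and carry the $\lambda_k-\lambda_{k-1}$ increment inside the Lipschitz differences; both land on the same leading constants, with the extra $\|\lambda_k-\lambda_{k-1}\|^2$ pieces absorbed as higher-order terms.
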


\begin{proof}
Following the definition of $v_{\lambda}$, it is easy to get: $v_{\lambda} = \beta\partial_{\omega} F(\lambda, \omega_{\lambda}) + \left(I - \beta\partial_{\omega^2} G(\lambda, \omega_{\lambda})\right) * v_{\lambda}$.
Next by the update rule of $v_{k}$ and separating mean and variance, we have:
\begin{equation*}
\begin{split}
E[||v_{k} - v_{\lambda_{k}}||^2]=&E[||\beta_k\partial_{\omega} F(\lambda_k, \omega_{k-1}; \xi_{k,2}) + \left(I - \beta_k\partial_{\omega^2} G(\lambda_k, \omega_{k-1}; \xi_{k,3})\right) * v_{k-1} \\
&- \left(\beta_k\partial_{\omega} F(\lambda_k, \omega_{\lambda_k}) + \left(I - \beta_k\partial_{\omega^2} G(\lambda_k, \omega_{\lambda_{k}})\right) * v_{\lambda_k}\right)||^2]\\
\le& E[||\beta_k\partial_{\omega} F(\lambda_k, \omega_{k-1}) + \left(I - \beta_k\partial_{\omega^2} G(\lambda_k, \omega_{k-1})\right) * v_{k-1} \\
&- \left(\beta_k\partial_{\omega} F(\lambda_k, \omega_{\lambda_k}) + \left(I - \beta_k\partial_{\omega^2} G(\lambda_k, \omega_{\lambda_{k}})\right) * v_{\lambda_k}\right)||^2] + (1 + M^2)\beta_k^2\sigma^2\\
\end{split}
\end{equation*}
For the first term in the above equation, we have:
\begin{equation*}
\begin{split}
&E[||\beta_k\partial_{\omega} F(\lambda_k, \omega_{k-1}) + \left(I - \beta_k\partial_{\omega^2} G(\lambda_k, \omega_{k-1})\right) * v_{k-1} - \left(\beta_k\partial_{\omega} F(\lambda_k, \omega_{\lambda_k}) + \left(I - \beta_k\partial_{\omega^2} G(\lambda_k, \omega_{\lambda_{k}})\right) * v_{\lambda_k}\right)||^2] \\
\overset{(i)}{\le}& (1 + \frac{1}{\gamma_v\alpha_{k-1}})\beta_k^2E[||\partial_{\omega} F(\lambda_k, \omega_{k-1}) - \partial_{\omega} F(\lambda_k, \omega_{\lambda_k})||^2]\\ 
&+ (1 + \gamma_v\alpha_{k-1})E[||(I - \beta_k\partial_{\omega^2} G(\lambda_k, \omega_{k-1})) * v_{k-1} - (I - \beta_k\partial_{\omega^2} G(\lambda_k, \omega_{\lambda_{k}})) * v_{\lambda_k}||^2] \\
\overset{(ii)}{\le}& (1 + \frac{1}{\gamma_v\alpha_{k-1}})L_{F,\omega}^2\beta_k^2E[||\omega_{k-1} - \omega_{\lambda_k}||^2] + (1 + \gamma_v\alpha_{k-1})(1 + \frac{1}{\gamma_v\alpha_{k-1}})M^2\beta_k^2E[||\partial_{\omega^2} G(\lambda_k, \omega_{k-1}) - \partial_{\omega^2} G(\lambda_k, \omega_{\lambda_{k}})||^2]\\
&+ (1 + \gamma_v\alpha_{k-1})^2E[||(I - \beta_k\partial_{\omega^2} G(\lambda_k, \omega_{k-1})) * (v_{k-1} - v_{\lambda_k})||^2] \\
\overset{(iii)}{\le}& (1 + \frac{1}{\gamma_v\alpha_{k-1}})L_{F,\omega}^2\beta_k^2E[||\omega_{k-1} - \omega_{\lambda_k}||^2] + (1 + \gamma_v\alpha_{k-1})(1 + \frac{1}{\gamma_v\alpha_{k-1}})M^2L_{G,\omega\omega}^2\beta_k^2E[||\omega_{k-1} - \omega_{\lambda_k}||^2]\\
&+ (1 + \gamma_v\alpha_{k-1})^2(1 - \beta_k\mu_G)^2E[||(v_{k-1} - v_{\lambda_k})||^2] \\
\le& \left((1 + \frac{1}{\gamma_v\alpha_{k-1}})L_{F,\omega}^2 + (1 + \gamma_v\alpha_{k-1})(1 + \frac{1}{\gamma_v\alpha_{k-1}})M^2L_{G,\omega\omega}^2 \right)\beta_k^2E[||\omega_{k-1} - \omega_{\lambda_k}||^2]\\
&+(1 + \gamma_v\alpha_{k-1})^2(1 - \beta_k\mu_G)^2E[||(v_{k-1} - v_{\lambda_k})||^2]
\end{split}
\end{equation*}
where $(i)$, $(ii)$ uses relaxed triangle inequality case 2; $(iii)$ uses Assumption~\ref{lower_assumption}. Finally, combine the above two equations together we have:
\begin{equation*}
\begin{split}
E[||v_{k} - v_{\lambda_{k}}||^2]\le& \left((1 + \frac{1}{\gamma_v\alpha_{k-1}})L_{F,\omega}^2 + (1 + \gamma_v\alpha_{k-1})(1 + \frac{1}{\gamma_v\alpha_{k-1}})M^2L_{G,\omega\omega}^2 \right)\beta_k^2E[||\omega_{k-1} - \omega_{\lambda_k}||^2]\\
&+(1 + \gamma_v\alpha_{k-1})^2(1 - \beta_k\mu_G)^2E[||(v_{k-1} - v_{\lambda_k})||^2] + (1 + M^2)\beta^2\sigma^2\\
\end{split}
\end{equation*}
Then we combine with Proposition~\ref{prop:6} and~\ref{prop:7} to have:
\begin{equation*}
\begin{split}
E[||v_{k} - v_{\lambda_{k}}||^2] \le& (1 + \gamma_v\alpha_{k-1})^3(1 - \beta_k\mu_G)^2E[||v_{k-1} - v_{\lambda_{k-1}}||^2] + \beta_k^2(1 + M^2)\sigma^2\\
&+ \left((1 + \frac{1}{\gamma_v\alpha_{k-1}})L_{F,\omega}^2 + (2 +
\gamma_v\alpha_{k-1} + \frac{1}{\gamma_v\alpha_{k-1}})M^2L_{G,\omega\omega}^2 \right)\beta_k^2(1 + \gamma_{\omega}\alpha_{k-1})E[||\omega_{k-1} - \omega_{\lambda_{k-1}}||^2]\\
&\overset{(ii)}{+}\left((1 + \frac{1}{\gamma_v\alpha_{k-1}})L_{F,\omega}^2 + (2 +
\gamma_v\alpha_{k-1} + \frac{1}{\gamma_v\alpha_{k-1}})M^2L_{G,\omega\omega}^2 \right)\beta_k^2\frac{C_{\omega}^2\gamma_v\alpha_{k-1}}{\gamma_{\omega}}(1 + \gamma_{\omega}\alpha_{k-1})E[||d_{k-1}||^2]\\
&\overset{(iii)}{+} C_v^2(1 + \gamma_v\alpha_{k-1})^2(1 + \frac{1}{\gamma_v\alpha_{k-1}})(1 - \beta_k\mu_G)^2 \gamma_v\alpha_{k-1}^2E[||d_{k-1}||^2]\\
\end{split}
\end{equation*}
Notice that $(1 + \gamma_{\omega}\alpha_{k-1})$ is O(1), so the term $(ii)$ is O($\alpha_k^2$), while $(iii)$ is O($\alpha_k$), so we can omit the term $(ii)$. Then use the fact that $(1 - \beta_k\mu_g)^2 < (1 - \beta_k\mu_G) < 1$, we have:
\begin{equation*}
\begin{split}
E[||v_{k} - v_{\lambda_{k}}||^2] \le& (1 + \gamma_v\alpha_{k-1})^3(1 - \beta_k\mu_G)E[||v_{k-1} - v_{\lambda_{k-1}}||^2] + \beta_k^2(1 + M^2)\sigma^2\\
&+ \left((1 + \frac{1}{\gamma_v\alpha_{k-1}})L_{F,\omega}^2 + (2 +
\gamma_v\alpha_{k-1} + \frac{1}{\gamma_v\alpha_{k-1}})M^2L_{G,\omega\omega}^2 \right)\beta_k^2(1 + \gamma_{\omega}\alpha_{k-1})E[||\omega_{k-1} - \omega_{\lambda_{k-1}}||^2]\\
&+ C_v^2(1 + \gamma_v\alpha_{k-1})^2(1 + \frac{1}{\gamma_v\alpha_{k-1}})\alpha_{k-1}^2E[||d_{k-1}||^2]\\
\end{split}
\end{equation*}
Finally suppose we have $\tau_k = c_{\tau}\alpha_{k-1}$, $\beta_k=c_{\beta}\alpha_{k-1}$ and $\eta_k=c_{\eta}\alpha_{k-1}$, and omit the higher order terms, we get:
\begin{equation*}
\begin{split}
E[||v_{k} - v_{\lambda_{k}}||^2] \le& (1 + \gamma_v\alpha_{k-1})^3(1 - \beta_k\mu_G)E[||v_{k-1} - v_{\lambda_{k-1}}||^2] + c_{\beta}^2\alpha_{k-1}^2(1 + M^2)\sigma^2\\
&+ \left(L_{F,\omega}^2 +M^2L_{G,\omega\omega}^2 \right)c_{\beta}^2\alpha_{k-1}/\gamma_vE[||\omega_{k-1} - \omega_{\lambda_{k-1}}||^2] + C_v^2\alpha_{k-1}/\gamma_vE[||d_{k-1}||^2]\\
\end{split}
\end{equation*}
\end{proof}

\begin{lemma}
With Assumption~\ref{upper_assumption},~\ref{lower_assumption} and~\ref{noise_assumption_append} hold, and $0<\tau_k < min(\frac{\mu_G}{L_{G,\omega}^2},\frac{1}{\mu_G})$, we have:
% \begin{equation*}
% \begin{split}
% &E[||\omega_{k} - \omega_{\lambda_{k}}||^2]\\
% \le& (1 - \mu_G\tau_k)(1 + \gamma_\omega\alpha_{k-1})E[||\omega_{k-1} - \omega_{\lambda_{k-1}}||^2] + C_{\omega}^2(1 - \mu_G\tau_k)(1 + \frac{1}{\gamma_\omega\alpha_{k-1}}) \alpha_{k-1}^2E[||d_{k-1}||^2] + \alpha_{k-1}^2c_{\tau}^2\sigma^2\\
% \end{split}
% \end{equation*}
\begin{equation*}
\begin{split}
E[||\omega_{k} - \omega_{\lambda_{k}}||^2]
\le& (1 - \mu_G\tau_k)(1 + \gamma_\omega\alpha_{k-1})E[||\omega_{k-1} - \omega_{\lambda_{k-1}}||^2] +  C_{\omega, d}\alpha_{k-1}E[||d_{k-1}||^2] + C_{\omega, n}\alpha_{k-1}^2\sigma^2\\
\end{split}
\end{equation*}
Where $C_{\omega,d} = C_{\omega}^2/\gamma_\omega$, $C_{\omega,n} = c_{\tau}^2$, $\gamma_\omega$ and $c_{\tau}$ are some constants.
\label{lemma:16}
\end{lemma}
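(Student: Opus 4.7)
The plan is to exploit the fact that the inner update $\omega_k = \omega_{k-1} - \tau_k \partial_\omega G(\lambda_k, \omega_{k-1}; \xi_{k,1})$ is a stochastic gradient descent step on $G(\lambda_k, \cdot)$, which is $\mu_G$-strongly convex with minimizer $\omega_{\lambda_k}$. So first I would bound $E[\|\omega_k - \omega_{\lambda_k}\|^2]$ in terms of $E[\|\omega_{k-1} - \omega_{\lambda_k}\|^2]$ using a standard one-step contraction, and only afterward transform the reference point from $\omega_{\lambda_k}$ to $\omega_{\lambda_{k-1}}$ using the Lipschitzness of $\lambda \mapsto \omega_\lambda$ from Proposition~\ref{prop:5}. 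The factor $(1+\gamma_\omega \alpha_{k-1})$ will arise from exactly this reference-point swap via the relaxed triangle inequality.

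Concretely, the first step is to separate mean and variance of the stochastic gradient (using Proposition~A.2 with $\|\partial_\omega G(\lambda_k,\omega_{k-1};\xi) - \partial_\omega G(\lambda_k,\omega_{k-1})\|$ having variance $\sigma^2$) so that
\[
E[\|\omega_k - \omega_{\lambda_k}\|^2] \le E[\|\omega_{k-1} - \tau_k \partial_\omega G(\lambda_k, \omega_{k-1}) - \omega_{\lambda_k}\|^2] + \tau_k^2 \sigma^2.
\]
The second step is the deterministic contraction. Expanding the square and using $\partial_\omega G(\lambda_k,\omega_{\lambda_k}) = 0$, together with $\mu_G$-strong convexity (giving $\langle \partial_\omega G(\lambda_k,\omega_{k-1}), \omega_{k-1} - \omega_{\lambda_k}\rangle \ge \mu_G \|\omega_{k-1} - \omega_{\lambda_k}\|^2$) and $L_{G,\omega}$-Lipschitzness of $\partial_\omega G(\lambda_k,\cdot)$ (giving $\|\partial_\omega G(\lambda_k,\omega_{k-1})\| \le L_{G,\omega}\|\omega_{k-1} - \omega_{\lambda_k}\|$), I obtain the factor $(1 - 2\mu_G \tau_k + \tau_k^2 L_{G,\omega}^2)$. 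The hypothesis $\tau_k \le \mu_G/L_{G,\omega}^2$ then upgrades this to $(1-\mu_G\tau_k)$, so
\[
E[\|\omega_k - \omega_{\lambda_k}\|^2] \le (1-\mu_G\tau_k) E[\|\omega_{k-1} - \omega_{\lambda_k}\|^2] + \tau_k^2 \sigma^2.
\]

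The third step is the reference-point swap from $\omega_{\lambda_k}$ to $\omega_{\lambda_{k-1}}$. I apply Proposition~A.1 with parameter $\gamma_\omega \alpha_{k-1}$ and then Proposition~\ref{prop:5} (Lipschitzness of $\omega_\lambda$) together with $\lambda_k - \lambda_{k-1} = -\alpha_{k-1} d_{k-1}$:
\[
E[\|\omega_{k-1} - \omega_{\lambda_k}\|^2] \le (1+\gamma_\omega\alpha_{k-1}) E[\|\omega_{k-1} - \omega_{\lambda_{k-1}}\|^2] + \frac{C_\omega^2 \alpha_{k-1}}{\gamma_\omega}(1+\gamma_\omega\alpha_{k-1}) E[\|d_{k-1}\|^2].
\]
Substituting this into the previous bound, setting $\tau_k = c_\tau \alpha_{k-1}$, absorbing $(1+\gamma_\omega\alpha_{k-1}) = O(1)$ and $(1-\mu_G\tau_k) \le 1$ into the coefficients of the $d_{k-1}$ and noise terms (while retaining the full $(1-\mu_G\tau_k)(1+\gamma_\omega\alpha_{k-1})$ on the leading error term, exactly as stated), yields the claimed inequality with $C_{\omega,d} = C_\omega^2/\gamma_\omega$ and $C_{\omega,n} = c_\tau^2$.

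The only mildly delicate point, and the main thing to verify carefully, is that the contraction should be done \emph{with respect to $\omega_{\lambda_k}$} rather than $\omega_{\lambda_{k-1}}$ — since the update uses $\partial_\omega G(\lambda_k, \cdot)$, only that choice of center lets us invoke $\partial_\omega G(\lambda_k, \omega_{\lambda_k}) = 0$ cleanly. Swapping the order (swap first, then contract) would force us to control an extra bias term $\partial_\omega G(\lambda_k, \omega_{\lambda_{k-1}}) \ne 0$ that would have to be absorbed separately; doing the steps in the order above avoids that and makes the algebra essentially routine.
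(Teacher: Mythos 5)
Your proposal is correct and follows essentially the same route as the paper's proof: a one-step SGD contraction toward $\omega_{\lambda_k}$ (using strong convexity, the Lipschitz gradient, and the stepsize condition to obtain the $(1-\mu_G\tau_k)$ factor after separating mean and variance), followed by the reference-point swap from $\omega_{\lambda_k}$ to $\omega_{\lambda_{k-1}}$ via the relaxed triangle inequality and the Lipschitzness of $\lambda\mapsto\omega_\lambda$, which is exactly the paper's Proposition on $\|\omega_{k}-\omega_{\lambda_k}\|^2$. The only cosmetic difference is that you separate mean and variance before expanding the square whereas the paper expands first and uses unbiasedness in the cross term; your remark about why the contraction must be centered at $\omega_{\lambda_k}$ is also consistent with how the paper orders these steps.
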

\begin{proof}
Firstly by the update rule of $\omega_k$, we have:
\begin{equation}
\label{eq:eq38}
\begin{split}
&E[||\omega_k - \omega_{\lambda_{k}}||^2] = E||\omega_{k} -  \tau_k\partial_{\omega}G(\lambda_k, \omega_k; \xi_{k,1}) - \omega_{\lambda_k}||^2]\\
=& E[||\omega_k - \omega_{\lambda_{k}}||^2] + \tau_k^2E||\partial_{\omega}G(\lambda_k, \omega_k; \xi_{k,1})||^2] -2\tau_k E[\langle \omega_k - \omega_{\lambda_{k}}, E[\partial_{\omega}G(\lambda_k, \omega_k; \xi_{k,1})]\rangle] \\
=& E[||\omega_k - \omega_{\lambda_{k}}||^2] + \tau_k^2E||\partial_{\omega}G(\lambda_k, \omega_k; \xi_{k,1})||^2] -2\tau_k E[\langle \omega_k - \omega_{\lambda_{k}}, \partial_{\omega}G(\lambda_k, \omega_k)\rangle] \\
\overset{(i)}{\le}& (1 - 2\mu_G\tau_k)E[||\omega_k - \omega_{\lambda_{k}}||^2] + \tau_k^2E[||\partial_{\omega}G(\lambda_k, \omega_k; \xi_{k,1})||^2] \\
\overset{(ii)}{\le}& (1 - 2\mu_G\tau_k)E[||\omega_k - \omega_{\lambda_{k}}||^2] + \tau_k^2E[||\partial_{\omega}G(\lambda_k, \omega_k) - \partial_{\omega}G(\lambda_k, \omega_{\lambda_k})||^2] + \tau_k^2\sigma^2\\
\overset{(iii)}{\le}& (1 - 2\mu_G\tau_k + \tau_k^2 L_{G,\omega}^2)E[||\omega_k - \omega_{\lambda_{k}}||^2] + \tau_k^2\sigma^2
\overset{(iv)}{\le} (1 - \mu_G\tau_k)E[||\omega_k - \omega_{\lambda_{k}}||^2] + \tau_k^2\sigma^2
\end{split}
\end{equation}
Where (i) uses the strong convexity of $G$ and the inequality for strongly convex function: $
\langle \omega_{k} - \omega_{\lambda_{k}}, \partial_{\omega}G(\lambda_k, \omega_k)\rangle \ge \mu_G||\omega_{k} - \omega_{\lambda_{k}}||^2$. $(ii)$ uses separating mean and variance and the definition of $\omega_{\lambda_k}$; $(iii)$ uses Assumption~\ref{lower_assumption}.b; (iv) uses the condition that $\tau_k < \mu_G/L_{G,\omega}^2$. Finally, we combine Eq.~(\ref{eq:eq38}) with Proposition~\ref{prop:6}, we get:
\begin{equation*}
\begin{split}
E[||\omega_{k} - \omega_{\lambda_{k}}||^2]
\le& (1 - \mu_G\tau_k)(1 + \gamma_\omega\alpha_{k-1})E[||\omega_{k-1} - \omega_{\lambda_{k-1}}||^2] +  \frac{C_{\omega}^2}{\gamma_\omega} \alpha_{k-1}E[||d_{k-1}||^2] + c_{\tau}^2\alpha_{k-1}^2\sigma^2\\
\end{split}
\end{equation*}
\end{proof}
\noindent Where we omit the term $1 + \gamma_{\omega}\alpha_{k-1}$ for $E[||d_{k-1}||^2]$, and use the fact that $\tau_k=c_{\tau}\alpha_{k-1}$. This completes the proof.

\subsection{Proof for the convergence Theorem 5}
\begin{theorem} (Theorem 5 in the main text)
With Assumption~\ref{upper_assumption},~\ref{lower_assumption} and~\ref{noise_assumption_append} hold, and suppose $\beta_k = c_{\beta}\alpha_{k-1}$, $\tau_k =c_{\tau}\alpha_{k-1}$, $\eta_k = c_{\eta}\alpha_{k-1}$, and $\alpha_k = \delta/\sqrt{k+1}$, then we have:
\begin{equation*}
\begin{split}
\frac{1}{K}\sum_{k=0}^{K-1}||\nabla f(\lambda_k)||^2
\le& \frac{2\Phi_{0}}{\delta\sqrt{K}} + \frac{2\delta\bar{C}\sigma^2\ln(K+1)}{\sqrt{K}}\\
\end{split}
\end{equation*}	
where $\gamma_v = 32\Gamma_1^2C_v^2$, $\gamma_{\omega} = 8\Gamma_2^2C_w^2$,  $c_{\tau} =(2\gamma_v\Gamma_2^2\gamma_{\omega} + 2\gamma_v\Gamma_2^2 + 4\Gamma_1^2c_{\beta}^2\left(L_{F,\omega}^2 + M^2L_{G,\omega\omega}^2 \right))/(\gamma_v\Gamma_2^2\mu_G)$,$c_{\eta} = 4\Gamma_2^2/L_f$, $c_{\beta}=2(3\gamma_v+1)/\mu_G$, $\bar{C} = C_{d,n}/c_{\eta} + 4\Gamma_1^2C_{v,n} + \Gamma_2^2C_{\omega,n}$, $\delta$ is a constant such that it satisfies Eq.~(\ref{eq:eq43}).
\end{theorem}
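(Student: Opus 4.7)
The plan is to combine the one-iteration descent Lemma 4 with the three error-recursion lemmas (Lemmas 14--16) via a carefully weighted Lyapunov (potential) function. Denote the three error quantities as $A_k = E[\|\omega_k-\omega_{\lambda_k}\|^2]$, $B_k = E[\|v_k - v_{\lambda_k}\|^2]$, $C_k = E[\|\nabla f_k - d_k\|^2]$, and define
\begin{equation*}
\Phi_k \;=\; f(\lambda_k) + D_1 A_k + D_2 B_k + D_3 C_k,
\end{equation*}
where $D_1, D_2, D_3$ are positive constants to be fixed. The first step is to add Lemma 4 and the $D_i$-weighted recursions of Lemmas 14--16, collecting terms proportional to $A_{k-1}$, $B_{k-1}$, $C_{k-1}$, $E[\|d_{k-1}\|^2]$, and the noise $\sigma^2 \alpha_{k-1}^2$. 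Each of the three error recursions has a contraction factor ($1-\mu_G\tau_k$, $1-\beta_k\mu_G$, and $1-\eta_k$ respectively), while Lemma 4 contributes the term $-\tfrac{\alpha_k}{2}\|\nabla f(\lambda_k)\|^2$ together with positive multiples of $A_k$, $B_k$, $C_k$ of sizes $\Gamma_2^2\alpha_k$, $4\Gamma_1^2\alpha_k$, and $\alpha_k$.

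The key step is to choose $D_1, D_2, D_3$ so that the aggregated coefficients of $A_{k-1}$, $B_{k-1}$, and $C_{k-1}$ in $\Phi_k - \Phi_{k-1}$ are non-positive. Concretely, I would choose $D_3$ large enough so that $D_3 \eta_k$ dominates the $\alpha_k$ coefficient coming from Lemma 4 plus the cross-contributions $D_1 C_{\omega,d}$, $D_2 C_{v,d}$ appearing in the $C_{k-1}$ term; then $D_2$ large enough so that $D_2 \beta_k \mu_G$ dominates $4\Gamma_1^2\alpha_k + D_3 C_{d,v}\alpha_{k-1}^2$; and finally $D_1$ so that $D_1 \mu_G\tau_k$ dominates $\Gamma_2^2\alpha_k + D_2 C_{v,\omega}\alpha_{k-1} + D_3 C_{d,\omega}\alpha_{k-1}^2$. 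With the prescribed proportional schedule $\beta_k = c_\beta\alpha_{k-1}$, $\tau_k = c_\tau \alpha_{k-1}$, $\eta_k = c_\eta \alpha_{k-1}$ these reduce to constant inequalities on $c_\beta, c_\tau, c_\eta$; the specific constants listed in the theorem statement (e.g.\ $c_\eta = 4\Gamma_2^2/L_f$) are precisely what makes these inequalities hold. I would also need $\delta$ small enough that $(1+\gamma_v\alpha_{k-1})^3 \le 1 + 3\gamma_v\alpha_{k-1} + O(\alpha_{k-1}^2)$ cleanly and that $\alpha_k L_f \le 1$, giving the condition labeled Eq.~(43) in the statement. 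The residual coefficient on $E[\|d_{k-1}\|^2]$ must also be made non-positive; for this, the $-\tfrac{\alpha_k}{2}(1-\alpha_k L_f)E[\|d_k\|^2]$ term from Lemma 4 absorbs the cross-contributions $D_1 C_{\omega,d}\alpha_{k-1}$, $D_2 C_{v,d}\alpha_{k-1}$, $D_3 C_{d,d}\alpha_{k-1}^2$, which again is a constraint relating $\delta$ to the $D_i$.

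Once the coefficient matching is achieved, the one-step Lyapunov inequality simplifies to
\begin{equation*}
\Phi_{k+1} \;\le\; \Phi_k - \tfrac{\alpha_k}{2}\|\nabla f(\lambda_k)\|^2 + \bar C \sigma^2 \alpha_k^2,
\end{equation*}
where $\bar C = C_{d,n}/c_\eta + 4\Gamma_1^2 C_{v,n} + \Gamma_2^2 C_{\omega,n}$ collects the noise contributions after weighting (note that $D_3$ scales as $1/c_\eta$ because it is introduced to absorb $C_k$ via the $\eta_k$ contraction). Summing this from $k=0$ to $K-1$, using $\Phi_K \ge \underline f := \inf f$ (taken into $\Phi_0$), dividing by $\sum_{k=0}^{K-1}\alpha_k$ and plugging in $\alpha_k = \delta/\sqrt{k+1}$ gives
\begin{equation*}
\frac{1}{K}\sum_{k=0}^{K-1}\|\nabla f(\lambda_k)\|^2 \;\le\; \frac{2(\Phi_0 - \underline f)}{\delta\sqrt{K}} + \frac{2\delta \bar C\sigma^2 \ln(K+1)}{\sqrt{K}},
\end{equation*}
after using $\sum_{k=0}^{K-1} k^{-1/2} \ge \sqrt{K}$ and $\sum_{k=0}^{K-1} k^{-1} \le 1 + \ln K$.

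The principal obstacle is the coefficient-matching step: the three error recursions are mutually coupled (an $A$-error contributes to $C$ via Lemma 14; a $d$-magnitude contributes to both $A$ and $B$ via Lemmas 15--16; etc.), and the weights $D_1, D_2, D_3$ together with $c_\tau, c_\beta, c_\eta, \delta$ must be chosen in a consistent order so that every cross-term is strictly dominated by a contraction term from the same quantity. I would therefore fix $D_1, D_2, D_3$ first (in the order $D_3, D_2, D_1$ as above) treating the schedule constants as free, then read off the constraints this places on $c_\eta, c_\beta, c_\tau$, and finally choose $\delta$ small enough to kill all remaining $O(\alpha_{k-1}^2)$ perturbations and satisfy the $\alpha_k L_f \le 1$ requirement. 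Everything else in the proof is routine algebra and the standard $\sum k^{-1/2}$ estimate.
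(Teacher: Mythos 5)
Your proposal is correct and follows essentially the same route as the paper's own proof: the paper uses exactly the potential function you describe, with the weights $D_1=\Gamma_2^2$, $D_2=4\Gamma_1^2$, $D_3=1/c_\eta$ that your ordering ($D_3$ first, then $D_2$, then $D_1$) would produce, cancels the $E[\|d_k\|^2]$, $A_k$, $B_k$ cross-terms via the same constant inequalities you outline (Eqs.~(40)--(42) in the appendix, with $\delta$ constrained by Eq.~(43)), and finishes with the identical telescoping and $\sum_k k^{-1/2}\ge\sqrt{K}$, $\sum_k k^{-1}\le 1+\ln K$ estimates. No substantive differences.
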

\begin{proof}
We firstly denote the following potential function:
\[\Phi_k = f({\lambda}_k) + \Gamma_2^2A_k + 4\Gamma_1^2B_k + C_k/c_{\eta}\]
where $A_k = ||\omega_{k} - \omega_{\lambda_k}||^2$, $B_k = ||v_{k} - v_{\lambda_{k}}||^2$, $C_k = ||\nabla f_k -  d_k||^2$. Next we bound $\Phi_{k+1} - \Phi_k$, which is composed of four terms, we get a bound for each of them based on Lemma~\ref{lemma:13}-\ref{lemma:16}.
By Lemma~\ref{lemma:13}, we have:
\begin{equation}
\label{eq:eq29}
\begin{split}
E[f(\lambda_{k+1}) - f(\lambda_k)] \le& -\frac{\alpha_k}{2}||\nabla f(\lambda_k)||^2 + \alpha_k\Gamma_2^2E[A_k] + 4\alpha_k\Gamma_1^2E[B_{k}] + \alpha_k E[C_k] - \frac{\alpha_k}{2}(1 - \alpha L_f) E[||d_k||^2]\\
\end{split}
\end{equation}
By Lemma~\ref{lemma:14}, we have:
\begin{equation}
\label{eq:eq30}
\begin{split}
&E[C_{k+1} - C_{k}] \le -c_{\eta}\alpha_{k}E[C_{k}] + C_{d,d}\alpha_{k}^2E[||d_{k}||^2] + C_{d,\omega}\alpha_{k}^2E[A_k] + C_{d,v}\alpha_{k}^2E[B_{k}] + C_{d,n}\alpha_{k}^2\sigma^2
\end{split}
\end{equation}
By Lemma~\ref{lemma:15}, we have:
\begin{equation}
\label{eq:eq31}
\begin{split}
E[B_{k+1} - B_{k}] \le& ((1 + \gamma_v\alpha_{k})^3(1 - \beta_{k+1}\mu_G) - 1)E[B_{k}] + C_{v,\omega}\alpha_{k}E[A_k] + C_{v,d}\alpha_{k}E[||d_{k}||^2] + C_{v,n}\alpha_{k}^2\sigma^2\\
\end{split}
\end{equation}
By Lemma~\ref{lemma:16}, we have:
\begin{equation}
\label{eq:eq32}
\begin{split}
&E[A_{k+1}- A_k] \le ((1 - \mu_G\tau_{k+1})(1 + \gamma_\omega\alpha_{k}) - 1)E[A_k] + C_{\omega,d} \alpha_{k}E[||d_{k}||^2] + C_{\omega,n}\alpha_{k}^2\sigma^2
\end{split}
\end{equation}
Sum inequalities Eq.~(\ref{eq:eq29})-~(\ref{eq:eq32}), we have:
\begin{equation}
\label{eq:eq39}
\begin{split}
\Phi_{k+1} - \Phi_k
\le& -\frac{\alpha_k}{2}||\nabla f(\lambda_k)||^2 + \left(C_{d,n}/c_{\eta} + 4\Gamma_1^2C_{v,n} + \Gamma_2^2C_{\omega,n}\right)\alpha_k^2\sigma^2
\end{split}
\end{equation}
The terms related to $E[||d_k||^2]$, $E[A_k]$ and $E[B_{k}]$ are eliminated by using the facts shown in Eq.~(\ref{eq:eq40})-~(\ref{eq:eq42}). Firstly, for terms related to $E[||d_k||^2]$, we have:
\begin{equation}
\label{eq:eq40}
\begin{split}
&\frac{2\Gamma_2^2}{c_{\eta}}\alpha_k^2 + \frac{4\Gamma_1^2C_v^2}{\gamma_v}\alpha_k + \frac{\Gamma_2^2C_{\omega}^2}{\gamma_\omega}\alpha_k < \frac{\alpha_k}{2}(1 - \alpha_k L_f)
\end{split}
\end{equation}
Next for the term related to $E[B_{k}]$, we have:
\begin{equation}
\label{eq:eq41}
\begin{split}
&4\Gamma_1^2((1 + \gamma_v\alpha_k)^3(1 - \beta_{k+1}\mu_G) -1) + 32c_{\beta}^2\Gamma_1^2\Gamma_3^2\alpha_k^2/c_{\eta} + 4\Gamma_1^2\alpha_k < 0 
\end{split}
\end{equation}
Finally, for the term involved with $E[A_k]$, we have:
\begin{equation}
\label{eq:eq42}
\begin{split}
&\Gamma_2^2((1 + \gamma_\omega\alpha_k)(1 - \mu_G \tau_{k+1}) - 1) + 2(c_{\tau}^2\Gamma_2^2\Gamma_3^2 + 4c_{\beta}^2\Gamma_1^2\Gamma_4^2)\alpha_k^2/c_{\eta} + 4\Gamma_1^2c_{\beta}^2\left(L_{F,\omega}^2 + M^2L_{G,\omega\omega}^2 \right)\alpha_k/\gamma_v + \Gamma_2^2\alpha_k < 0
\end{split}
\end{equation}
We can verify the correctness of the inequalities Eq.~(\ref{eq:eq40})-~(\ref{eq:eq42}) by taking the corresponding values of $\gamma_v = 32\Gamma_1^2C_v^2$, $\gamma_{\omega} = 8\Gamma_2^2C_w^2$, $c_{\eta} = 4\Gamma_2^2/L_f$, $c_{\beta}=2(3\gamma_v+1)/\mu_G$, $c_{\tau} =(2\gamma_v\Gamma_2^2\gamma_{\omega} + 2\gamma_v\Gamma_2^2 + 4\Gamma_1^2c_{\beta}^2\left(L_{F,\omega}^2 + M^2L_{G,\omega\omega}^2 \right))/(\gamma_v\Gamma_2^2\mu_G)$ as stated in the Theorem and let $\delta$ satisfy the following condition:
\begin{equation}
\label{eq:eq43}
\begin{split}
\delta < min\bigg(\frac{1}{4L_f}, \frac{2\Gamma_2^2\mu_G c_{\beta}}{12\Gamma_2^2\gamma_v^2 + 8\Gamma_3^2L_fc_{\beta}^2}, \frac{c_{\eta}\Gamma_2^2(1 + \gamma_{\omega})}{2(c_{\tau}^2\Gamma_2^2\Gamma_3^2 + 4c_{\beta}^2\Gamma_1^2\Gamma_4^2) + \Gamma_2^2c_{\eta}}, \frac{1}{c_{\eta}}, \frac{1}{2(3\gamma_v+1)}, \frac{1}{c_{\tau}\mu_G}, \frac{\mu_G}{c_{\tau}L_{G,\omega}^2}\bigg)
\end{split}
\end{equation}
Now we summarize both sides of Eq.~(\ref{eq:eq39}), and rearrange the terms:
\begin{equation*}
\begin{split}
\sum_{k=0}^{K-1} \frac{\alpha_k}{2}||\nabla f(\lambda_k)||^2 \le& (\Phi_{0} - \Phi_K) + \bar{C}\sum_{k=0}^{K-1}\alpha_k^2\sigma^2\\
\end{split}
\end{equation*}
We denote $\bar{C} = C_{d,n}/c_{\eta} + 4\Gamma_1^2C_{v,n} + \Gamma_2^2C_{\omega,n}$. Suppose we take $\alpha_k = \frac{\delta}{\sqrt{k+1}}$ where $\delta$ is chosen \emph{s.t.} Eq.~(\ref{eq:eq43}) is satisfied, we get:
\begin{equation*}
\begin{split}
\frac{\delta}{\sqrt{K}}\sum_{i=0}^{K-1}||\nabla f(\lambda_k)||^2 \le \sum_{i=0}^{K-1}\frac{\delta}{\sqrt{k}}||\nabla f(\lambda_k)||^2
\le& 2\Phi_{0} + 2\bar{C}\delta^2\sigma^2\ln(K+1)\\
\end{split}
\end{equation*}
Divide both sides by $K$, we have:
\begin{equation*}
\begin{split}
\frac{1}{K}\sum_{i=0}^{K-1}||\nabla f(\lambda_k)||^2
\le& \frac{2\Phi_{0}}{\delta\sqrt{K}} + \frac{2\delta\bar{C}\sigma^2\ln(K+1)}{\sqrt{K}}\\
\end{split}
\end{equation*}	
\end{proof}

\section{More Details about the Hyper Data-cleaning Experiment}
\label{Appendix:E}
In the Hyper Data-cleaning task, we test over MNIST~\cite{lecun2010mnist}, Fashion-MNIST~\cite{xiao2017fashion} and QMNIST~\cite{yadav2019cold}. The datasets are created as described in the main text. 
In experiments, we use batch-size 256 and run 2000 hyper-iterations. Other hyper-parameters are chosen as follows: we pick $\delta = 1000$, $c_{\tau} = 0.001$, $c_{\beta} = 10^{-4}$ and $c_{\eta} = 9 * 10^{-4}$ for our FSLA. For BP method, we pick $\delta = 1000$, $c_{\tau} = 0.001$ and $c_{\eta} = 9 * 10^{-4}$; For NS method, we choose $\delta = 1000$, $c_{\tau} = 0.001$, $c_{\eta} = 9 * 10^{-4}$ and $\beta$ as 0.1; For CG method, we choose $\delta = 1000$, $c_{\tau} = 0.001$, and $c_{\eta} = 9 * 10^{-4}$. Next, we present some ablation studies of the hyper-parameters for FSLA, and the results are shown in Figure~\ref{fig:aba1} and Figure~\ref{fig:aba2}. In each plot, we vary one hyper-parameter and fix other hyper-parameters whose values are taken as just mentioned. We do not include hyper-parameters that diverge. For example, when the batch-size is 32 or when $c_{\beta}$ is $5\times10^{-4}$, the algorithm does not converge.

\begin{figure}[ht]
\begin{center}
\includegraphics[width=0.33\columnwidth]{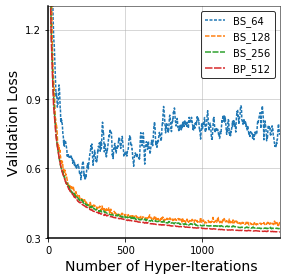}
\includegraphics[width=0.33\columnwidth]{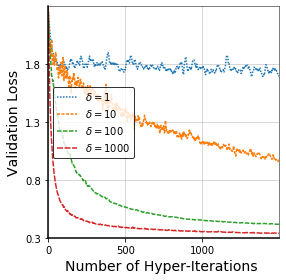}
\end{center}
\caption{Ablation Study for Batch Size (Left) and outer learning rate coefficient $\delta$ (Right).}
\label{fig:aba1}
\end{figure}

\begin{figure}[ht]
\begin{center}
\includegraphics[width=0.33\columnwidth]{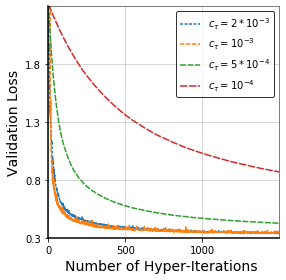}
\includegraphics[width=0.33\columnwidth]{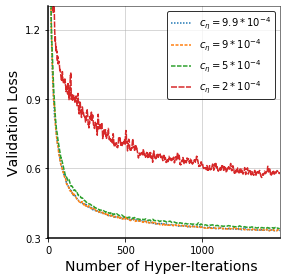}
\includegraphics[width=0.33\columnwidth]{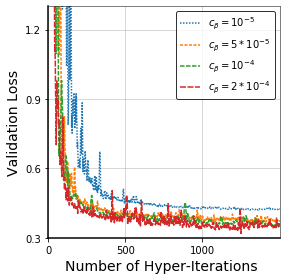}
\end{center}
\caption{Ablation Study for $c_{\tau}$ (Left) and outer $c_{\eta}$ (Middle)a and  $c_{\beta}$ (Right).}
\label{fig:aba2}
\end{figure}

\end{document}